\documentclass{article}

% if you need to pass options to natbib, use, e.g.:
%     \PassOptionsToPackage{numbers, compress}{natbib}
% before loading neurips_2024

% ready for submission
% \usepackage{neurips_2024}

% to compile a preprint version, e.g., for submission to arXiv, add add the
% [preprint] option:
\usepackage[preprint]{neurips_2024}

% to compile a camera-ready version, add the [final] option, e.g.:
% \usepackage[final]{neurips_2024}

% to avoid loading the natbib package, add option nonatbib:
% \usepackage[nonatbib]{neurips_2024}

\usepackage[utf8]{inputenc} % allow utf-8 input
\usepackage[T1]{fontenc}    % use 8-bit T1 fonts
\usepackage{hyperref}       % hyperlinks
\usepackage{url}            % simple URL typesetting
\usepackage{booktabs}       % professional-quality tables
\usepackage{amsfonts}       % blackboard math symbols
\usepackage{nicefrac}       % compact symbols for 1/2, etc.
\usepackage{microtype}      % microtypography
\usepackage{xcolor}         % colors

\usepackage{xspace}
\usepackage{amsthm,amsmath,amssymb}
\usepackage[mathscr]{euscript}
\usepackage[pdftex]{graphicx}
\usepackage{mathrsfs}
\usepackage{tabularx}
\usepackage{xspace}
\usepackage{wrapfig}
\usepackage{xfrac}
\usepackage{enumitem}
\usepackage{multirow}
\usepackage{longtable}
\usepackage{utfsym}
\usepackage{makecell}
\usepackage{bbding}
\usepackage{multirow}
\usepackage{url}

\usepackage{cleveref}
\crefname{section}{Sec.}{Secs.}
\Crefname{section}{Section}{Sections}

\crefname{figure}{Fig.}{Figs.}
\Crefname{figure}{Figure}{Figures}

\crefname{table}{Tab.}{Tabs.}
\Crefname{table}{Table}{Tables}

\crefname{equation}{Eq.}{Eqs.}
\Crefname{equation}{Equation}{Equations}

\theoremstyle{plain}
\newtheorem{theorem}{Theorem}[section]
\newtheorem{proposition}[theorem]{Proposition}

\theoremstyle{definition}
\newtheorem{definition}[theorem]{Definition}

\theoremstyle{remark}

\usepackage{subfigure}
\usepackage{adjustbox}
\usepackage{pifont}
\usepackage{bbding}
\usepackage{fontawesome} 
\usepackage{enumitem}
\usepackage{array}
\usepackage{wrapfig}
\usepackage{rotating}
\usepackage{tabularx}
\usepackage{soul,color,xcolor}
\newcommand{\pyq}[1]{{\textcolor{black}{#1}}}

\title{MetaLA: Unified Optimal Linear Approximation to Softmax Attention Map}

\author{Yuhong Chou$^{1,2}$\thanks{Equal contribution. yuhong.chou@connect.polyu.hk; man.yao@ia.ac.cn}, Man Yao$^{2*}$, Kexin Wang$^{2}$, Yuqi Pan$^{2}$, Ruijie Zhu$^{3}$, \\ \textbf{Yiran Zhong}$^{4}$, \textbf{Yu Qiao}$^{4}$, \textbf{Jibin Wu}$^{1}$, \textbf{Bo Xu}$^{2}$, \textbf{Guoqi Li}$^{2}$\thanks{Corresponding author, guoqi.li@ia.ac.cn}\\ 
~\\
$^{1}$The Hong Kong Polytechnic University\\
$^{2}$Institute of Automation, Chinese Academy of Sciences\\
$^{3}$UC Santa Cruz\\
$^{4}$Shanghai AI Lab\\
}

\begin{document}

\maketitle

\begin{abstract}
Various linear complexity models, such as Linear Transformer (LinFormer), State Space Model (SSM), and Linear RNN (LinRNN), have been proposed to replace the conventional softmax attention in Transformer structures. However, the optimal design of these linear models is still an open question. In this work, we attempt to answer this question by finding the best linear approximation to softmax attention from a theoretical perspective. We start by unifying existing linear complexity models as the linear attention form and then identify three conditions for the optimal linear attention design: \romannumeral1) Dynamic memory ability; \romannumeral2) Static approximation ability; \romannumeral3) Least parameter approximation. We find that none of the current linear models meet all three conditions, resulting in suboptimal performance. Instead, we propose Meta Linear Attention (MetaLA) as a solution that satisfies these conditions. Our experiments on Multi-Query Associative Recall (MQAR) task, language modeling, image classification, and Long-Range Arena (LRA) benchmark demonstrate that MetaLA is more effective than the existing linear models. Code: \url{https://github.com/BICLab/MetaLA}

\end{abstract}

\section{Introduction}

Transformer with softmax attention~\cite{transformer} benefits from efficient parallel training and exhibits impressive performance on deep learning applications~\cite{gpt3,llama,llama2,palm2,valle,llava}, but it suffers from the quadratic growth of computation cost to the input length~\cite{efficient}. Linear recurrent models, such as LinFormer~\cite{katharopoulos2020transformers}, SSM~\cite{s4}, and LinRNN~\cite{martin2018parallelizing, goldstein2024goldfinch}, are expected to achieve linear substitution of Transformer. The original intention of LinFormer is to replace softmax attention, which exploits the kernel approach to decompose softmax operation; typical work includes TransNormer~\cite{transnormer,Transnormerllm}, RetNet~\cite{RetNet}, GLA~\cite{gla}. On the other hand, SSMs, such as S4~\cite{s4} and Mamba~\cite{Mamba}, are models inspired by the classical state-space approach, which enjoys sub-quadratic training and inference like either a recurrence or convolution. In contrast, LinRNN is a revival of traditional RNNs, including RWKV-4~\cite{rwkv}, Griffin~\cite{Griffin}, LRU~\cite{lru}, etc., which solves the training difficulties of traditional RNNs due to nonlinear dependencies between hidden states. It is natural to think that they are different types of models, since these LinFormer/SSM/LinRNN models have different origins and forms.

This work breaks this perception and abstracts existing LinFormer/SSM/LinRNN models into a unified linear attention form, which has the following significance: \romannumeral1) Facilitates understanding the key designs of existing linear models. Through the unified form, we demonstrate that the main difference between LinFormer/SSM/LinRNN is the hidden state size, how to maintain the hidden state, and how to perform parameter mapping. \romannumeral2) Links LinFormer/SSM/LinRNN to softmax attention in terms of functionality. The recurrent inference complexity of softmax attention is $\mathcal{O}(n)$, which can also be regarded as the maintenance of a hidden state with infinite size. Linear models with $\mathcal{O}(1)$ inference complexity are hoping to achieve the same functionality as softmax attention using a fixed hidden state. Since we have unified LinFormer/SSM/LinRNN into linear attention in the form of Query, Key, and Value, we can understand and evaluate existing linear models from the view of ``Does the linear attention map have the function of softmax attention map?”.

To answer this question, we define the necessary conditions for achieving ``optimal linear approximation to softmax attention map". First, linear attention needs to satisfy \emph{dynamic memory} and \emph{static approximation} to realize the approximation. The former defines memory ability: linear attention with limited hidden states should be able to store the most important information and forget unimportant ones. The latter defines the modeling ability: a linear attention map should be able to approximate any softmax attention map. According to our theoretical analysis, Query and dynamic decay are necessary conditions for approximation. Thus, linear models such as TransNormer~\cite{Transnormerllm}, RetNet~\cite{RetNet}, RWKV-4~\cite{rwkv}, LRU~\cite{lru}, HGRN~\cite{HGRN}, H3~\cite{h3}, S5~\cite{s5}, cannot achieve approximation of the softmax attention functions. Second, the Key matrix is not required to achieve approximation, so Mamba~\cite{Mamba} and GLA~\cite{gla} are not optimal parametric approximations. 

% (MQAR~\cite{zoology}) (ImageNet~\cite{deng2009imagenet})  (LRA benchmark~\cite{lra})

We then propose the MetaLA module, which can satisfy the necessary conditions for optimal linear approximation to softmax attention. MetaLA makes three enhancements: \romannumeral1) Removes the unnecessary Key matrices; \romannumeral2) Employs self-augmentation to enhance the token's attention to itself, which avoids attention dilution~\cite{transnormer}; \romannumeral3) Exploits short convolutions to enhance local interactions. We then build a MetaLA Transformer based on MetaLA. Our experiments on associative recall, language modeling, long sequence modeling, and image classification show the effectiveness of MetaLA. Furthermore, we conduct ablation studies to validate the effectiveness of each proposed enhancement in MetaLA. Finally, we discuss two open questions: \romannumeral1) How to further improve linear attention based on the approximation theory introduced in this work? \romannumeral2) Does the approximation of linear attention to softmax attention imply that it has an upper limit on its capacity?

\vspace{-0.5mm}
\section{Background}\label{Sec2:relate works}
For notations in this work, we use bold upper-case letters for matrices (e.g., $\mathbf{Q}$, $\mathbf{K}$), bold lower-case letters for row vectors (e.g., $\mathbf{q}_{t}$, $\mathbf{k}_{t}$), and italic upper-case for learnable parameter matrices (e.g.,$\boldsymbol{W}_{Q}$). We generally use the same alphabet to show the rows of a matrix, e.g., $\mathbf{q}_{t}$ is the $t$-th row of $\mathbf{Q}$.

\textbf{Softmax Attention} first calculates an attention map $\operatorname{SoftAttMap}\left(\mathbf{Q},\mathbf{K}\right)$ through $\mathbf{Q}$ (Query), $\mathbf{K}$ (Key), and use the attention map to weight different tokens $\mathbf{V}$ (Value) later:
\begin{align}
    \mathbf{O} &= \operatorname{SoftAttMap}\left(\mathbf{Q},\mathbf{K}\right)\mathbf{V}=\operatorname{softmax}\left(\frac{\mathbf{Q}\mathbf{K}^{\top}}{\sqrt{{d}_{k}}}\odot\mathbf{M}\right)\mathbf{V} \quad \in\mathcal{R}^{n\times {{d}_{v}}},\label{eq:softmax}\\
    \mathbf{Q}, \mathbf{K} &= \mathbf{X}\boldsymbol{W}_{Q}, \mathbf{X}\boldsymbol{W}_{K} \quad \in\mathcal{R}^{n\times {{d}_{k}}}; \quad \mathbf{V} = \mathbf{X}\boldsymbol{W}_{V} \quad \in\mathcal{R}^{n\times {{d}_{v}}},
\end{align}
where $\boldsymbol{W}_{Q},\boldsymbol{W}_{K}\in \mathcal{R}^{d\times {{d}_{k}}},\boldsymbol{W}_{V}\in\mathcal{R}^{d\times {{d}_{v}}}$ are learnable matrices, $n,d,d_k,d_v$ are sequence length, model dimension, Key/Query and Value dimension, respectively. $\mathbf{X}\in\mathcal{R}^{n\times d}$ refers to the input. $\mathbf{M}\in\mathcal{R}^{n\times n}$ is a mask matrix in autoregressive tasks to prevent a token from attending to future tokens. The $t$-th row of $\operatorname{SoftAttMap}\left(\mathbf{Q},\mathbf{K}\right)$ is a probability distribution that represents the attention scores between token $\mathbf{v}_{t}$ to others. Softmax attention in \cref{eq:softmax} enables efficient parallel training, but suffers from $\mathcal{O}(n^2)$ time and memory complexity~\cite{katharopoulos2020transformers}. It uses the recurrent form during inference:
\begin{align}
\mathbf{o}_t &=\frac{\sum_{s=1}^t\operatorname{exp}(\mathbf{q}_t\mathbf{k}^{\top}_s)\mathbf{v}_s}{\sum_{s=1}^t\operatorname{exp}(\mathbf{q}_t\mathbf{k}_s^{\top})} \quad \in\mathcal{R}^{1\times {{d}_{v}}},\label{eq:vanilla_recurrent}\\
\mathbf{q}_t, \mathbf{k}_t &= \mathbf{x}_t\boldsymbol{W}_{Q}, \mathbf{x}_t\boldsymbol{W}_{K} \quad \in\mathcal{R}^{1\times {{d}_{k}}}; \quad  \mathbf{v}_t = \mathbf{x}_t\boldsymbol{W}_{V} \quad \in\mathcal{R}^{1\times {{d}_{v}}}.
\end{align}
At each time $t$, token mix is computed between query $\mathbf{q}_t$ and all the keys, values before $\mathbf{k}_s,\mathbf{v}_s (s\leq t$). This "KV cache" results in $\mathcal{O}(n)$ time and memory complexity per token during inference. 

\textbf{Linear Transformer (LinFormer)} is a substitute for softmax attention, which can be expressed as a linear dot-product of kernel feature maps \cite{katharopoulos2020transformers}:
\begin{align}
     \mathbf{o}_{t} = \frac{\sum_{s=1}^t F(\mathbf{q}_t,\mathbf{k}_s)\mathbf{v}_s}{\sum_{s=1}^t F(\mathbf{q}_t,\mathbf{k}_s)}, \quad  F(\mathbf{q}_t,\mathbf{k}_s) = \phi(\mathbf{q}_t)\phi^{\top}(\mathbf{k}_s),\label{eq:linear_att_rur}
\end{align}
where $\mathbf{q}_t,\mathbf{k}_t \in \mathcal{R}^{1\times {{d}_{k}}}$ and $\mathbf{v}_t \in \mathcal{R}^{1\times{{d}_{v}}}$ are query, key and value at position $t$, which are obtained in the same manner as softmax attention. $F(\cdot)$ is the kernel function usually constrained to be non-negative. $\phi(\cdot)$ is map function applied row-wise to $\mathbf{Q}$ and $\mathbf{K}$. By removing the nonlinear softmax operation, LinFormer enables inference with $\mathcal{O}(1)$ time and memory complexity per token. LinFormer can also be formulated in the following parallel form during training
\begin{align}
    % \mathbf{Q}, \mathbf{K} &= \mathbf{X}\boldsymbol{W}_{Q}, \mathbf{X}\boldsymbol{W}_{K} \quad \in\mathcal{R}^{n\times {{d}_{k}}}; \quad \mathbf{V} = \mathbf{X}\boldsymbol{W}_{V} \quad \in\mathcal{R}^{n\times {{d}_{v}}},\\
    \mathbf{O} &=\left(\phi(\mathbf{Q})\phi^{\top}(\mathbf{K})\odot\mathbf{M}\right)\mathbf{V} \quad \in\mathcal{R}^{n\times {{d}_{v}}},
\end{align}
which has $\mathcal{O}(n)$ time and memory complexity using chunkwise algorithm. Recent advances in LinFormer mainly focus on training acceleration\cite{transnormer,Transnormerllm,qin2024various} or improving performance\cite{RetNet}. 

\textbf{State-Space Model (SSM)} come from continuous-time system which maps a 1D function $x(t)\in\mathcal{R}$ to another function $y(t)\in\mathcal{R}$ via a hidden state $\mathbf{h}(t)\in\mathcal{R}^N$. In SSM, the continuous parameters can be discretized using a step size $\mathbf{\Delta}$ and get discrete parameters $\boldsymbol{\overline{A}},\boldsymbol{\overline{B}},\boldsymbol{\overline{C}}$. The resulting discrete-time system is used to model sequences $\mathbf{x},\mathbf{y}\in\mathcal{R}^{1\times n}$ with elements $x_t,y_t\in\mathcal{R}$ via the recurrent form:
\begin{align}
    \mathbf{h}_t = \boldsymbol{\overline{A}}\mathbf{h}_{t-1}+\boldsymbol{\overline{B}}x_t,   \quad  y_t = \boldsymbol{\overline{C}}\mathbf{h}_t,\label{eq:SSM_state}
\end{align}
in autoregressive inference with $\mathcal{O}(1)$ time and memory complexity per token. The linear time-invariant SSM above can be unrolled and computed using the long convolution with kernel $\boldsymbol{K}$
\begin{align}
    \boldsymbol{K} := (\boldsymbol{\overline{CB}},\boldsymbol{\overline{C}}\boldsymbol{\overline{A}}\boldsymbol{\overline{B}},\cdots, \boldsymbol{\overline{C}}\boldsymbol{\overline{A}}^{n-1}\boldsymbol{\overline{B}}) \quad \in\mathcal{R}^{1\times n},  \quad
    \mathbf{y} = \boldsymbol{K} * \mathbf{x}, \label{eq:conv_ssm}
\end{align}
where $*$ represents casual convolution operation~\cite{gu2023ssm}, which enables parallelizable training utilizing Fast Fourier Transforms, resulting in $\mathcal{O}(n\log n)$ time and $\mathcal{O}(n)$ memory complexity during training. When handling vector sequences $\mathbf{X},\mathbf{Y}\in\mathcal{R}^{d\times n}$, SSMs are applied individually on the $d$ channels. Typical SSMs (S4D\cite{s4d}, DSS\cite{dss}, H3\cite{h3}, S5\cite{s5}) employ data-independent structured transition matrix $\mathbf{\overline{A}}$ or special initialization strategies HiPPO\cite{hippo} to efficiently enhance long-range dependencies. Mamba~\cite{Mamba} advances SSMs by introducing data-dependent parameters and designs a hardware-aware parallel algorithm, further improving prior $\mathcal{O}(n\log n)$ into $\mathcal{O}(n)$ time complexity during training.

\textbf{Linear RNNs (LinRNN)} Traditional RNNs suffer from slow sequential training, limited capability in modeling long-term dependencies, and difficulty in scaling. To address these issues, LinRNNs eliminate the nonlinearity within the recurrence and employ element-wise product instead of matrix multiplication \cite{martin2018parallelizing,simpleSRU}. Typical LinRNN such as Gated Linear Recurrent Unit (GLRU)~\cite{HGRN,qin2024hgrn} is
\begin{align}
    \mathbf{f}_t ,\mathbf{i}_t,\mathbf{c}_t &= \sigma(\mathbf{x}_{t}\boldsymbol{W}_f+\boldsymbol{b}_f),\sigma(\mathbf{x}_{t}\boldsymbol{W}_i+\boldsymbol{b}_i),\phi(\mathbf{x}_{t}\boldsymbol{W}_c+\boldsymbol{b}_c)\quad \in\mathcal{R}^{1\times d},\\
    \mathbf{h}_t &= \mathbf{f}_t\odot\mathbf{h}_{t-1}+\mathbf{i}_t\odot\mathbf{c}_t,\in\mathcal{R}^{1\times d}, \label{eq:rnn_rec}
\end{align}
where $\mathbf{x}_{t},\mathbf{h}_t$ denote input and output, $\mathbf{f}_t,\mathbf{i}_t$ are forget and input gates as in traditional RNNs, $\odot$ is element-wise multiplication. Linear RNNs have $\mathcal{O}(1)$ time and memory complexity per token during inference. Since \cref{eq:rnn_rec} removes nonlinearity, it enables parallelized training using parallel scan\cite{martin2018parallelizing}, with only $\mathcal{O}(n)$ time and memory complexity. Recent works have made effort to explore effective recurrence (LRU~\cite{lru}, RWKV~\cite{rwkv}) or gating mechanisms (HGRN~\cite{HGRN}, Griffin~\cite{Griffin}). 

% \vspace{-1.5mm}
\section{General Form of LinFormer/SSM/LinRNN Mechanisms}\label{sec:general_form}

Observing \cref{eq:vanilla_recurrent}, \cref{eq:linear_att_rur}, \cref{eq:SSM_state}, and \cref{eq:rnn_rec}, we find that their recurrent forms during inference can all be understood from the view of maintaining hidden states. Softmax attention maintains an unlimited hidden state (KV cache). By contrast, LinFormer/SSM/LinRNN have limited hidden states: linear attention with $\phi^{\top}(\mathbf{k}_t)\mathbf{v}_t\in\mathcal{R}^{d_k\times {{d}_{v}}}$, SSM with $\mathbf{h}_t\in\mathcal{R}^{N\times d}$, linear RNNs with $\mathbf{h}_t\in\mathcal{R}^{1\times d}$, where $d_k>N>1$ in usual. Inspired by this fact, we unify LinFormer/SSM/LinRNN mechanisms in the form of linear attention, formally containing Query, Key, and Value matrices (see \cref{fig:general_linear_form}).

\textbf{General Recurrent Form of LinFormer/SSM/LinRNN} is:
\begin{align}
     \mathbf{q}_{t} &= \operatorname{f}_{q}(\mathbf{x}_{t}, \mathbf{\theta}_{q}), \mathbf{k}_{t} = \operatorname{f}_{k}(\mathbf{x}_{t}, \mathbf{\theta}_{k}), \mathbf{\alpha}_{t} = \operatorname{f}_{\alpha}(\mathbf{x}_{t}, \mathbf{\theta}_{\alpha})\quad \in\mathcal{R}^{1\times {{d}_{k}}}, \label{eq:general_qka}\\
    \mathbf{v}_{t} &= \operatorname{f}_{v}(\mathbf{x}_{t}, \mathbf{\theta}_{v}), \mathbf{g}_{t} = \operatorname{f}_{g}(\mathbf{x}_{t}, \mathbf{\theta}_{g})\quad \in\mathcal{R}^{1\times {{d}_{v}}}, \label{eq:general_v}\\
    \mathbf{S}_{t}^{h} &= \operatorname{diag}(\mathbf{\alpha}_{t}^{h})\mathbf{S}_{t-1}^{h} + ({\mathbf{k}_{t}^{h}})^{\top}\mathbf{v}_{t}^{h}\quad \in\mathcal{R}^{{{d}_{k}'}\times {{d}_{v}'}}, \label{eq:general_recurrent} \\
    \mathbf{o}_{t} &= \operatorname{XNorm}(\operatorname{concat}[\mathbf{q}_{t}^{1}\mathbf{S}_{t}^{1}, \mathbf{q}_{t}^{2}\mathbf{S}_{t}^{2}, \cdots, \mathbf{q}_{t}^{H}\mathbf{S}_{t}^{H}])\quad \in\mathcal{R}^{1\times {{d}_{v}}}, \label{eq:general_concat} \\
    \mathbf{y}_{t} &= (\mathbf{o}_{t}\odot\mathbf{g}_{t})\boldsymbol{W}_{O}\quad \in\mathcal{R}^{1\times d},\label{eq:general_out}
\end{align}
where $\mathbf{x}_t\in\mathcal{R}^{1\times d}$ is the $t$-th input. $\mathbf{q}_t, \mathbf{k}_t, \mathbf{v}_t, \mathbf{\alpha}_t, \mathbf{g}_t,\mathbf{S}_{t}$ are query, key, value, decay, output gate, hidden state respectively. $\operatorname{f}_{q/k/\alpha}$ are functions that map $\mathbf{x}_{t}$ from \(\mathcal{R}^{1\times d}\) to \(\mathcal{R}^{1\times d_{k}}\), $\mathbf{\theta}_{q/k/\alpha}$ are the corresponding parameters to be trained. Similarly, $\operatorname{f}_{v/g}$ map $\mathbf{x}_{t}$ from \(\mathcal{R}^{1\times d}\) to \(\mathcal{R}^{1\times d_{v}}\) and $\mathbf{\theta}_{v/g}$ are trainable parameters. In \cref{eq:general_recurrent}, $\mathbf{q}_t, \mathbf{\alpha}_t, \mathbf{k}_t, \mathbf{v}_t$ are divided into $H$ partitions (heads), where ${{d}_{k/v}'} = \frac{d_{k/v}}{H}$, and $h=1,\cdots,H$ is the index of heads. Each head maintains a hidden state $\mathbf{S}_{t}^h$. The diagonal matrix $\operatorname{diag}(\mathbf{\alpha}_{t}^h)$ denotes the decay of past state. $\mathbf{k}_{t}^h$ represents the acceptance for the input token $\mathbf{v}_{t}^h$. The hidden states are 2D matrix once ${d_{k}'}\neq1$. In \cref{eq:general_concat}, to turn back to 1D shape, the $\mathbf{q}_{t}^h$ operation is necessary as a dot-product with $\mathbf{S}_{t}^h$, then the concat and normalization operations are followed. $\operatorname{XNorm}$ denotes any kinds of normalization. In \cref{eq:general_out}, a gate machanism is optional for $\mathbf{o}_{t}$ while the dimension should be projected back to $d$ from $d_{v}$ through $\boldsymbol{W}_{O}\in \mathcal{R}^{d_v\times d}$.

From a functional view, \cref{eq:general_recurrent} represents the update process of the hidden state, which contains historical information on keys and values. \cref{eq:general_concat} represents query and weighted sum operations to derive the attention output. \cref{eq:general_out} represents gate and projection operations to get the final output.

\begin{figure}
    \centering
    \includegraphics[width=\textwidth]{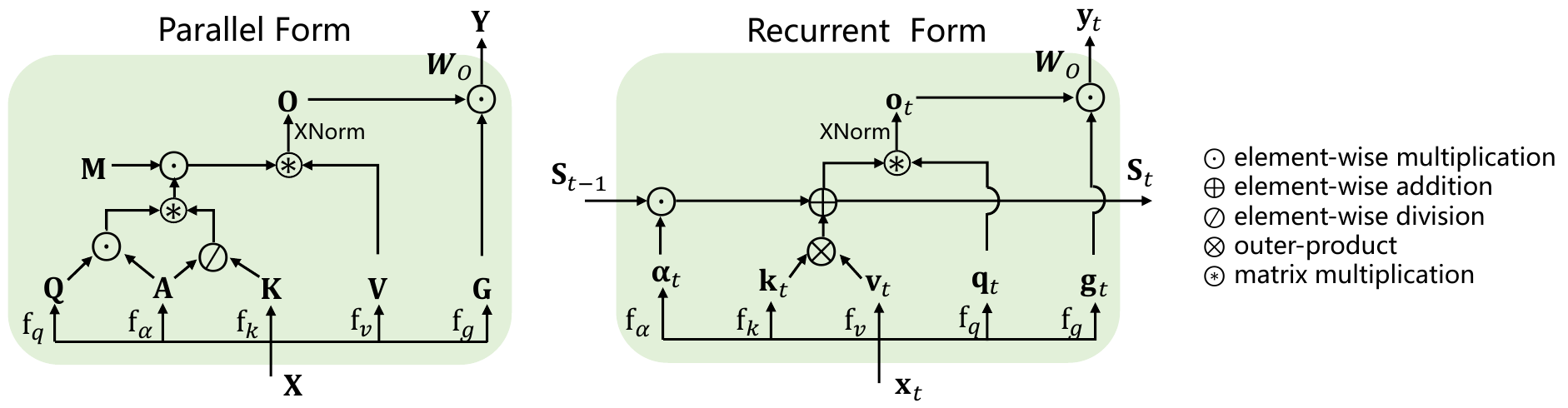}
    \caption{\textbf{General Form of LinFormer/SSM/LinRNN Mechanisms.} The general form equips with two modes of parallel and recurrent computation which enjoys both training and inference efficiency.}
    \label{fig:general_linear_form}
    \label{fig:general_form}
\end{figure}

\textbf{General Parallel Form of LinFormer/SSM/LinRNN} can be written as follow:
\begin{align}
    &\mathbf{O} = \operatorname{LinAttMap}\left(\mathbf{Q},\mathbf{K}\right)\mathbf{V} = \Big(\Big(\big(\mathbf{Q}\odot\mathbf{A}\big)\big(\frac{\mathbf{K}}{\mathbf{A}}\big)^{\top}\Big)\odot\mathbf{M}\Big)\mathbf{V}, \label{eq:att_paral}\\
    &(\mathbf{Q/K/V})_{t, :} = (\mathbf{q/k/v})_{t}, \quad
    \mathbf{A}_{t, :} = \prod_{j=1}^{t}\mathbf{\alpha}_{j}, \quad
    \mathbf{M}_{i,j} = \begin{cases}
    1, i\leq j. \\
    0, i > j.
    \end{cases}
% \vspace{-2mm}
\end{align}
$\frac{\mathbf{K}}{\mathbf{A}}$ denotes element-wise division, $(\mathbf{Q})_{t, :}$ is the $t$-th row of $\mathbf{Q}$, and $\operatorname{LinAttMap}\left(\mathbf{Q},\mathbf{K}\right)$ is the attention map. Each element in the attention map matrix is as follows (heads are omitted for simplicity):
\begin{equation}
\label{eq:att_general}
\operatorname{LinAttMap}\left(\mathbf{Q},\mathbf{K}\right)_{t,s}=\begin{cases}
    \mathbf{q}_{t}\cdot\Big(\big(\prod_{j=s+1}^t \mathbf{\alpha}_{j}\big)\odot{\mathbf{k}_{s}}\Big)^{\top}, s \leq t.\\
    0, s> t.
    \end{cases}
\end{equation}

As shown in \cref{tab:unified}, the main differences between various linear models are parameter functions $\operatorname{f}_{q/k/v/\alpha/g}$ and dimension settings $d_k,d_v,H$. We give details in \cref{appendix:general_form} on how to derive LinFormers/SSMs/linRNNs from our unified form, which is termed as ``Linear Attention (LinAtt)". 

LinFormer/SSM/LinRNN models have different origins, so they have different \textbf{optimization perspectives} and various \textbf{hidden state sizes}: \romannumeral1) \emph{LinFormer} originate from approximation of vanilla softmax attention. They focus on designing better kernel function $\phi$, i.e., to optimize $\operatorname{f}_{q},\operatorname{f}_{k}$; They have relatively large matrix hidden state whose size ($d_vd_k/H$) is mainly correlated to model dimension $d$. \romannumeral2) \emph{SSM} originate from state space equations. So they focus on how to better maintain the hidden state and optimize $\operatorname{f}_{\alpha}$; They have a matrix hidden state of moderate size ($d_vN$), which is correlated to the fixed expansion $N$. \romannumeral3) \emph{LinRNN} originate from removing nonlinearity in the recurrence of vanilla RNN. So they focus on designing better forget/input/output gates, i.e., to optimize $\operatorname{f}_{\alpha},\operatorname{f}_{k},\operatorname{f}_{g}$; They have 1D vector hidden state whose size ($d_v$) is relatively small. Despite these differences, they all try to design better parameter functions $\operatorname{f}_{q/k/v/\alpha/g}$ and maintain a limited hidden state $\mathbf{S}_t$. 

\begin{center}
\begin{table}[t]
\setlength{\tabcolsep}{4pt}
\centering
\caption{\textbf{From our general form to existing linear models} ($*$ indicates the bias term is omitted).}
 \label{tab:unified}
\resizebox{\linewidth}{!}{
\setlength{\tabcolsep}{1.3mm}{
\begin{tabular}{l|cc|cc|cc}
\toprule
\multirow{2}*[-0.5ex]{Models}    & \multicolumn{2}{c|}{LinFormer} & \multicolumn{2}{c|}{LinRNN} & \multicolumn{2}{c}{SSM}\\
% \midrule
\cmidrule(r){2-7}
& GLA\cite{gla}&  TrNorm\cite{transnormer}& GLRU\cite{HGRN}&RWKV-4\cite{rwkv} & Mamba\cite{Mamba}&S5\cite{s5}\\
\midrule
$\operatorname{f}_{q}(\mathbf{x}_{t}, \mathbf{\theta}_{q})$ &$\mathbf{x}_{t}\boldsymbol{W}_{Q}$   & $\phi(\mathbf{x}_{t}\boldsymbol{W}_{Q})$ & $\mathbf{1}$ &$\mathbf{1}$& $\mathbf{x}_{t}\boldsymbol{W}_{C}$& $\mathbf{1}$
\\
$\operatorname{f}_{k}(\mathbf{x}_{t}, \mathbf{\theta}_{k})$ & $ \mathbf{x}_{t}\boldsymbol{W}_{K}$&
$ \phi(\mathbf{x}_{t}\boldsymbol{W}_{K})$      & $\sigma(\mathbf{x}_{t}\boldsymbol{W}_{i})^*$ & $\exp{(\mathbf{x}_{t}\boldsymbol{W}_{K})}$&$\Delta_t(\mathbf{x}_{t}\boldsymbol{W}_{b})$  & $\mathbf{1}$\\
$\operatorname{f}_{v}(\mathbf{x}_{t}, \mathbf{\theta}_{v})$ & $\mathbf{x}_{t}\boldsymbol{W}_{V}$& $\mathbf{x}_{t}\boldsymbol{W}_{V}$ &  $\phi(\mathbf{x}_{t}\boldsymbol{W}_{c})^*$  &$\mathbf{x}_{t}\boldsymbol{W}_{V}$& $\mathbf{x}_{t}$&$\mathbf{x}_{t}\overline{\boldsymbol{B}}$\\
$\operatorname{f}_{\alpha}(\mathbf{x}_{t}, \mathbf{\theta}_{\alpha})$ &$\sigma(\mathbf{x}_{t}\boldsymbol{W}_{\alpha}^1\boldsymbol{W}_{\alpha}^2)^*$& $\mathbf{\lambda}\exp{(j\mathbf{\theta})}$  & $\sigma(\mathbf{x}_{t}\boldsymbol{W}_{f})^*$ &$\exp{(-\boldsymbol{W})}$& $\exp{(\Delta_t\boldsymbol{A})}$&$\exp{(\Delta\boldsymbol{A})}$\\
$\operatorname{f}_{g}(\mathbf{x}_{t}, \mathbf{\theta}_{g})$ &$\sigma(\mathbf{x}_{t}\boldsymbol{W}_{r})^*$& 
$\mathbf{x}_{t}\boldsymbol{W}_{U}$&
$\phi(\mathbf{x}_{t}\boldsymbol{W}_{g})^*$& $\sigma(\mathbf{x}_{t}\boldsymbol{W}_{r})$& $\mathbf{1}$& $\mathbf{1}$\\ \midrule
Dimension &\makecell[c]{$d_{k}=d/2$\\$d_{v}=d$}&$d_{k}=d_{v}=d$ & \multicolumn{2}{c|}{$d_{k}=d_{v}=d=H$} &\makecell[c]{$d_{v}=d=H$\\$d_{k}'=N$} &\makecell[c]{$d_{v}=Nd=H$\\$d_{k}'=1$}\\
\bottomrule
\end{tabular}
}
}
\vspace{-3mm}
\end{table}
\end{center}

% The softmax attention map can be any probability distribution in form, and can model long short-term dependencies of any distance in function. More specifically, t

\vspace{-1.2cm}
\section{Optimal Linear Approximation to the Softmax Attention Map}
\label{sec:thoery}

We here discuss the optimal approximation of LinAttMap to SoftAttMap based on its general form. The function of softmax attention is two-fold: \romannumeral1) \emph{Memorizing information,} all the current and historical information can be stored in KV cache; \romannumeral2) \emph{Modeling relationships,} softmax attention can calculate arbitrary attention scores of stored information. Unfortunately, such a powerfully expressive attention map generated by softmax attention requires infinite hidden states. By contrast, linear attention expects to exploit limited hidden states to achieve the same functionality as softmax attention. 

Some existing linear models, such as Performer\cite{performer}, RFA\cite{rfa}, etc., optimize the model with the goal of approximating the value of SoftAttMap. In contrast, this work investigates the functional approximation of SoftAttMap, which is the basis for value approximation. Specifically, we here attempt to answer two key questions: \romannumeral1) Can linear attention realize the function of softmax attention? \romannumeral2) If it can be achieved, what kind of linear attention approximation is better? To achieve this goal, we first give the definition of necessary conditions of optimal linear approximation. Then we categorize the existing linear models based on the conditions of the optimal linear approximation.

\begin{definition}
\label{def:opt_app}
\textbf{Necessary Conditions of Optimal Linear Approximation to Softmax Attention Map.} A function $f(\mathbf{x}_{t}, \mathbf{x}_{s}|\theta): \mathcal{R}^{1 \times d}\times\mathcal{R}^{1 \times d} \rightarrow \mathcal{R}$, used to compute attention score between any $\mathbf{x}_{t}$ and $\mathbf{x}_{s}$ (tokens), with parameters $\theta$, is an optimal linear approximation to softmax attention map if it satisfies: \romannumeral1) \textbf{Linear complexity}. Attention map can be computed in linear time, i.e., $\mathcal{O}(n)$ space and time complexity during training and $\mathcal{O}(1)$ space and time complexity during inference. \romannumeral2) \textbf{Dynamic memory ability}. When handling inputs sequentially, $f(\mathbf{x}_{t}, \mathbf{x}_{s}|\theta)$ with limited hidden states should be able to store the most important information adaptively while forgetting unimportant ones. \romannumeral3) \textbf{Static approximation ability}: For an arbitrarily given softmax attention map $\mathbf{P}$ with scores $p_{ts}$, there must exists bounded $\theta$ such that $f(\mathbf{x}_{t}, \mathbf{x}_{s}|\theta)=p_{ts}, \forall t,s=1,\cdots,n$. \romannumeral4) \textbf{Least parameter approximation}: On the premise that the first three conditions are met, use as few parameters as possible to achieve approximation to softmax attention map.
\end{definition}

In \cref{def:opt_app}, Condition 0 (C0) underlines computational and memory efficiency. Conditions 1 (C1) and 2 (C2) consider \emph{memory} and \emph{modeling} ability of linear attention. Due to limited state size $d$, linear attention can only memorize the history of most important $d$ tokens without information loss and precisely model arbitrary attention map of those $d$ tokens. Condition 3 (C3) is our expectation to seek the least parameters on the premise that previous three conditions are met. 

\textbf{Theoretical Analysis for Optimal Linear Approximation.} For the C1 condition, suppose the information about $\mathbf{v}_{t_1},\dots,\mathbf{v}_{t_{d_k}}$ is successfully stored in $\mathbf{S}_{t}$ ($t_1,\dots,t_{d_k}\leq t$), we will check whether the model can substitute unimportant $\mathbf{v}_{t_1}$ when the new important input $\mathbf{v}_{t+1}$ arrives.
% At a time $t$, the information about $\mathbf{v}_{t_1},\dots,\mathbf{v}_{t_{d_k}}$ is successfully stored in $\mathbf{S}_{t}$ ($t_1,\dots,t_{d_k}\leq t$), which means $(\mathbf{S}_{t})_{i, :}=\mathbf{v}_{t_i}$. When the new important input $\mathbf{v}_{t+1}$ arrives, only models with dynamic decay can substitute historical unimportant $\mathbf{v}_{t_1}$  by $\mathbf{v}_{t+1}$ successfully, i.e., obtain $(\mathbf{S}_{t+1})_{1, :}=\mathbf{v}_{t+1}$.

For the C2 condition, \cref{eq:att_general} illustrates the LinAttMap only relate to $\mathbf{q}_t=\operatorname{f}_{q}(\mathbf{x}_{t}, \mathbf{\theta}_{q}),\mathbf{k}_t=\operatorname{f}_{k}(\mathbf{x}_{t}, \mathbf{\theta}_{k})$, $\mathbf{\alpha}_t=\operatorname{f}_{\alpha}(\mathbf{x}_{t}, \mathbf{\theta}_{\alpha})$. Denote decay $\mathbf{\Lambda}_t=\operatorname{diag}(\mathbf{\alpha}_t)$. Assuming the inputs are good enough and the functions $(\operatorname{f}_{q},\operatorname{f}_{k},\operatorname{f}_{\alpha})$ are expressive enough, we can shift from solving $(\mathbf{\theta}_q,\mathbf{\theta}_k,\mathbf{\theta}_\alpha)$ to solving $(\mathbf{Q},\mathbf{K},\mathbf{\Lambda}_t)$. We focus on approximating the attention scores between stored tokens, and the problem is simplified via: \romannumeral1) setting query dimension ${d}_{k}=1$; \romannumeral2) considering only a given time $t$ and its attention distribution $\mathbf{p}_t=[p_{ts},s=1,\dots,t]\in\mathcal{R}^{1\times t}$. Then, C2 is proved by the following equations holding with bounded parameters, as a foundation conclusion:
\begin{align}
    f(\mathbf{x}_{t}, \mathbf{x}_{s}|\mathbf{Q},\mathbf{K},\mathbf{\Lambda}_t)&=q_t \big(\prod_{j=s+1}^{t} \alpha_j\big) k_s=p_{ts}, \forall s=1,\dots,t,\\
    \operatorname{s.t.} \quad |q_s|\leq C_q, |k_s|&\leq C_k,  \alpha_s\in[0,1], \forall s=1,\dots,t.
\end{align}
For bounded inputs $\mathbf{X}$, bounded parameters $(\mathbf{\theta}_q,\mathbf{\theta}_k,\mathbf{\theta}_\alpha)$ are equivalent to bounded $(\mathbf{Q},\mathbf{K},\mathbf{\Lambda}_t)$. Afterwards we will generalize to vector version with ${d}_{k}>1$ and consider distribution of all time ($\mathbf{p}_t,t=1,\dots,d_k$). This is done by viewing $\mathbf{q}_t$ as a channel selector.

For the C3 condition, least parameters mean the fewest parameter groups ($\mathbf{Q},\mathbf{K},\mathbf{\Lambda}_t$) when $d,d_k,d_v$ are fixed. Due to space constraints, the detailed analysis in this Section is provided in \cref{appendix:detailed_proof}.  \vspace{-3mm}
%%%%%%%%%%%%
%  Due to space constraints, the functional roles of $(\mathbf{Q}$, $\mathbf{K}$, $\mathbf{\Lambda}_t)$ in approximation are detailed in \cref{appendix:detailed_proof}. 

\begin{center}
\begin{table}[t]
\setlength{\tabcolsep}{4pt}
\centering
\small
\caption{\textbf{A review of existing linear models.} According to \cref{def:opt_app}, existing linear models all have some deficiencies: \romannumeral1) Models without dynamic decay $\mathbf{\Lambda}_t$ have no ability to memorize dynamically (not satisfying C1); \romannumeral2) LinRNNs lack the selection ability brought by $\mathbf{Q}$ (not satisfying C2), and the approximation ability is poor owing to the small hidden state; \romannumeral3) Models with $\mathbf{K}$ have redundant parameters (not satisfying C3), which probably leads to higher learning difficulty.}
\label{tab:approximate}
% \resizebox{\linewidth}{!}{
\setlength{\tabcolsep}{1.3mm}{
\begin{tabular}{c|cl|cccc|c}
\toprule
\multicolumn{3}{c|}{~}&C0 & C1 & C2 & C3 & Models\\
\midrule
% \cmidrule(r){2-7}
\multicolumn{3}{c|}{Softmax Attention} & \XSolidBrush & \Checkmark & \Checkmark& -& \makecell[c]{Transformer~\cite{transformer},\\GPT~\cite{gpt3}, Llama~\cite{llama}}\\
\midrule
\multirow{11}{*}[-8ex]{\makecell[c]{Linear\\Attention}}& \multirow{2}*[-1ex]{\makecell[c]{Three Parameter\\Groups}}&$\mathbf{Q},\mathbf{K},\mathbf{\Lambda}$& \Checkmark& \XSolidBrush& \XSolidBrush &\XSolidBrush &\makecell[c]{RetNet~\cite{RetNet}, TransNormer~\cite{transnormer},\\ S4D~\cite{s4d}, H3~\cite{h3}, DSS~\cite{dss}}\\
\cmidrule(r){3-8}
~& ~&$\mathbf{Q},\mathbf{K},\mathbf{\Lambda}_t$& \Checkmark& \Checkmark& \Checkmark&\XSolidBrush & \makecell[c]{GLA~\cite{gla}, Mamba~\cite{Mamba}}\\
\cmidrule(r){2-8}
~& \multirow{5}*[-3ex]{\makecell[c]{Two Parameter\\Groups}}&$\mathbf{Q},\mathbf{K}$& \Checkmark& \XSolidBrush& \Checkmark &\XSolidBrush &\makecell[c]{linear Transformer~\cite{katharopoulos2020transformers}, RFA~\cite{rfa},\\Performer~\cite{performer}, cosFormer~\cite{cosformer}}\\
\cmidrule(r){3-8}
~& ~&$\mathbf{K},\mathbf{\Lambda}$& \Checkmark& \XSolidBrush& \XSolidBrush&\XSolidBrush & \makecell[c]{RWKV-4~\cite{rwkv}} \\ \cmidrule(r){3-8}
~& ~&$\mathbf{K},\mathbf{\Lambda}_t$& \Checkmark& \Checkmark& \XSolidBrush& \XSolidBrush&\makecell[c]{GLRU~\cite{HGRN}}\\
\cmidrule(r){3-8}
~& ~&$\mathbf{Q},\mathbf{\Lambda}$& \Checkmark& \XSolidBrush& \XSolidBrush&\XSolidBrush & -\\ \cmidrule(r){3-8}
~& ~&$\mathbf{Q},\mathbf{\Lambda}_t$& \Checkmark& \Checkmark& \Checkmark&\Checkmark& \textbf{MetaLA (This Work)}\\
\cmidrule(r){2-8}
~& \multirow{4}*[-1.6ex]{\makecell[c]{One Parameter\\Group}}&$\mathbf{Q}$ or $\mathbf{K}$& \Checkmark& \XSolidBrush& \XSolidBrush &\XSolidBrush&
- \\ \cmidrule(r){3-8}
% ~& ~&$\mathbf{K}$& \Checkmark& \XSolidBrush& \XSolidBrush& \XSolidBrush&-\\ \cmidrule(r){3-8}
~& ~&$\mathbf{\Lambda}$& \Checkmark& \XSolidBrush& \XSolidBrush&\XSolidBrush& \makecell[c]{LRU~\cite{lru}, S5~\cite{s5}}\\ \cmidrule(r){3-8}
~& ~&$\mathbf{\Lambda}_t$& \Checkmark& \Checkmark& \XSolidBrush&\XSolidBrush& \makecell[c]{HGRN~\cite{HGRN}, Griffin~\cite{Griffin},\\T-RNN~\cite{strongly-typed-rnn}}\\
\bottomrule
\end{tabular}
}
% \vspace{-3mm}
\end{table}
\end{center}
\vspace{-6mm}

% We will prove that present linear models are not optimal linear approximation to softmax attention map. Specifically, f

\textbf{Conclusions of Optimal Linear Approximation Analysis.} \romannumeral1) \emph{Linear approximation.} The necessary conditions (C1 and C2) for LinAttMap to achieve approximation to SoftAttMap is that its implementation must include $\mathbf{Q}$ and dynamic decay $\mathbf{\Lambda}_t$. Both $(\mathbf{Q},\mathbf{K},\mathbf{\Lambda}_t)$ and $(\mathbf{Q},\mathbf{\Lambda}_t)$ can achieve approximation. \romannumeral2) \emph{Least parameter approximation.} $(\mathbf{Q},\mathbf{\Lambda}_t)$ has fewer parameters (i.e., $\mathbf{K}$ is not necessary), if the model dimensions are fixed. \romannumeral3) \emph{Function of dynamic decay.} $\mathbf{\Lambda}_t$ is the key to achieve dynamic memory. \romannumeral4) \emph{Function of Query.} $\mathbf{Q}$ can be seen as a channel selector which selects several channels of Hadamard product of $\mathbf{\Lambda}_t$ and $\mathbf{K}$ to approximate attention map.

In \cref{tab:approximate}, we review some existing linear models and judge whether they meet the necessary conditions for optimal approximation. Linear attentions can be classified into three types based on the parameter groups: \romannumeral1) Using $\left(\mathbf{Q},\mathbf{K},\mathbf{\Lambda}_t\right)$ all together, \romannumeral2) Exploiting $\left(\mathbf{Q},\mathbf{K}\right)$, $\left(\mathbf{Q},\mathbf{\Lambda}_t\right)$ or $\left(\mathbf{K},\mathbf{\Lambda}_t\right)$, \romannumeral3) Employing only one of $\mathbf{Q}$, $\mathbf{K}$, $\mathbf{\Lambda}_t$. Considering decay can be either dynamic or fixed, here we use subscript $t$ to distinguish, i.e., $\mathbf{\Lambda}$/$\mathbf{\Lambda}_t$ denote fixed/dynamic decay. According to \cref{def:opt_app}, they have different degrees of deficiencies: \romannumeral1) Models without dynamic decay such as RetNet\cite{RetNet}, TransNormer\cite{transnormer}, RFA\cite{rfa}, cannot memorize dynamically; \romannumeral2) LinRNNs such as RWKV-4\cite{rwkv}, HGRN\cite{HGRN}, Griffin\cite{Griffin} lack the selection ability brought by $\mathbf{Q}$ and the approximation ability is poor due to the small hidden state; \romannumeral3) Models with $\mathbf{K}$ such as Mamba\cite{Mamba}, GLA\cite{gla} have redundant parameters, which probably leads to higher learning difficulty. Thus, none of the existing linear models meet all C1/C2/C3 conditions. These analyses also inspire us that ignoring the functional approximation of softmax attention does not enable the approximation of softmax attention values.

\begin{center}
\begin{table}[t]
\centering
\caption{\textbf{From general recurrent linear form to our MetaLA.}}
\label{tab:MetaLA}
\resizebox{0.9\linewidth}{!}{
\setlength{\tabcolsep}{1.3mm}{
\begin{tabular}{lcccccc}
\toprule
Models    & $\operatorname{f}_{q}(\mathbf{x}_{t}, \mathbf{\theta}_{q})$ & $\operatorname{f}_{k}(\mathbf{x}_{t}, \mathbf{\theta}_{k})$ & $\operatorname{f}_{v}(\mathbf{x}_{t}, \mathbf{\theta}_{v})$ & $\operatorname{f}_{\alpha}(\mathbf{x}_{t}, \mathbf{\theta}_{\alpha})$& $\operatorname{f}_{g}(\mathbf{x}_{t}, \mathbf{\theta}_{g})$&Dimension\\
\midrule
MetaLA&$\mathbf{x}_{t}\boldsymbol{W}_{Q}$   & $\mathbf{1}-\mathbf{\alpha}_t$      & $\mathbf{x}_{t}\boldsymbol{W}_{V}$ & $\sigma(\mathbf{x}_{t}\boldsymbol{W}_{\alpha})$ &$\phi(\mathbf{x}_{t}\boldsymbol{W}_{G}+\boldsymbol{b}_{G})$  & $d_{k}=d/2,d_{v}=d$\\
\bottomrule
\end{tabular}
}}\vspace{-5mm}
\end{table}
\end{center}
\vspace{-8mm}

\section{MetaLA Transformer}
\label{sec:metala}
Transformer is stacked by a series of Encoder/Decoder blocks. Generally, each block is composed of two modules in sequence: token mixer and channel mixer~\cite{NEURIPS2021_cba0a4ee,10304335}. Softmax attention plays the role of the token mixer. In this work, we follow the Transformer architecture as a whole but use our proposed MetaLA module as the token mixer. Due to space constraints, the architecture of MetaLA Transformer is given in detail in \cref{app:arch}. Here we focus on describing the three enhancements of MetaLA relative to the general linear attention in \cref{sec:general_form} (see \cref{fig:metala_layer}): \romannumeral1) The Key matrix is not used, which is based on our theoretical analysis. \romannumeral2) Self-augmentation and \romannumeral3) Short convolution are two other optional techniques to further enhance the modeling ability of our model.
 
\begin{wrapfigure}[14]{r}{0.45\textwidth}
\vspace{-6mm}
\centering
\includegraphics[scale=0.5]{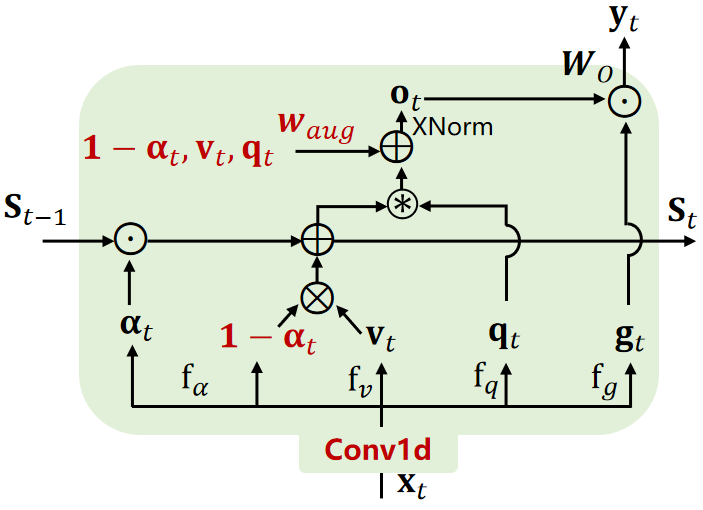}
\vspace{-3mm}
\caption{\textbf{Recurrent form of MetaLA.} We mark all three enhancements in red.}\label{fig:metala_layer}
\label{fig:attention_map}
\end{wrapfigure}

\romannumeral1) \emph{The Key matrix is not used.} We exploit $\mathbf{1}-\mathbf{\alpha}_{t}$ to replace $\mathbf{k}_{t}$, based on theoretical analysis in \cref{sec:thoery,appendix:detailed_proof}, i.e., dynamic decay $\mathbf{\Lambda}_t$ is the key mechanism to achieve dynamic memory and static approximation, and $\mathbf{K}$ is not required. As shown in \cref{tab:MetaLA}, compared with \cref{eq:general_recurrent}, the main improvement is:
\begin{align}
    \mathbf{S}_{t}^{h} &= \operatorname{diag}(\mathbf{\alpha}_{t}^{h})\mathbf{S}_{t-1}^{h} + (\mathbf{1} - \mathbf{\alpha}_{t}^{h})^{\mathsf{T}}\mathbf{v}_{t}\quad\in\mathcal{R}^{{{d}_{k}'}\times {{d}_{v}'}},\label{eq:metala-1}
\end{align}
which can be trained in a parallel form in \cref{eq:att_paral}. The only difference is that $\mathbf{K}$ is replaced by $\mathbf{B}$ and $\mathbf{B}_{t, :} = \mathbf{1}-\mathbf{\alpha}_{t}$. With usage of $\mathbf{\Lambda}_t$ and $\mathbf{Q}$, MetaLA can achieve linear approximation to SoftAttMap. Without usage of $\mathbf{K}$ ($\boldsymbol{W}_{K}$), we can allocate more parameters and utilize full-rank matrix $\boldsymbol{W}_{\alpha}$ to produce dynamic decay rather than low-rank matrix used by GLA, such that we do not sacrifice expression capacity of $\operatorname{f}_{\alpha}$ and approximation performance of MetaLA.

\romannumeral2) \emph{Self-augmentation} can enhance a token's attention to itself, avoiding attention dilution~\cite{transnormer}:
\begin{align}
    \mathbf{o}_{t}^h &= \mathbf{q}_{t}^{h}\mathbf{S}_{t}^{h}+\sigma_{\text{aug}}\big(\mathbf{q}_{t}^{h}(\boldsymbol{w}^h_{\text{aug}}\odot(\mathbf{1} - \mathbf{\alpha}_{t}^{h}))^{\mathsf{T}}\mathbf{v}_{t}\big)\quad \in\mathcal{R}^{1\times {{d}_{v}'}}.
\end{align}
Without changing the hidden state $\mathbf{S}_t^h$ in \cref{eq:metala-1}, the proposed self-augmentation (the second term on the right side of the equation) is only added on the output process, with a learnable parameter $\boldsymbol{w}_{\text{aug}}\in\mathcal{R}^{1\times d_k}$. The proposed design has two advantages (more analysis in \cref{app:overall}): First, it maintains parallel computing; Second, it augments the information of current token itself and does not affect future output through inner state.
% Second, it only enhances its own attention and does not change the attention of future tokens to the current token.

\begin{wrapfigure}[9]{r}{0.64\textwidth}
\vspace{-6mm}
\centering
\includegraphics[scale=0.65]{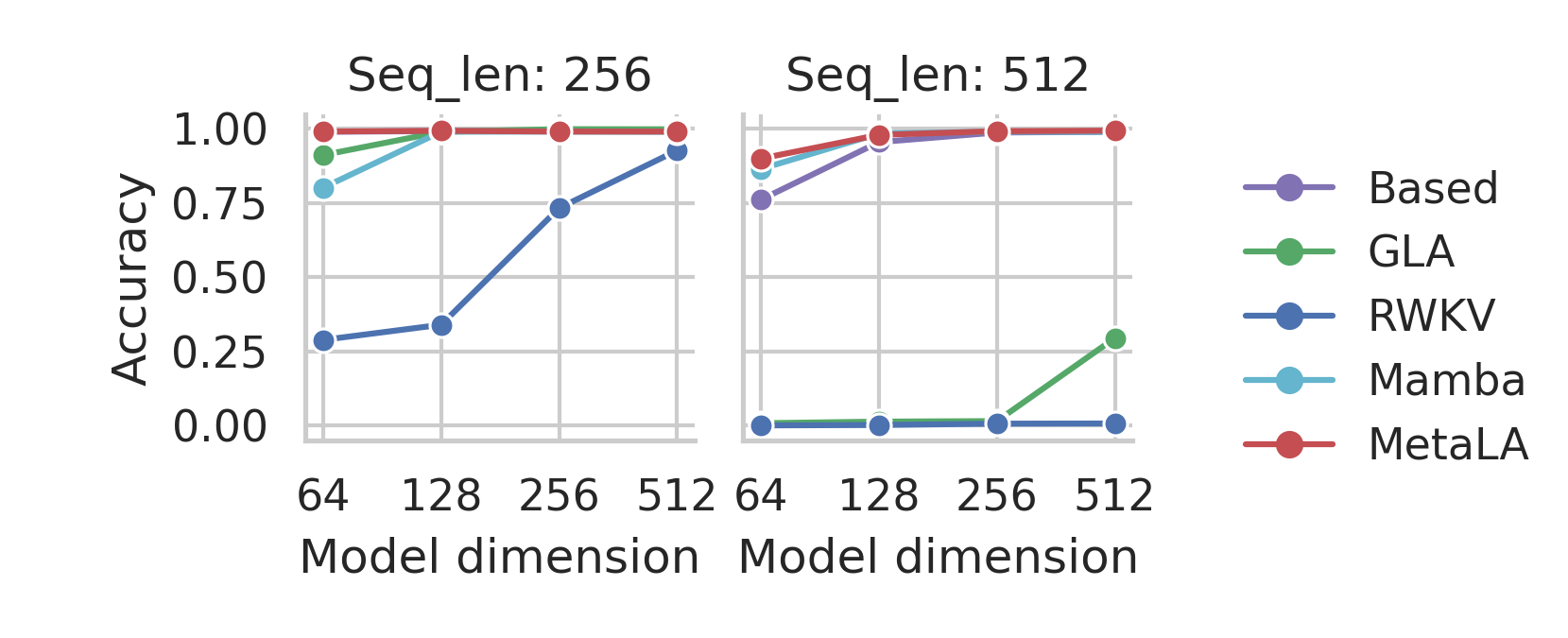}
\vspace{-7mm}
\caption{\textbf{Accuracy (\%) on the synthetic MQAR task}.}\label{fig:ar_results}
\end{wrapfigure}

\romannumeral3) \emph{Short convolution.} An additional short convolution can be inserted before entering the MetaLA layer to enhance local interaction further, motivated by Mamba~\cite{Mamba} and Griffin~\cite{Griffin}.

\section{Experiments}
\label{sec:experiments}

We conduct a comprehensive evaluation of MetaLA to validate its capabilities as a foundation model. \romannumeral1) MQAR~\cite{zoology}. Performance on the Multi-Query Associative Recall~(MQAR) task is closely linked to language modeling and can also imply the effectiveness of our theory in modeling hidden states and retrieving information. \romannumeral2) Autoregressive language modeling on the Pile~\cite{pile} dataset and evaluation on Common-Sense Reasoning and SuperGLUE~\cite{superglue} zero-shot benchmarks are conducted. \romannumeral3) LRA~\cite{lra}. We execute experiments on the Long Range Arena~(LRA) benchmark~\cite{lra} to investigate MetaLA's ability in long sequence modeling. \romannumeral4) ImageNet~\cite{deng2009imagenet}. Generalization ability in visual tasks. \pyq{Due to space constraints, we put some additional experiments in \cref{app:addition_exp}, including: \romannumeral5) Scalability. We extend MetaLA to a 3B parameter scale and a 300B data scale for preliminary validation. \romannumeral6) Retrieval and long context abilities. We evaluated MetaLA's retrieval performance on the MAD tasks \cite{mad}, and its effectiveness in handling long contexts on the Needle in a Haystack task \cite{shen2024scaling}. \romannumeral7) Training efficiency. We provide comparative results on training throughput and GPU memory usage across various models. Detailed experimental setup and further discussion are given in \cref{app:exp_detail}.}

\textbf{Associative Recall.} The synthetic MQAR task~\cite{zoology} is exploited to evaluate MetaLA's memory ability. In the task, given multiple queries, the model must recall the corresponding key-value pairs before. We follow default settings in~\cite{zoology} to generate datasets. \cref{fig:ar_results} shows that MetaLA outperforms other linear models, which have three parameter groups (Mamba~\cite{Mamba}, GLA~\cite{gla}, Based~\cite{based}) or fixed decay (RWKV-4~\cite{rwkv}), well supporting our theoretical analysis and module design. \pyq{The attention baseline achieves optimal results ($>99.0$) under both conditions. The additional experiments in \cref{app:addition_exp} show that MetaLA outperforms Mamba on more challenging settings.}

\begin{table}[t]
\small
\setlength{\tabcolsep}{4pt}
\centering
\caption{\textbf{Performance Comparison on SuperGLUE.} PS: parameter size (billion). T: tokens (billion). $^{\dag}$ means the results reported by \cite{HGRN}. For baselines that need to be compared, if they do not have public checkpoints, we train and test them under identical conditions with MetaLA. MetaLA$_a$: MetaLA with tied embedding trained using 100B tokens. MetaLA$_b$: MetaLA trained with 300B tokens.}
\label{tab:superGLUE_result}
\begin{tabular}{l|cc|ccccccc|c}
\toprule
Models    & PS & T & WSC & WIC & RTE & CB & MULTIRC & BOOLQ & COPA & AVG\\
\midrule
Pythia & 0.41 & 15 &36.54&50.00&52.35&39.29&0.31&61.99&62.00&43.21
\\
Mamba&0.37& 15& 36.54&50.31&52.71&42.86&2.52&58.78&64.00&43.96
 \\
GLA &0.36  &15  &36.54&49.84&53.07&41.07&0.42&53.49&66.00&42.92
 \\
MetaLA & 0.36& 15& 36.54&50.00&52.71&42.86&0.31&58.96&67.00& \textbf{44.05}\\
\midrule
Pythia$^{\dag}$ & 1.4 & 300 & 
36.54 &50.00 &53.07 &35.71 &0.94 &60.73 &72.00 &44.14\\
% HGRN$^{\dag}$ & 1& 100& 50.00& 49.84& 54.15& 48.21& 0.31& 60.03& 73.00& 47.93 \\
HGRN$^{\dag}$ & 1& 100& 40.38& 50.78& 53.43& 42.86& 3.04& 58.69& 70.00& 45.60 \\
% Mamba & 1.4&300 & 34.62 & 49.69 & 47.65 & 14.29 & 0.52 & 63.67 & 79.00& 41.35\\
Mamba & 1.4&100 & 39.42 & 50.94 & 55.23 & 26.79 & 1.15 & 53.27 & 73.00& 42.83\\
RetNet$^{\ddag}$&1.3&100& 36.54& 50.00& 52.71& 46.43& 2.52& 60.21& 68.00&45.20\\
GLA$^{\ddag}$ & 1.3 & 100 & 36.54& 50.16& 53.07& 37.50& 0.31& 61.04& 69.00&43.95 \\
MetaLA$_a$ & 1.3& 100& 49.04 & 51.25 & 55.60 & 37.50 & 1.78& 55.50 & 70.00 & \textbf{45.81} \\
MetaLA$_b$ & 1.4& 300& 62.50 & 51.88 & 49.10 & 48.21 & 1.57& 56.27 & 75.00 & \textbf{49.22}\\
\bottomrule
\end{tabular}
\end{table}

% Mamba要特意强调下

\begin{table}[t]
\small
\setlength{\tabcolsep}{4pt}
\centering
\caption{\textbf{Performance Comparison on Commonsense Reasoning.} $^\ddag$ indicates testing using open-source checkpoints. HS: HellaSwag. WG: WinoGrande. OBQA: OpenbookQA.}
\label{tab:commensense_reasoning_result}
\begin{tabular}{l|cc|cccccccc|c}
\toprule
Models    & PS & T &LOGIQA&WSC273& BOOLQ & PIQA & HS & WG & ARC-c & OBQA & AVG\\
\midrule
Pythia &0.41 & 15 &21.81&57.51&  61.99& 63.66& 33.15& 51.78& 22.78& 28.60& \textbf{42.66}\\
Mamba &0.37 &15 &20.43	&56.78& 58.78& 64.80& 33.98& 49.80& 22.87& 29.20& 42.08\\
GLA &0.36  &15  &23.04&	56.78&	53.49&	63.55&	32.00&	52.10&	22.78&	27.40&	41.39 \\
MetaLA &0.36 &15 & 22.43&58.24&58.96&63.82&32.18&53.12&23.38&28.00& 42.52\\
\midrule
Pythia$^{\ddag}$ & 1.4 & 300 &21.35&72.89& 63.12 & 70.89 & 51.98 & 56.99 &  28.41 & 33.20 & 49.85\\
% HGRN$^{\ddag}$ & 1& 100& 21.81& 61.17& 60.03& 69.37& 48.80& 50.43& 25.51& 31.60 & 46.09\\
HGRN$^{\ddag}$ & 1& 100& 22.43& 58.97& 58.75& 71.00& 48.05& 51.14& 28.07& 31.80 & 46.28\\
% Mamba & 1.4& 300& 20.12&72.53&63.67 & 74.05& 59.13 & 61.25 & 32.85 & 36.40 & \textbf{52.50}\\
Mamba & 1.4& 100& 22.73&68.50&53.27 & 71.44& 48.63 & 53.59 & 29.01 & 31.80 & 47.37\\
RetNet$^{\ddag}$ &1.3&100& 22.73& 63.74& 60.21& 69.53& 48.39& 53.28& 26.19& 30.80&46.86 \\
GLA$^{\ddag}$ &1.3&100& 21.81& 63.00& 61.04& 70.08& 48.00& 51.93& 28.33& 31.40& 46.95\\
MetaLA$_a$ & 1.3& 100& 21.81&65.93&55.50& 70.02&47.32 & 55.01&27.47& 33.00& 47.01\\
MetaLA$_b$ & 1.4& 300& 21.35& 73.63& 56.27& 72.25& 53.58& 58.17& 30.03& 34.60& \textbf{49.99} \\
\bottomrule
\end{tabular}
\end{table}

\textbf{Language Modeling.} We train two scales of MetaLA: 360M/1.4B on the Pile dataset. For baselines of 360M MetaLA, we train them from scratch aligned with our configurations. For the 1.3B MetaLA, we compare it with publicly available models~\cite{RetNet,gla,Mamba,HGRN,pythia}. We implement all the pre-train experiments with GPT-Neox~\cite{gpt-neox-library}. The zero-shot evaluation results on SuperGLUE and Commensense Reasoning  benchmarks are reported in \cref{tab:superGLUE_result} and \cref{tab:commensense_reasoning_result}. Specifically, compared to the LinRNN model HGRN~\cite{HGRN}, MetaLA expands hidden state dimensions and uses Query matrix; Compared to LinFormer model RetNet~\cite{RetNet} with fixed decay, MetaLA uses dynamic decay; Compared to SSMs like Mamba~\cite{Mamba} and LinFormer with dynamic decay GLA~\cite{gla}, MetaLA omits the Key matrix in computation. Results indicate that MetaLA has better performance than these linear models and the Transformer-based Pythia~\cite{pythia}. See \cref{app:addition_exp} for more task results.

\begin{wraptable}[9]{r}{0.4\textwidth}
\vspace{-5mm}
\centering
\small
\setlength{\tabcolsep}{1pt}
\caption{\textbf{Results on ImageNet-1k.} 
% Deit~\cite{deit}, HGRN~\cite{HGRN} and GLA~\cite{gla} represent the baseline of Transformer-based, LinRNN and LinAtt architectures, respectively.
}
\vspace{1mm}
\begin{tabular}{lcccc}
\toprule
% & \multicolumn{2}{c}{Tiny} & \multicolumn{2}{c}{Small} \\
% \cmidrule{2-3} \cmidrule{4-5}
Model & Acc & PS (M) &Acc & PS (M) \\
\midrule
Deit  & 72.20 & 5.7  & 79.90 & 22.0 \\
% TNN   & 72.29 & 6.4  && 79.20 & 23.4 \\
HGRN  & 74.40 & 6.1  & 80.09 & 23.7 \\
GLA  & 72.47 & 6.1 & 79.23 & 23.5 \\
Mamba & 73.39 & 6.1 & 79.60 & 23.7 \\
MetaLA & \textbf{75.33} & 6.1 & \textbf{80.14} &23.7 \\
\bottomrule
\label{tab:image_result}
\end{tabular}
\end{wraptable}

\textbf{Long Sequence Modeling.} LRA is used to evaluate the model's ability in long sequence modeling. We compare MetaLA with Transformer, linear foundation models, and models specifically designed for long sequence modeling. \cref{tab:lra_result} shows that MetaLA achieves comparable results with SOTA linear models, demonstrating that our model effectively preserves the ability to model long sequences. 

\begin{table}[t]
    \centering
    \small
    \setlength{\tabcolsep}{4pt}
\caption{\textbf{Performances Comparison on the Long Range Arena.} We cite baselines from HGRN~\cite{HGRN}.}
    \begin{tabular}{c|cccccc|c}
    \toprule
    Method & LitsOps & Text & Retrieval & Image & Pathfinder & Path-X & AVG.\\
    \midrule
    Transformer~\cite{transformer} & 38.37 & 61.95 & 80.69 & 40.57 & 65.26 & - & 47.81 \\
    % cosFormer~\cite{cosformer} & 36.50 & 67.70 & 83.15 & 51.23 & 71.96 & - & 51.76 \\ 
    % FLASH & 38.70 & 64.10 & 86.10 & 47.40 & 70.25 & - & 51.09 \\
    S4~\cite{s4} & 59.60 & 86.82 & 90.90 & 88.65 & 94.20 & 96.35 & 86.09 \\
    DSS-softmax~\cite{dss} & 60.60 & 84.80 & 87.80 & 85.70 & 84.60 & 87.80 & 81.88 \\
    % DSSEXP~\cite{dss} & 59.70 & 84.60 & 87.60 & 84.90 & 84.70 & 85.60 & 81.18 \\
    % DSSEXP-NO-SCALE & 59.30 & 82.40 & 86.00 & 81.20 & 81.30 & - & 65.03 \\ 
    TNN~\cite{TNN} & 61.04 & 87.90 & 90.97 & 88.24 & 93.00 & 96.10 & 86.21 \\
    S5~\cite{s5} & 62.15 & 89.31 & 91.40 & 88.00 & 95.33 & 98.56 & 87.46 \\
    Mega~\cite{mega} & 63.14 & 90.43 & 91.25 & 90.44 & 96.01 & 97.98 & \textbf{88.21} \\
    SGConv~\cite{SGConv} & 61.45 & 89.20 & 91.11 & 87.97 & 95.46 & 97.83 & 87.17 \\
    LRU~\cite{lru} & 60.20 & 89.40 & 89.90 & 89.00 & 95.10 & 94.20 & 86.30 \\
    HGRN~\cite{HGRN} & 59.95 & 88.14 & 94.23 & 88.69 & 92.92 & 97.50 & 86.91 \\
    Mamba~\cite{Mamba} & 38.02 & 82.98 & 72.14 & 69.82 & 69.26 & 67.32 & 66.59 \\
    % Griffin~\cite{Griffin} & 32.34& 71.75 & 66.58 & 61.15 & 73.38 & 69.53 & 62.45 \\
    % HGRN2 & 60.52 & 88.97 & 95.07 & 89.33 & 93.95 & 98.12 & 87.66 \\
    \midrule
    MetaLA(ours) & 59.34 &89.27 & 91.28& 91.88& 91.66& 96.57 & 86.67 \\
    \bottomrule
    \end{tabular}
    \label{tab:lra_result}
\end{table}

%%ablation
\begin{table}[t]
\centering
\small
\setlength{\tabcolsep}{4pt}
\centering
\caption{\textbf{Ablation studies.} Ablation study results on the 360M model trained for 15B tokens. We compared the model variants on zero-shot experiments of the Comparison on Commonsense Reasoning benchmark. HS: HellaSwag. WG: WinoGrande. OBQA: OpenbookQA.}
\label{tab:example}
\begin{tabular}{l|cccccccc|c}
\toprule
Models  &LOGIQA&WSC273& BOOLQ & PIQA & HS & WG & ARC-c & OBQA & AVG\\
\midrule
MetaLA &22.43&58.24&58.96&63.82&32.18&53.12&23.38&28.00& \textbf{42.52}\\
\quad MetaLA w/o selfaug &21.81&58.61&	57.52&	64.47&	32.56&	49.41&	23.89&	29.00&42.16
\\
% \quad MetaLA conv=2 &22.27&57.88&	58.56&	63.60&32.28&52.09&22.95&26.40&42.00\\ 
\quad MetaLA w/o conv & 22.58	&51.65&	49.36&	52.07&	25.82&	51.22&	26.54&	28.80&	38.51\\
\quad MetaLA w/ key &21.20&	57.88&	49.11&	63.00&	32.99&	50.99&23.63&27.60&40.80 \\
\bottomrule
\end{tabular}
\end{table}

\textbf{Image Classification.} We compare MetaLA with Deit~\cite{deit} (Transformer), HGRN~\cite{HGRN} (LinRNN), GLA~\cite{gla} (LinFormer) and Mamba~\cite{Mamba} (SSM) on ImageNet. As shown in \cref{tab:image_result}, MetaLA performs better than other typical linear models at both scales of 6M and 23M.

\textbf{Ablation Studies.} We conduct ablation studies on the 360M model trained with 15B tokens and compare the results in zero-shot experiments. First, restoring the Key matrix in linear attention does not improve performance while increasing parameters, supporting our theoretical result that $\mathbf{K}$ is not necessary for approximation, and its functional role can be replaced by dynamic decay. Second, the ablations of self-augmentation and short convolution demonstrate the effectiveness of our model design, i.e., enhance tokens' own attention and local interactions.

\vspace{-1mm}
\section{Conclusion and Discussion}
\label{sec:conclusion}
\textbf{Conclusion.} We unify LinFormer/SSM/LinRNN models into the form of linear attention with Query, Key, Vaule matrices, and then analyze whether they can achieve the optimal approximation to the softmax attention function. Theoretical analysis shows that the existing LinFormer/SSM/LinRNN \emph{cannot} achieve optimal approximation. Consequently, we propose the MetaLA architecture, which can achieve functional approximation of softmax attention with least parameters. The performance on various types of tasks verifies the effectiveness of MetaLA.

\textbf{Discussion and Limitation.} Here, we discuss two key questions about approximation perspectives. \romannumeral1) \emph{How does an optimal approximation to softmax attention inspire linear attention design?} In this work, we mainly remove the Key matrix, use dynamic decay, and enhance local interactions and the token's own attention. This is clearly not the end of linear attention optimization. This work focuses on functional approximation, previous studies about the value approximation~\cite{performer,rfa,zhang2024hedgehog} can be further investigated on the basis of our functional approximation theory as well as MetaLA architecture. Additional optimization may include improving the recall ability of limited hidden states or designing better parameter functions. \romannumeral2) \emph{Does approximation to the softmax attention imply that linear attention has an upper capacity limit?} Taken literally, approximation seems to imply that linear attention cannot exceed softmax attention. However, we found better results for linear attention than softmax attention in some experimental results, such as zero-shot and LRA. Similar findings were also reported in previous work \cite{Transnormerllm,gla,Mamba}. We argue that this issue deserves further exploration. For the time being, evaluation metrics that do not adequately reflect the model's capabilities, insufficient training~\cite{never}, and linear attention that does have advantages in certain abilities \cite{gla} are all possibilities.

\section{Acknowledgements}
This work was partially supported by CAS Project for Young Scientists in Basic Research (YSBR-116), National Distinguished Young Scholars (62325603), National Natural Science Foundation of China (62236009, U22A20103, 62441606, 62406322), Beijing Science and Technology Plan (Z241100004224011), Beijing Natural Science Foundation for Distinguished Young Scholars (JQ21015), China Postdoctoral Science Foundation (GZB20240824, 2024M753497), and CAAI-MindSpore Open Fund, developed on OpenI Community.

\medskip

%%%%%%%%%%%%%%%%%%%%%%%%%%%%%%%%%%%%%%%%%%%%%%%%%%%%%%%%%%%%
\bibliographystyle{unsrt}
\bibliography{refs}

\begin{thebibliography}{10}

\bibitem{transformer}
Ashish Vaswani, Noam Shazeer, Niki Parmar, Jakob Uszkoreit, Llion Jones, Aidan~N Gomez, {\L}ukasz Kaiser, and Illia Polosukhin.
\newblock Attention is all you need.
\newblock In {\em Advances in Neural Information Processing Systems}, pages 5998--6008, 2017.

\bibitem{gpt3}
Tom Brown, Benjamin Mann, Nick Ryder, Melanie Subbiah, Jared~D Kaplan, Prafulla Dhariwal, Arvind Neelakantan, Pranav Shyam, Girish Sastry, Amanda Askell, et~al.
\newblock Language models are few-shot learners.
\newblock {\em Advances in Neural Information Processing Systems}, 33:1877--1901, 2020.

\bibitem{llama}
Hugo Touvron, Thibaut Lavril, Gautier Izacard, Xavier Martinet, Marie-Anne Lachaux, Timoth{\'e}e Lacroix, Baptiste Rozi{\`e}re, Naman Goyal, Eric Hambro, Faisal Azhar, et~al.
\newblock Llama: Open and efficient foundation language models.
\newblock {\em arXiv preprint arXiv:2302.13971}, 2023.

\bibitem{llama2}
Hugo Touvron, Louis Martin, Kevin Stone, Peter Albert, Amjad Almahairi, Yasmine Babaei, Nikolay Bashlykov, et~al.
\newblock Llama 2: Open foundation and fine-tuned chat models.
\newblock {\em arXiv preprint arXiv:2307.09288}, 2023.

\bibitem{palm2}
Rohan Anil, Andrew~M Dai, Orhan Firat, Melvin Johnson, Dmitry Lepikhin, Alexandre Passos, Siamak Shakeri, Emanuel Taropa, Paige Bailey, Zhifeng Chen, et~al.
\newblock Palm 2 technical report.
\newblock {\em arXiv preprint arXiv:2305.10403}, 2023.

\bibitem{valle}
Chengyi Wang, Sanyuan Chen, Yu~Wu, Ziqiang Zhang, Long Zhou, Shujie Liu, Zhuo Chen, Yanqing Liu, Huaming Wang, Jinyu Li, et~al.
\newblock Neural codec language models are zero-shot text to speech synthesizers.
\newblock {\em arXiv preprint arXiv:2301.02111}, 2023.

\bibitem{llava}
Haotian Liu, Chunyuan Li, Qingyang Wu, and Yong~Jae Lee.
\newblock Visual instruction tuning.
\newblock {\em Advances in Neural Information Processing Systems}, 36, 2024.

\bibitem{efficient}
Yi~Tay, Mostafa Dehghani, Dara Bahri, and Donald Metzler.
\newblock Efficient transformers: A survey.
\newblock {\em ACM Computing Surveys}, 55(6):1--28, 2022.

\bibitem{katharopoulos2020transformers}
Angelos Katharopoulos, Apoorv Vyas, Nikolaos Pappas, and Fran{\c{c}}ois Fleuret.
\newblock Transformers are rnns: Fast autoregressive transformers with linear attention.
\newblock In {\em International Conference on Machine Learning}, pages 5156--5165. PMLR, 2020.

\bibitem{s4}
Albert Gu, Karan Goel, and Christopher R{\'e}.
\newblock Efficiently modeling long sequences with structured state spaces.
\newblock In {\em The Tenth International Conference on Learning Representations}, 2022.

\bibitem{martin2018parallelizing}
Eric Martin and Chris Cundy.
\newblock Parallelizing linear recurrent neural nets over sequence length.
\newblock In {\em International Conference on Learning Representations}, 2018.

\bibitem{goldstein2024goldfinch}
Daniel Goldstein, Fares Obeid, Eric Alcaide, Guangyu Song, and Eugene Cheah.
\newblock Goldfinch: High performance rwkv/transformer hybrid with linear pre-fill and extreme kv-cache compression.
\newblock {\em arXiv preprint arXiv:2407.12077}, 2024.

\bibitem{transnormer}
Zhen Qin, Xiaodong Han, Weixuan Sun, Dongxu Li, Lingpeng Kong, Nick Barnes, and Yiran Zhong.
\newblock The devil in linear transformer.
\newblock In {\em Proceedings of the 2022 Conference on Empirical Methods in Natural Language Processing}, pages 7025--7041, 2022.

\bibitem{Transnormerllm}
Zhen Qin, Dong Li, Weigao Sun, Weixuan Sun, Xuyang Shen, Xiaodong Han, Yunshen Wei, Baohong Lv, Xiao Luo, Yu~Qiao, and Yiran Zhong.
\newblock Transnormerllm: A faster and better large language model with improved transnormer, 2024.

\bibitem{RetNet}
Yutao Sun, Li~Dong, Shaohan Huang, Shuming Ma, Yuqing Xia, Jilong Xue, Jianyong Wang, and Furu Wei.
\newblock Retentive network: A successor to transformer for large language models.
\newblock {\em arXiv preprint arXiv:2307.08621}, 2023.

\bibitem{gla}
Songlin Yang, Bailin Wang, Yikang Shen, Rameswar Panda, and Yoon Kim.
\newblock Gated linear attention transformers with hardware-efficient training.
\newblock In {\em Forty-first International Conference on Machine Learning}, 2024.

\bibitem{Mamba}
Albert Gu and Tri Dao.
\newblock Mamba: Linear-time sequence modeling with selective state spaces.
\newblock In {\em First Conference on Language Modeling}, 2024.

\bibitem{rwkv}
Bo~Peng, Eric Alcaide, Quentin~Gregory Anthony, Alon Albalak, Samuel Arcadinho, Stella Biderman, Huanqi Cao, et~al.
\newblock Rwkv: Reinventing rnns for the transformer era.
\newblock In {\em The 2023 Conference on Empirical Methods in Natural Language Processing}, 2023.

\bibitem{Griffin}
Soham De, Samuel~L Smith, Anushan Fernando, Aleksandar Botev, George Cristian-Muraru, Albert Gu, Ruba Haroun, Leonard Berrada, Yutian Chen, Srivatsan Srinivasan, et~al.
\newblock Griffin: Mixing gated linear recurrences with local attention for efficient language models.
\newblock {\em arXiv preprint arXiv:2402.19427}, 2024.

\bibitem{lru}
A.~Orvieto, S.~L. Smith, A.~Gu, A.~Fernando, C.~Gulcehre, R.~Pascanu, and S.~De.
\newblock Resurrecting recurrent neural networks for long sequences.
\newblock In {\em International Conference on Machine Learning}, pages 26670--26698, 2023.

\bibitem{HGRN}
Zhen Qin, Songlin Yang, and Yiran Zhong.
\newblock Hierarchically gated recurrent neural network for sequence modeling.
\newblock In {\em Advances in Neural Information Processing Systems}, volume~36, pages 33202--33221, 2023.

\bibitem{h3}
Daniel~Y. Fu, Tri Dao, Khaled~Kamal Saab, Armin~W. Thomas, Atri Rudra, and Christopher Re.
\newblock Hungry hungry hippos: Towards language modeling with state space models.
\newblock In {\em The Eleventh International Conference on Learning Representations}, 2022.

\bibitem{s5}
Jimmy~TH Smith, Andrew Warrington, and Scott Linderman.
\newblock Simplified state space layers for sequence modeling.
\newblock In {\em The Eleventh International Conference on Learning Representations}, 2022.

\bibitem{qin2024various}
Zhen Qin, Weigao Sun, Dong Li, Xuyang Shen, Weixuan Sun, and Yiran Zhong.
\newblock Various lengths, constant speed: Efficient language modeling with lightning attention.
\newblock In {\em Forty-first International Conference on Machine Learning}, 2024.

\bibitem{gu2023ssm}
Albert Gu.
\newblock {\em Modeling Sequences with Structured State Spaces}.
\newblock Stanford University, 2023.

\bibitem{s4d}
Albert Gu, Karan Goel, Ankit Gupta, and Christopher R{\'e}.
\newblock On the parameterization and initialization of diagonal state space models.
\newblock {\em Advances in Neural Information Processing Systems}, 35:35971--35983, 2022.

\bibitem{dss}
Ankit Gupta, Albert Gu, and Jonathan Berant.
\newblock Diagonal state spaces are as effective as structured state spaces.
\newblock {\em Advances in Neural Information Processing Systems}, 35:22982--22994, 2022.

\bibitem{hippo}
Albert Gu, Tri Dao, Stefano Ermon, Atri Rudra, and Christopher R{\'e}.
\newblock Hippo: Recurrent memory with optimal polynomial projections.
\newblock In {\em Advances in Neural Information Processing Systems}, pages 1474--1487, 2020.

\bibitem{simpleSRU}
Tao Lei, Yu~Zhang, Sida~I. Wang, Hui Dai, and Yoav Artzi.
\newblock Simple recurrent units for highly parallelizable recurrence.
\newblock In {\em Proceedings of the 2018 Conference on Empirical Methods in Natural Language Processing}, pages 4470--4481, 2018.

\bibitem{qin2024hgrn}
Zhen Qin, Songlin Yang, Weixuan Sun, Xuyang Shen, Dong Li, Weigao Sun, and Yiran Zhong.
\newblock {HGRN}2: Gated linear {RNN}s with state expansion.
\newblock In {\em First Conference on Language Modeling}, 2024.

\bibitem{performer}
Krzysztof~Marcin Choromanski, Valerii Likhosherstov, David Dohan, Xingyou Song, Andreea Gane, Tamas Sarlos, Peter Hawkins, et~al.
\newblock Rethinking attention with performers.
\newblock In {\em International Conference on Learning Representations}, 2020.

\bibitem{rfa}
Hao Peng, Nikolaos Pappas, Dani Yogatama, Roy Schwartz, Noah Smith, and Lingpeng Kong.
\newblock Random feature attention.
\newblock In {\em International Conference on Learning Representations}, 2020.

\bibitem{cosformer}
Zhen Qin, Weixuan Sun, Hui Deng, Dongxu Li, Yunshen Wei, Baohong Lv, Junjie Yan, Lingpeng Kong, and Yiran Zhong.
\newblock cosformer: Rethinking softmax in attention.
\newblock In {\em International Conference on Learning Representations}, 2022.

\bibitem{strongly-typed-rnn}
David Balduzzi and Muhammad Ghifary.
\newblock Strongly-typed recurrent neural networks.
\newblock In {\em International Conference on Machine Learning}, pages 1292--1300. PMLR, 2016.

\bibitem{NEURIPS2021_cba0a4ee}
Ilya~O Tolstikhin, Neil Houlsby, Alexander Kolesnikov, Lucas Beyer, Xiaohua Zhai, Thomas Unterthiner, Jessica Yung, Andreas Steiner, Daniel Keysers, Jakob Uszkoreit, Mario Lucic, and Alexey Dosovitskiy.
\newblock Mlp-mixer: An all-mlp architecture for vision.
\newblock In {\em Advances in Neural Information Processing Systems}, volume~34, pages 24261--24272, 2021.

\bibitem{10304335}
Weihao Yu, Chenyang Si, Pan Zhou, Mi~Luo, Yichen Zhou, Jiashi Feng, Shuicheng Yan, and Xinchao Wang.
\newblock Metaformer baselines for vision.
\newblock {\em IEEE Transactions on Pattern Analysis and Machine Intelligence}, 46(2):896--912, 2024.

\bibitem{zoology}
Simran Arora, Sabri Eyuboglu, Aman Timalsina, Isys Johnson, Michael Poli, James Zou, Atri Rudra, and Christopher R{\'e}.
\newblock Zoology: Measuring and improving recall in efficient language models.
\newblock {\em arXiv preprint arXiv:2312.04927}, 2023.

\bibitem{pile}
Leo Gao, Stella Biderman, Sid Black, Laurence Golding, Travis Hoppe, Charles Foster, Jason Phang, Horace He, Anish Thite, Noa Nabeshima, et~al.
\newblock The pile: An 800gb dataset of diverse text for language modeling.
\newblock {\em arXiv preprint arXiv:2101.00027}, 2020.

\bibitem{superglue}
Alex Wang, Yada Pruksachatkun, Nikita Nangia, Amanpreet Singh, Julian Michael, Felix Hill, Omer Levy, and Samuel Bowman.
\newblock Superglue: A stickier benchmark for general-purpose language understanding systems.
\newblock {\em Advances in Neural Information Processing Systems}, 32, 2019.

\bibitem{lra}
Yi~Tay, Mostafa Dehghani, Samira Abnar, Yikang Shen, Dara Bahri, Philip Pham, Jinfeng Rao, Liu Yang, Sebastian Ruder, and Donald Metzler.
\newblock Long range arena: A benchmark for efficient transformers.
\newblock In {\em International Conference on Learning Representations}, 2020.

\bibitem{deng2009imagenet}
Jia Deng, Wei Dong, Richard Socher, Li-Jia Li, Kai Li, and Li~Fei-Fei.
\newblock Imagenet: A large-scale hierarchical image database.
\newblock In {\em Proceedings of the IEEE/CVF Conference on Computer Vision and Pattern Recognition (CVPR)}, pages 248--255, 2009.

\bibitem{mad}
Michael Poli, Armin~W Thomas, Eric Nguyen, Pragaash Ponnusamy, Bj{\"o}rn Deiseroth, Kristian Kersting, Taiji Suzuki, Brian Hie, Stefano Ermon, Christopher Re, et~al.
\newblock Mechanistic design and scaling of hybrid architectures.
\newblock In {\em Forty-first International Conference on Machine Learning}.

\bibitem{shen2024scaling}
Xuyang Shen, Dong Li, Ruitao Leng, Zhen Qin, Weigao Sun, and Yiran Zhong.
\newblock Scaling laws for linear complexity language models.
\newblock {\em arXiv preprint arXiv:2406.16690}, 2024.

\bibitem{based}
Simran Arora, Sabri Eyuboglu, Michael Zhang, Aman Timalsina, Silas Alberti, Dylan Zinsley, James Zou, Atri Rudra, and Christopher R{\'e}.
\newblock Simple linear attention language models balance the recall-throughput tradeoff.
\newblock {\em arXiv preprint arXiv:2402.18668}, 2024.

\bibitem{pythia}
Stella Biderman, Hailey Schoelkopf, Quentin~Gregory Anthony, Herbie Bradley, Kyle O’Brien, Eric Hallahan, Mohammad~Aflah Khan, et~al.
\newblock Pythia: A suite for analyzing large language models across training and scaling.
\newblock In {\em International Conference on Machine Learning}, pages 2397--2430. PMLR, 2023.

\bibitem{gpt-neox-library}
Alex Andonian, Quentin Anthony, Stella Biderman, Sid Black, Preetham Gali, Leo Gao, Eric Hallahan, Josh Levy-Kramer, Connor Leahy, Lucas Nestler, Kip Parker, Michael Pieler, Jason Phang, Shivanshu Purohit, Hailey Schoelkopf, Dashiell Stander, Tri Songz, Curt Tigges, Benjamin Thérien, Phil Wang, and Samuel Weinbach.
\newblock {GPT-NeoX: Large Scale Autoregressive Language Modeling in PyTorch}.
\newblock \url{https://www.github.com/eleutherai/gpt-neox}, 2023.

\bibitem{TNN}
Zhen Qin, Xiaodong Han, Weixuan Sun, Bowen He, Dong Li, Dongxu Li, Yuchao Dai, Lingpeng Kong, and Yiran Zhong.
\newblock Toeplitz neural network for sequence modeling.
\newblock In {\em The Eleventh International Conference on Learning Representations}, 2022.

\bibitem{mega}
Xuezhe Ma, Chunting Zhou, Xiang Kong, Junxian He, Liangke Gui, Graham Neubig, Jonathan May, and Luke Zettlemoyer.
\newblock Mega: Moving average equipped gated attention.
\newblock In {\em The Eleventh International Conference on Learning Representations}, 2022.

\bibitem{SGConv}
Yuhong Li, Tianle Cai, Yi~Zhang, Deming Chen, and Debadeepta Dey.
\newblock What makes convolutional models great on long sequence modeling?
\newblock In {\em The Eleventh International Conference on Learning Representations}, 2022.

\bibitem{deit}
Hugo Touvron, Matthieu Cord, Matthijs Douze, Francisco Massa, Alexandre Sablayrolles, and Herve Jegou.
\newblock Training data-efficient image transformers \& distillation through attention.
\newblock In {\em International Conference on Machine Learning}, volume 139, pages 10347--10357, 2021.

\bibitem{zhang2024hedgehog}
Michael Zhang, Kush Bhatia, Hermann Kumbong, and Christopher Re.
\newblock The hedgehog \& the porcupine: Expressive linear attentions with softmax mimicry.
\newblock In {\em The Twelfth International Conference on Learning Representations}, 2023.

\bibitem{never}
Ido Amos, Jonathan Berant, and Ankit Gupta.
\newblock Never train from scratch: Fair comparison of long-sequence models requires data-driven priors.
\newblock In {\em The Twelfth International Conference on Learning Representations}, 2024.

\bibitem{kasai2021finetuning}
Jungo Kasai, Hao Peng, Yizhe Zhang, Dani Yogatama, Gabriel Ilharco, Nikolaos Pappas, Yi~Mao, Weizhu Chen, and Noah~A Smith.
\newblock Finetuning pretrained transformers into rnns.
\newblock In {\em Proceedings of the 2021 Conference on Empirical Methods in Natural Language Processing}, pages 10630--10643, 2021.

\bibitem{choromanski2021hybrid}
Krzysztof~Marcin Choromanski, Han Lin, Haoxian Chen, Arijit Sehanobish, Yuanzhe Ma, Deepali Jain, Jake Varley, Andy Zeng, Michael~S Ryoo, Valerii Likhosherstov, et~al.
\newblock Hybrid random features.
\newblock In {\em International Conference on Learning Representations}, 2021.

\bibitem{nystromformer}
Yunyang Xiong, Zhanpeng Zeng, Rudrasis Chakraborty, Mingxing Tan, Glenn Fung, Yin Li, and Vikas Singh.
\newblock Nystr{\"o}mformer: A nystr{\"o}m-based algorithm for approximating self-attention.
\newblock In {\em Proceedings of the AAAI Conference on Artificial Intelligence}, volume~35, pages 14138--14148, 2021.

\bibitem{schlag2021linear}
Imanol Schlag, Kazuki Irie, and J{\"u}rgen Schmidhuber.
\newblock Linear transformers are secretly fast weight programmers.
\newblock In {\em International Conference on Machine Learning}, pages 9355--9366. PMLR, 2021.

\bibitem{maass1997networks}
Wolfgang Maass.
\newblock Networks of spiking neurons: the third generation of neural network models.
\newblock {\em Neural Networks}, 10(9):1659--1671, 1997.

\bibitem{yao_attention_Pami}
Man Yao, Guangshe Zhao, Hengyu Zhang, Yifan Hu, Lei Deng, Yonghong Tian, Bo~Xu, and Guoqi Li.
\newblock Attention spiking neural networks.
\newblock {\em IEEE Transactions on Pattern Analysis and Machine Intelligence}, 45(8):9393--9410, 2023.

\bibitem{Speck}
Man Yao, Ole Richter, Guangshe Zhao, Ning Qiao, Yannan Xing, Dingheng Wang, Tianxiang Hu, Wei Fang, Tugba Demirci, Michele De~Marchi, et~al.
\newblock Spike-based dynamic computing with asynchronous sensing-computing neuromorphic chip.
\newblock {\em Nature Communications}, 15(1):4464, 2024.

\bibitem{10636118}
Guoqi Li, Lei Deng, Huajin Tang, Gang Pan, Yonghong Tian, Kaushik Roy, and Wolfgang Maass.
\newblock Brain-inspired computing: A systematic survey and future trends.
\newblock {\em Proceedings of the IEEE}, 112(6):544--584, 2024.

\bibitem{yao2024spike}
Man Yao, Jiakui Hu, Zhaokun Zhou, Li~Yuan, Yonghong Tian, Bo~Xu, and Guoqi Li.
\newblock Spike-driven transformer.
\newblock {\em Advances in Neural Information Processing Systems}, 36, 2024.

\bibitem{yao2024spikedriven}
Man Yao, JiaKui Hu, Tianxiang Hu, Yifan Xu, Zhaokun Zhou, Yonghong Tian, Bo~XU, and Guoqi Li.
\newblock Spike-driven transformer v2: Meta spiking neural network architecture inspiring the design of next-generation neuromorphic chips.
\newblock In {\em The Twelfth International Conference on Learning Representations}, 2024.

\bibitem{GAU}
Weizhe Hua, Zihang Dai, Hanxiao Liu, and Quoc Le.
\newblock Transformer quality in linear time.
\newblock In {\em International Conference on Machine Learning}, pages 9099--9117. PMLR, 2022.

\bibitem{mao2022fine}
Huanru~Henry Mao.
\newblock Fine-tuning pre-trained transformers into decaying fast weights.
\newblock In {\em Proceedings of the 2022 Conference on Empirical Methods in Natural Language Processing}, pages 10236--10242, 2022.

\bibitem{gateloop}
Tobias Katsch.
\newblock Gateloop: Fully data-controlled linear recurrence for sequence modeling.
\newblock {\em arXiv preprint arXiv:2311.01927}, 2023.

\bibitem{lssl}
Albert Gu, Isys Johnson, Karan Goel, Khaled Saab, Tri Dao, Atri Rudra, and Christopher R{\'e}.
\newblock Combining recurrent, convolutional, and continuous-time models with linear state space layers.
\newblock {\em Advances in Neural Information Processing Systems}, 34:572--585, 2021.

\bibitem{gss}
Harsh Mehta, Ankit Gupta, Ashok Cutkosky, and Behnam Neyshabur.
\newblock Long range language modeling via gated state spaces.
\newblock In {\em International Conference on Learning Representations}, 2023.

\bibitem{biGS}
Junxiong Wang, Jing Yan, Albert Gu, and Alexander~M Rush.
\newblock Pretraining without attention.
\newblock In {\em Findings of the Association for Computational Linguistics: EMNLP}, pages 58--69, 2023.

\bibitem{Hyena}
Michael Poli, Stefano Massaroli, Eric Nguyen, Daniel~Y Fu, Tri Dao, Stephen Baccus, Yoshua Bengio, Stefano Ermon, and Christopher R{\'e}.
\newblock Hyena hierarchy: Towards larger convolutional language models.
\newblock In {\em International Conference on Machine Learning}, pages 28043--28078. PMLR, 2023.

\bibitem{S4nd}
Eric Nguyen, Karan Goel, Albert Gu, Gordon Downs, Preey Shah, Tri Dao, Stephen Baccus, and Christopher R{\'e}.
\newblock S4nd: Modeling images and videos as multidimensional signals with state spaces.
\newblock {\em Advances in Neural Information Processing Systems}, 35:2846--2861, 2022.

\bibitem{liquid_ssm}
Ramin Hasani, Mathias Lechner, Tsun-Hsuan Wang, Makram Chahine, Alexander Amini, and Daniela Rus.
\newblock Liquid structural state-space models.
\newblock In {\em The Eleventh International Conference on Learning Representations}, 2022.

\bibitem{gru}
Junyoung Chung, Caglar Gulcehre, Kyunghyun Cho, and Yoshua Bengio.
\newblock Empirical evaluation of gated recurrent neural networks on sequence modeling.
\newblock In {\em NIPS Workshop on Deep Learning}, 2014.

\bibitem{lstm}
A.~Graves and A.~Graves.
\newblock Long short-term memory.
\newblock In {\em Supervised Sequence Labelling with Recurrent Neural Networks}, pages 37--45. Springer, 2012.

\bibitem{lstm2}
Felix~A Gers, J{\"u}rgen Schmidhuber, and Fred Cummins.
\newblock Learning to forget: Continual prediction with lstm.
\newblock {\em Neural Computation}, 12(10):2451--2471, 2000.

\bibitem{LDStack}
Shiva Kaul.
\newblock Linear dynamical systems as a core computational primitive.
\newblock {\em Advances in Neural Information Processing Systems}, 33:16808--16820, 2020.

\bibitem{orvieto2023universality}
Antonio Orvieto, Soham De, Caglar Gulcehre, Razvan Pascanu, and Samuel~L Smith.
\newblock On the universality of linear recurrences followed by nonlinear projections.
\newblock {\em arXiv preprint arXiv:2307.11888}, 2023.

\bibitem{Quasirnn}
James Bradbury, Stephen Merity, Caiming Xiong, and Richard Socher.
\newblock Quasi-recurrent neural networks.
\newblock In {\em International Conference on Learning Representations}, 2016.

\bibitem{aft}
Shuangfei Zhai, Walter Talbott, Nitish Srivastava, Chen Huang, Hanlin Goh, Ruixiang Zhang, and Josh Susskind.
\newblock An attention free transformer.
\newblock {\em arXiv preprint arXiv:2105.14103}, 2021.

\bibitem{zheng2023efficient}
Lin Zheng, Jianbo Yuan, Chong Wang, and Lingpeng Kong.
\newblock Efficient attention via control variates.
\newblock In {\em International Conference on Learning Representations}, 2023.

\bibitem{fla}
Songlin Yang and Yu~Zhang.
\newblock Fla: A triton-based library for hardware-efficient implementations of linear attention mechanism.
\newblock \url{https://github.com/sustcsonglin/flash-linear-attention}, 2024.

\bibitem{lambada}
Denis Paperno, Germ{\'a}n Kruszewski, Angeliki Lazaridou, Quan~Ngoc Pham, Raffaella Bernardi, Sandro Pezzelle, Marco Baroni, Gemma Boleda, and Raquel Fern{\'a}ndez.
\newblock The lambada dataset: Word prediction requiring a broad discourse context.
\newblock {\em arXiv preprint arXiv:1606.06031}, 2016.

\bibitem{logiqa}
Jian Liu, Leyang Cui, Hanmeng Liu, Dandan Huang, Yile Wang, and Yue Zhang.
\newblock Logiqa: a challenge dataset for machine reading comprehension with logical reasoning.
\newblock In {\em Proceedings of the Twenty-Ninth International Conference on International Joint Conferences on Artificial Intelligence}, pages 3622--3628, 2021.

\bibitem{wsc273}
Hector Levesque, Ernest Davis, and Leora Morgenstern.
\newblock The winograd schema challenge.
\newblock In {\em Thirteenth International Conference on the Principles of Knowledge Representation and Reasoning}, 2012.

\bibitem{boolq}
Christopher Clark, Kenton Lee, Ming-Wei Chang, Tom Kwiatkowski, Michael Collins, and Kristina Toutanova.
\newblock Boolq: Exploring the surprising difficulty of natural yes/no questions.
\newblock In {\em Proceedings of the 2019 Conference of the North American Chapter of the Association for Computational Linguistics: Human Language Technologies, Volume 1 (Long and Short Papers)}, pages 2924--2936, 2019.

\bibitem{piqa}
Yonatan Bisk, Rowan Zellers, Jianfeng Gao, Yejin Choi, et~al.
\newblock Piqa: Reasoning about physical commonsense in natural language.
\newblock In {\em Proceedings of the AAAI Conference on Artificial Intelligence}, volume~34, pages 7432--7439, 2020.

\bibitem{hellaswag}
Rowan Zellers, Ari Holtzman, Yonatan Bisk, Ali Farhadi, and Yejin Choi.
\newblock Hellaswag: Can a machine really finish your sentence?
\newblock In {\em Proceedings of the 57th Annual Meeting of the Association for Computational Linguistics}. Association for Computational Linguistics, 2019.

\bibitem{winogrande}
Keisuke Sakaguchi, Ronan~Le Bras, Chandra Bhagavatula, and Yejin Choi.
\newblock Winogrande: An adversarial winograd schema challenge at scale.
\newblock {\em Communications of the ACM}, 64(9):99--106, 2021.

\bibitem{arc}
Peter Clark, Isaac Cowhey, Oren Etzioni, Tushar Khot, Ashish Sabharwal, Carissa Schoenick, and Oyvind Tafjord.
\newblock Think you have solved question answering? try arc, the ai2 reasoning challenge.
\newblock {\em arXiv preprint arXiv:1803.05457}, 2018.

\bibitem{openbookqa}
Todor Mihaylov, Peter Clark, Tushar Khot, and Ashish Sabharwal.
\newblock Can a suit of armor conduct electricity? a new dataset for open book question answering.
\newblock In {\em Proceedings of the 2018 Conference on Empirical Methods in Natural Language Processing}, pages 2381--2391, 2018.

\bibitem{gao2021framework}
Leo Gao, Jonathan Tow, Stella Biderman, Sid Black, Anthony DiPofi, Charles Foster, Laurence Golding, Jeffrey Hsu, Kyle McDonell, Niklas Muennighoff, et~al.
\newblock A framework for few-shot language model evaluation.
\newblock {\em Version v0. 0.1. Sept}, page~8, 2021.

\bibitem{flashattention}
Tri Dao, Dan Fu, Stefano Ermon, Atri Rudra, and Christopher R{\'e}.
\newblock Flashattention: Fast and memory-efficient exact attention with io-awareness.
\newblock {\em Advances in Neural Information Processing Systems}, 35:16344--16359, 2022.

\end{thebibliography}

\newpage
\appendix
\setcounter{equation}{0}
\setcounter{table}{0}
\setcounter{figure}{0}
\renewcommand{\thefigure}{A\arabic{figure}}
\renewcommand{\thetable}{A\arabic{table}}
\renewcommand{\thesection}{A\arabic{section}}
\renewcommand{\theequation}{A\arabic{equation}}

\onecolumn

\section{Related Work}
\label{appendix:general_form}
We overview several architectures of three types of linear models, i.e., LinFormer, SSM, and linRNN, and show how they can be specialized from the general form in \cref{sec:general_form}.

\subsection{LinFormer}
LinFormer abandons the softmax form of vanilla attention, instead leveraging the dot product between Keys and Values to achieve linear complexity~\cite{katharopoulos2020transformers}. Advances in LinFormer mainly focus on designing better kernel function~\cite{performer,cosformer,kasai2021finetuning,choromanski2021hybrid,nystromformer,schlag2021linear,zhang2024hedgehog} such as the spiking neurons in a spiking neural network\cite{maass1997networks,yao_attention_Pami,Speck,10636118} can be understood as efficient kernel functions\cite{yao2024spike,yao2024spikedriven}, exploring architectural optimization~\cite{Transnormerllm,RetNet,GAU}, introducing gating mechanism~\cite{gla,rfa,mao2022fine,gateloop}, etc. We show the general form of several LinFormer architectures in \cref{tab:liTr_example}. For the general form of LinFormers, $d_{k}=d_{v}=d$ is the most common choice.

\textbullet\  \textbf{Vanilla LinFormer}~\cite{katharopoulos2020transformers}: Vanilla linear Transformers lack explicit decay. Instead, They choose to maintain two hidden states containing a denominator to normalize attention scores. As for their general form, $\mathbf{q}_t/\mathbf{k}_t=\operatorname{f}_{q/k}(\mathbf{x}_t, \mathbf{\theta}_{q/k})=\phi(\mathbf{x}_{t}\boldsymbol{W}_{Q/K})$ where $\mathbf{\theta}_{q/k}=\boldsymbol{W}_{Q/K}$ and $\phi$ is the nonlinearity. $\mathbf{v}_{t}=\mathbf{x}_{t}\boldsymbol{W}_{V}$. There is no decay or output gate, i.e., $\mathbf{g}_{t}=\mathbf{1}$ and $\mathbf{\alpha}_{t}=\mathbf{1}$.

\textbullet\  \textbf{RetNet}~\cite{RetNet}: Different from vanilla LinFormer, RetNet uses positional encoding $\exp{(jn\mathbf{\theta})}$ and fixed decay $\mathbf{\lambda}$ to control the hidden state. For its general form, $\mathbf{q}_t/\mathbf{k}_t/\mathbf{v}_t=\operatorname{f}_{q/k/v}(\mathbf{x}_t, \mathbf{\theta}_{q/k/v})=\mathbf{x}_{t}\boldsymbol{W}_{Q/K/V}$ where $\mathbf{\theta}_{q/k/v}=\boldsymbol{W}_{Q/K/V}$. $\mathbf{\alpha}_{t}=\mathbf{\lambda}\exp{(j\mathbf{\theta})}$ is the fixed decay vector, where $j$ is imaginary unit. Furthermore, $\mathbf{y}_{t}=[\mathbf{o}_{t}\odot\operatorname{SiLU}(\mathbf{x}_{t}\boldsymbol{W}_{G})]\boldsymbol{W}_{O}$ where $\mathbf{\theta}_{g}=\boldsymbol{W}_{G}$.

\textbullet\  \textbf{TransNormer}~\cite{transnormer}: TransNormer also uses positional encoding $\exp{(jn\mathbf{\theta})}$ and fixed decay $\mathbf{\lambda}$ to control the hidden state. For its general form, $\mathbf{q}_t/\mathbf{k}_t=\operatorname{f}_{q/k}(\mathbf{x}_t, \mathbf{\theta}_{q/k})=\phi(\mathbf{x}_{t}\boldsymbol{W}_{Q/K})$ where $\mathbf{\theta}_{q/k}=\boldsymbol{W}_{Q/K}$ and $\phi$ is the nonlinearity. $\mathbf{v}_{t}=\mathbf{x}_{t}\boldsymbol{W}_{V}$, and $\mathbf{\alpha}_{t}=\mathbf{\lambda}\exp{(j\mathbf{\theta})}$ is the fixed decay vector, where $j$ is imaginary unit. The normalization layer chosen by the paper is $\operatorname{SRMSNorm}$. Furthermore, $\mathbf{y}_{t}=[\mathbf{o}_{t}\odot(\mathbf{x}_{t}\boldsymbol{W}_{U})]\boldsymbol{W}_{O}$ where $\mathbf{\theta}_{g}=\boldsymbol{W}_{U}$.

\textbullet\  \textbf{GLA}~\cite{gla}: GLA considers a data-dependent gating mechanism for LinFormer. For its general form, $\mathbf{q}_t/\mathbf{k}_t/\mathbf{v}_t=\operatorname{f}_{q/k/v}(\mathbf{x}_t, \mathbf{\theta}_{q/k/v})=\mathbf{x}_{t}\boldsymbol{W}_{Q/K/V}$ where $\mathbf{\theta}_{q/k/v}=\boldsymbol{W}_{Q/K/V}$. $\mathbf{\alpha}_t=\operatorname{sigmoid}(\mathbf{x}_{t}\boldsymbol{W}_{\alpha}^1\boldsymbol{W}_{\alpha}^2+\boldsymbol{b}_{\alpha})^{1/\tau}$ is a dynamic decay, where $\boldsymbol{W}_{\alpha}^1$ and $\boldsymbol{W}_{\alpha}^2$ are low-rank matrices.
Furthermore, $\mathbf{y}_{t}=[\mathbf{o}_{t}\odot\operatorname{SiLU}(\mathbf{x}_{t}\boldsymbol{W}_{r}+\boldsymbol{b}_{r})]\boldsymbol{W}_{O}$ where $\mathbf{\theta}_{g}=[\boldsymbol{W}_{r}, \boldsymbol{b}_{r}]$.

\begin{center}
\begin{table}[htbp]
\centering
\caption{From general form to LinFormers.}
\label{tab:liTr_example}
% \resizebox{\linewidth}{!}{
\setlength{\tabcolsep}{1.3mm}{
\begin{tabular}{l|cccc}
\toprule
Models    & \makecell[c]{Vanilla\\LinFormer\cite{katharopoulos2020transformers}} & RetNet\cite{RetNet} & TransNormer\cite{transnormer} & GLA\cite{gla}\\
\midrule
$\operatorname{f}_{q}(\mathbf{x}_{t}, \mathbf{\theta}_{q})$ & $\phi(\mathbf{x}_{t}\boldsymbol{W}_{Q})$   & $\mathbf{x}_{t}\boldsymbol{W}_{Q}$      & $\phi(\mathbf{x}_{t}\boldsymbol{W}_{Q})$ & $\mathbf{x}_{t}\boldsymbol{W}_{Q}$ \\
$\operatorname{f}_{k}(\mathbf{x}_{t}, \mathbf{\theta}_{k})$ & $\phi(\mathbf{x}_{t}\boldsymbol{W}_{K})$   & $\mathbf{x}_{t}\boldsymbol{W}_{K}$      & $\phi(\mathbf{x}_{t}\boldsymbol{W}_{K})$ & $\mathbf{x}_{t}\boldsymbol{W}_{K}$ \\
$\operatorname{f}_{v}(\mathbf{x}_{t}, \mathbf{\theta}_{v})$ & $\mathbf{x}_{t}\boldsymbol{W}_{V}$   & $\mathbf{x}_{t}\boldsymbol{W}_{V}$  & $\mathbf{x}_{t}\boldsymbol{W}_{V}$ & $\mathbf{x}_{t}\boldsymbol{W}_{V}$ \\
$\operatorname{f}_{\alpha}(\mathbf{x}_{t}, \mathbf{\theta}_{\alpha})$ & $\mathbf{1}$   & $\mathbf{\lambda}\exp{(j\mathbf{\theta})}$      & $\mathbf{\lambda}\exp{(j\mathbf{\theta})}$ & $\sigma(\mathbf{x}_{t}\boldsymbol{W}_{\alpha}^1\boldsymbol{W}_{\alpha}^2+\boldsymbol{b}_{\alpha})$ \\
$\operatorname{f}_{g}(\mathbf{x}_{t}, \mathbf{\theta}_{g})$ & $\mathbf{1}$  & $\sigma(\mathbf{x}_{t}\boldsymbol{W}_{G})$ & $\mathbf{x}_{t}\boldsymbol{W}_{U}$ & $\sigma(\mathbf{x}_{t}\boldsymbol{W}_{r}+\boldsymbol{b}_r)$\\ \midrule
% Dimension & \makecell[c]{$d_{k}=d_{v}=d$}  & \makecell[c]{$d_{k}=d_{v}=d$} & \makecell[c]{$d_{k}=d_{v}=d$} & \makecell[c]{$d_{k}=d/2\\d_{v}=d$}\\
Dimension & \multicolumn{3}{c}{$d_{k}=d_{v}=d$} & \makecell[c]{$d_{k}=d/2$\\$d_{v}=d$}\\
\bottomrule
\end{tabular}
}
\end{table}
\end{center}

\subsection{SSM}
SSM represents an alternative linear architecture to Transformer, based on the 
state space equations~\cite{gu2023ssm,lssl}. Typical SSMs~\cite{s4d,dss,gss,biGS,Hyena,S4nd} employ structured transition matrix and special initialization strategies such as HiPPO~\cite{hippo} to efficiently enhance long-range dependencies. Mamba~\cite{Mamba} advances SSMs by introducing selection mechanism, i.e. input-dependent parameters~\cite{liquid_ssm} and designs a hardware-aware parallel algorithm. We show the general form of several SSM architectures in \Cref{tab:ssm_example}. All of them are SISO (Single-Input, Single-Output) except S5~\cite{s5}. We omit original S4~\cite{s4} model because it focuses on Diagonal Plus Low-Rank (DPLR) transition matrix $\boldsymbol{A}$ rather than fully diagonal one, which is difficult to implement due to computational inefficiency. For most SSMs, $d_v=H$, which means each channel uses independent parameters ($\mathbf{q}_t^h, \mathbf{\alpha}_t^h,\mathbf{k}_t^h$). 

\textbullet\  \textbf{DSS}~\cite{dss}: DSS is the first research to show the effectiveness of diagonal structured SSM with fixed parameters, which means all parameters are constant through time. For the general form of DSS, $d_{v}=d=H, \frac{d_{k}}{H}=N$, where $N$ is an expansion. $\mathbf{Q}, \mathbf{K},\mathbf{\Lambda}$ are independent of $\mathbf{x}_{t}$ and are determined by learnable parameters $[\boldsymbol{A},\boldsymbol{B},\boldsymbol{C}, \mathbf{\Delta}]$, using ZOH method to discretize. Furthermore, $\mathbf{x}_{t}=\mathbf{v}_{t}$, $\mathbf{g}_{t}=\mathbf{1}$ and $\boldsymbol{W}_{O}=\mathbf{I}$, where $\mathbf{I}$ denotes the identity matrix.

\textbullet\  \textbf{S4D}~\cite{s4d}: S4D is also a diagonal structured SSM with fixed parameters. It theoretically explain and expand  the effectiveness of DSS and focus on parameterization and initialization of diagonal SSM. It can be discretized using ZOH or Bilinear method, resulting different $\operatorname{f}_{k/\alpha}$ compared with DSS. Similarly, $\mathbf{x}_{t}=\mathbf{v}_{t}$, $\mathbf{g}_{t}=\mathbf{1}$ and $\boldsymbol{W}_{O}=\mathbf{I}$.

\textbullet\  \textbf{H3}~\cite{h3}: H3 combines S4D with LinFormer and adds some architectural optimization such as local convolution and output gates. The core layer of H3 is S4D, and thus it shares the same general form with S4D.

\textbullet\  \textbf{S5}~\cite{s5}: S5 uses MIMO (Multi-Input, Multi-Output) SSM with fixed parameters. For its general form, $d_{v}=Nd=H, \frac{d_{k}}{H}=1$, where $N$ is an expansion much smaller than that of SISO models. $\mathbf{\alpha}_{t}=\exp{(\Delta\boldsymbol{A})}$ is the fixed decay and $\mathbf{v}_t=\operatorname{f}_{v}(\mathbf{x}_t, \mathbf{\theta}_{v})=\mathbf{x}_{t}\overline{\boldsymbol{B}}$ where $\overline{\boldsymbol{B}}=(\Delta\boldsymbol{A})^{-1}(\exp{(\Delta\boldsymbol{A})}-\mathbf{I})\cdot \Delta\boldsymbol{B}$, i.e., it chooses ZOH method. Similar to previous SSM models, all the parameters are independent of $\mathbf{x}_{t}$ and are determined by learnable parameters $[\boldsymbol{A},\boldsymbol{B},\boldsymbol{C}, \mathbf{\Delta}]$. Furthermore, $\mathbf{q}_{t}=\mathbf{k}_{t}=\mathbf{g}_{t}=\mathbf{1}$ and $\boldsymbol{W}_{O}=\boldsymbol{C}$.

\textbullet\  \textbf{Mamba}~\cite{Mamba}: Mamba advances SSMs with selection, which means all parameters are time-varying and input-dependent. For the general form of Mamba, $d_{v}=d=H, \frac{d_{k}}{H}=N$. $\mathbf{q}_{t}=\boldsymbol{C}_{t}=\mathbf{x}_{t}\boldsymbol{W}_{C}$. Similarly, $\mathbf{k}_{t}=\boldsymbol{B}_{t}=\mathbf{x}_{t}\boldsymbol{W}_{B}\Delta_t$. The decay term can be written as $\mathbf{\alpha}_t=\exp(\Delta_t\boldsymbol{A})$. $\mathbf{x}_{t}=\mathbf{v}_{t}$, $\mathbf{g}_{t}=\mathbf{1}$ and $\boldsymbol{W}_{O}=\mathbf{I}$. Furthermore, $\Delta_t=\operatorname{softplus}(\mathbf{x}_{t}\boldsymbol{W}_{lora}+\boldsymbol{b}_{\Delta})$.
\begin{center}
\begin{table}[htbp]
\centering
\caption{From general form to SSMs.}
\label{tab:ssm_example}
\resizebox{\linewidth}{!}{
\setlength{\tabcolsep}{1.3mm}{
\begin{tabular}{l|cccc}
\toprule
Models    & \makecell[c]{DSS\cite{dss}, S4D\cite{s4d}, H3\cite{h3}\\(ZOH)} & \makecell[c]{S4D\cite{s4d}, H3\cite{h3}\\(Bilinear)}  & Mamba\cite{Mamba}& S5\cite{s5}\\
\midrule
$\operatorname{f}_{q}(\mathbf{x}_{t}, \mathbf{\theta}_{q})$ & $\boldsymbol{C}$   & $\boldsymbol{C}$   & $\mathbf{x}_{t}\boldsymbol{W}_{C}$& $\mathbf{1}$ \\
$\operatorname{f}_{k}(\mathbf{x}_{t}, \mathbf{\theta}_{k})$ & $(\Delta\boldsymbol{A})^{-1}(\exp{(\Delta\boldsymbol{A})}-\mathbf{I})\Delta\boldsymbol{B}$   & $(\mathbf{I}-\frac{\Delta}{2}\boldsymbol{A})^{-1}\Delta\boldsymbol{B}$  & $\Delta_t(\mathbf{x}_{t}\boldsymbol{W}_{B})$ & $\mathbf{1}$\\
$\operatorname{f}_{v}(\mathbf{x}_{t}, \mathbf{\theta}_{v})$ & $\mathbf{x}_{t}$   & $\mathbf{x}_{t}$   & $\mathbf{x}_{t}$  & $\mathbf{x}_{t}\overline{\boldsymbol{B}}$\\
$\operatorname{f}_{\alpha}(\mathbf{x}_{t}, \mathbf{\theta}_{\alpha})$ & $\exp{(\Delta\boldsymbol{A})}$   & $(\mathbf{I}-\frac{\Delta}{2}\boldsymbol{A})^{-1}(\mathbf{I}+\frac{\Delta}{2}\boldsymbol{A})$   & $\exp{(\Delta_t\boldsymbol{A})}$ & $\exp{(\Delta\boldsymbol{A})}$ \\
$\operatorname{f}_{g}(\mathbf{x}_{t}, \mathbf{\theta}_{g})$ & $\mathbf{1}$  & $\mathbf{1}$ & $\mathbf{1}$ & $\mathbf{1}$\\ \midrule
Dimension & \multicolumn{3}{c}{\makecell[c]{$d_{v}=d=H$\\ $d_{k}/H=N$}}& \makecell[c]{$d_{v}=Nd=H$\\$d_{k}/H=1$}\\
\bottomrule
\end{tabular}
}
}
\end{table}
\end{center}

\subsection{LinRNN}
LinRNN is a variant of the vanilla Recurrent Neural Network (RNN) model ~\cite{gru,lstm,lstm2} that eliminates the nonlinearity within the recurrence and employs element-wise product instead of matrix multiplication \cite{martin2018parallelizing,simpleSRU,LDStack,orvieto2023universality}. Recent works have made efforts to explore more effective recurrence ~\cite{rwkv,lru,Quasirnn,aft} and design more advanced gating mechanisms ~\cite{Griffin,HGRN,strongly-typed-rnn}. We show the general form of several LinRNN architectures in \cref{tab:rnn_example}. LinRNNs generally implement $d_{k}=d_{v}=H=d$ except LRU~\cite{lru}, which means each channel uses independent parameters ($\mathbf{q}_t^h, \mathbf{\alpha}_t^h,\mathbf{k}_t^h$). They maintain a smaller unexpanded hidden state, which is a 1D vector and can be seen as a special case with head dimension size $d_{k}'=1$.

\textbullet\  \textbf{RWKV-4}~\cite{rwkv}: RWKV-4 is a linear RNN model which includes fixed decay and output gates. Moreover, RWKV-4 treats the present token differently and uses token shift operation (we omit for clear illustration). For its general form, $\mathbf{k}_t=\exp{(\mathbf{x}_{t}\boldsymbol{W}_{K})}$ where $\mathbf{\theta}_{k}=\boldsymbol{W}_{K}$. $\mathbf{v}_{t}=\mathbf{x}_{t}\boldsymbol{W}_{V}$ while $\mathbf{q}_{t}=\mathbf{1}$. $\mathbf{\alpha}_t=\exp{(-\boldsymbol{W})}$ is fixed decay with the learnable parameter $\mathbf{\theta}_{\alpha}=\boldsymbol{W}$. Furthermore, $\mathbf{y}_{t}=[\mathbf{o}_{t}\odot\operatorname{sigmoid}(\mathbf{x}_{t}\boldsymbol{W}_{r})]\boldsymbol{W}_{O}$ where $\mathbf{\theta}_{g}=\boldsymbol{W}_{r}$.

\textbullet\  \textbf{GLRU}~\cite{HGRN}: Gated linear recurrent unit (GLRU) mentioned in HGRN is a linear RNN model including input/forget/output gates. For the general form of GLRU, $\mathbf{\alpha}_t/ \mathbf{k}_t=\operatorname{f}_{\alpha/k}(\mathbf{x}_t, \mathbf{\theta}_{\alpha/k})=\sigma(\mathbf{x}_{t}\boldsymbol{W}_{\alpha/K}+\boldsymbol{b}_{\alpha/K})$ where $\mathbf{\theta}_{\alpha/k}=[\boldsymbol{W}_{\alpha/K}, \boldsymbol{b}_{\alpha/K}]$ and $\sigma$ is the sigmoid function. $\mathbf{v}_{t}=\operatorname{SiLU}(\mathbf{x}_{t}\boldsymbol{W}_{V}+\boldsymbol{b}_{V})$ while $\mathbf{q}_{t}=\mathbf{1}$. Furthermore, $\mathbf{y}_{t}=[\mathbf{o}_{t}\odot\operatorname{SiLU}(\mathbf{x}_{t}\boldsymbol{W}_{g}+\boldsymbol{b}_{g})]\boldsymbol{W}_{O}$ where $\mathbf{\theta}_{g}=[\boldsymbol{W}_{g}, \boldsymbol{b}_{g}]$.

\textbullet\  \textbf{Griffin}~\cite{Griffin}: Griffin is an RNN model with gated linear recurrence. Moreover, it ties dynamic decay and Key as we do. For the general form of Griffin, $\mathbf{\alpha}_t=\operatorname{f}_{\alpha}(\mathbf{x}_t, \mathbf{\theta}_{\alpha})=\boldsymbol{A}^{c\cdot\sigma(\mathbf{x}_{t}\boldsymbol{W}_{a}+\boldsymbol{b}_{a})}$ where $\mathbf{\theta}_{\alpha}=[\boldsymbol{A}, \boldsymbol{W}_{a}, \boldsymbol{b}_{a}]$. Then $\mathbf{k}_{t}=\sqrt{1-\alpha_t^2}$ to replace Key's role. $\mathbf{v}_{t}=\sigma(\mathbf{x}_{t}\boldsymbol{W}_{x}+\boldsymbol{b}_{x})\odot \mathbf{x}_{t}$ which contains an input gate additionally. $\mathbf{q}_{t}=\mathbf{1}$ and $\mathbf{y}_{t}=[\mathbf{o}_{t}\odot\operatorname{GeLU}(\mathbf{x}_{t}\boldsymbol{W}_{g}+\boldsymbol{b}_{g})]\boldsymbol{W}_{O}$ where $\mathbf{\theta}_{g}=[\boldsymbol{W}_{g}, \boldsymbol{b}_{g}]$.

\textbullet\  \textbf{LRU}~\cite{lru}: LRU is a linear and diagonal RNN model with fixed decay, inspired by deep SSMs' success. For its general form, $d_{v}=Nd=H, \frac{d_{k}}{H}=1$, where $N$ is a small expansion. $\mathbf{\alpha}_{t}=\boldsymbol{A}$ is the fixed decay and $\mathbf{v}_t=\operatorname{f}_{v}(\mathbf{x}_t, \mathbf{\theta}_{v})=\mathbf{x}_{t}\boldsymbol{B}$ where $\mathbf{\theta}_{v}={\boldsymbol{B}}$. Furthermore, $\mathbf{q}_{t}=\mathbf{k}_{t}=\mathbf{g}_{t}=\mathbf{1}$ and $\boldsymbol{W}_{O}=\boldsymbol{C}$.

\begin{center}
\begin{table}[htbp]
\centering
\caption{From general form to LinRNN.}
\label{tab:rnn_example}
\begin{tabular}{l|cccc}
\toprule
Models    & RWKV-4\cite{rwkv} & GLRU\cite{HGRN} & Griffin\cite{Griffin} & LRU\cite{lru}\\
\midrule
$\operatorname{f}_{q}(\mathbf{x}_{t}, \mathbf{\theta}_{q})$ & $\mathbf{1}$   & $\mathbf{1}$      & $\mathbf{1}$ & $\mathbf{1}$ \\
$\operatorname{f}_{k}(\mathbf{x}_{t}, \mathbf{\theta}_{k})$ & $\exp{(\mathbf{x}_{t}\boldsymbol{W}_{K})}$   & $\sigma(\mathbf{x}_{t}\boldsymbol{W}_{i}+\boldsymbol{b}_{i})$      & $\sqrt{\mathbf{1}-\alpha_t^2}$ & $\mathbf{1}$ \\
$\operatorname{f}_{v}(\mathbf{x}_{t}, \mathbf{\theta}_{v})$ & $\mathbf{x}_{t}\boldsymbol{W}_{V}$   & $\phi(\mathbf{x}_{t}\boldsymbol{W}_{c}+\boldsymbol{b}_{c})$      & $\sigma(\mathbf{x}_{t}\boldsymbol{W}_{x}+\boldsymbol{b}_{x})\odot \mathbf{x}_{t}$ & $\mathbf{x}_{t}\boldsymbol{B}$ \\
$\operatorname{f}_{\alpha}(\mathbf{x}_{t}, \mathbf{\theta}_{\alpha})$ & $\exp{(-\boldsymbol{W})}$   & $\sigma(\mathbf{x}_{t}\boldsymbol{W}_{f}+\boldsymbol{b}_{f})$      & $\boldsymbol{A}^{c\cdot\sigma(\mathbf{x}_{t}\boldsymbol{W}_{a}+\boldsymbol{b}_{a})}$ & $\boldsymbol{A}$ \\
$\operatorname{f}_{g}(\mathbf{x}_{t}, \mathbf{\theta}_{g})$ & $\sigma(\mathbf{x}_{t}\boldsymbol{W}_{r})$  & $\phi(\mathbf{x}_{t}\boldsymbol{W}_{g}+\boldsymbol{b}_{g})$ & $\sigma(\mathbf{x}_{t}\boldsymbol{W}_{g}+\boldsymbol{b}_{g})$ & $\mathbf{1}$\\ \midrule
% Dimension & \makecell[c]{$d_{k}=d_{v}=d$}  & \makecell[c]{$d_{k}=d_{v}=d$} & \makecell[c]{$d_{k}=d_{v}=d$} & \makecell[c]{$d_{k}=d/2\\d_{v}=d$}\\
Dimension & \multicolumn{3}{c}{$d_{k}=d_{v}=d=H$} & \makecell[c]{$d_{v}=Nd=H$\\ $d_k/H=1$}\\
\bottomrule
\end{tabular}
\end{table}
\end{center}

\subsection{Approximation to Softmax Attention}
Approximation to softmax attention includes two parts: value and functional approximation. Some previous works of LinFormer~\cite{rfa,performer,choromanski2021hybrid,zheng2023efficient} were devoted to designing better kernel map $\phi$ to approximate the value of SoftAttMap via randomized features, i.e., $\phi(\mathbf{q}_t)\phi^{\top}(\mathbf{k}_s)\approx\operatorname{exp}(\mathbf{q}_t\mathbf{k}_s^{\top})$. Recent work~\cite{zhang2024hedgehog} further introduces attention weight distillation loss to minimize the cross-entropy loss between the computed linear attention weights and those that would have been computed via softmax attention, in order to achieve strict value approximation. Most of them follow typical designs in \cite{katharopoulos2020transformers} and use only $\mathbf{Q}$ and $\mathbf{K}$. However, in this work, we find that only considering value approximation is insufficient. Different from previous studies, we investigate functional approximation of SoftAttMap. Through theoretical analysis, we show that these works cannot achieve the functionality of softmax attention due to lack of dynamic decay $\mathbf{\Lambda}_t$. Therefore, their purpose of achieving value approximation cannot be achieved as well. It is worth noting that this is not a denial of the significance of previous works about value approximation. We can also get inspiration from them. Functional approximation is the basis and prerequisite for value approximation.

% So value approximation can be realized theoretically with our MetaLA architecture, which we leave for future work.

\section{Analysis of Optimal Linear Approximation Conditions}
\label{appendix:detailed_proof}

In this section, we show that existing linear models are not optimal linear approximation to softmax attention map, according to \cref{def:opt_app}, and prove that linear models with only Query and dynamic decay can satisfy the optimal linear approximation conditions. In \cref{def:opt_app}, C0 underlines computational and memory efficiency. C1 and C2 consider memory and modeling ability of linear attention. C3 is our expectation to seek the least parameters on the premise that previous three conditions are met. We first discuss the requirements to satisfy each conditions, based on the general form of linear attention in \cref{sec:general_form}. All the linear models own linear complexity and satisfy C0, so we analyze C1, C2 and C3 in \cref{appendix:proof_C1},\cref{appendix:proof_C2} and \cref{appendix:proof_C3}, respectively. Then we synthesize all the discussions and present the conclusions in \cref{appendix:conclusions}.

For notation we use $\mathbf{Q},\mathbf{K},\mathbf{V},\mathbf{\Lambda}_t$ to denote Query, Key, Value and Decay matrices respectively. Considering decay can be either dynamic or fixed, here we use subscript $t$ to distinguish, i.e., $\mathbf{\Lambda}$/$\mathbf{\Lambda}_t$ denote fixed/dynamic decay. Corresponding $\mathbf{q}_t, \mathbf{k}_t, \mathbf{v}_t, \mathbf{\alpha}$/$\mathbf{\alpha}_t$ are query, key, value and fixed/dynamic decay of each timestep.

\subsection{Analysis of Dynamic Memory Ability (C1)}\label{appendix:proof_C1}
\begin{proposition}
\label{prop:C1}
\textbf{Only models with dynamic decay can satisfy C1 (Dynamic memory ability).} Let $\mathbf{S}_t\in\mathcal{R}^{d_k\times {{d}_{v}}}$ be hidden state of general linear attention (see \cref{sec:general_form}). At a time $t$, the information about $\mathbf{v}_{t_1},\dots,\mathbf{v}_{t_{d_k}}$ is successfully stored in $\mathbf{S}_{t}$ ($t_1,\dots,t_{d_k}\leq t$), which means $(\mathbf{S}_{t})_{i, :}=\mathbf{v}_{t_i}$. When the new important input $\mathbf{v}_{t+1}$ arrives, only models with dynamic decay can substitute historical unimportant $\mathbf{v}_{t_1}$  by $\mathbf{v}_{t+1}$ successfully, i.e., obtain $(\mathbf{S}_{t+1})_{1, :}=\mathbf{v}_{t+1}$.
\end{proposition}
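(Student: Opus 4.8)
The plan is to reduce the matrix recurrence of \cref{eq:general_recurrent} to a scalar identity per channel, and then show that the adaptive overwrite $(\mathbf{S}_{t+1})_{1,:}=\mathbf{v}_{t+1}$ forces the first decay coordinate to vanish \emph{selectively}, which is possible only when $\mathbf{\alpha}_{t+1}$ is allowed to depend on the input. Reading \cref{eq:general_recurrent} row by row (dropping the head index), the $i$-th row evolves as
\begin{equation}
(\mathbf{S}_{t+1})_{i,:}=(\mathbf{\alpha}_{t+1})_{i}\,(\mathbf{S}_{t})_{i,:}+(\mathbf{k}_{t+1})_{i}\,\mathbf{v}_{t+1},
\end{equation}
since $\operatorname{diag}(\mathbf{\alpha}_{t+1})$ scales row $i$ by its $i$-th entry and the outer product $\mathbf{k}_{t+1}^{\top}\mathbf{v}_{t+1}$ contributes $(\mathbf{k}_{t+1})_{i}\mathbf{v}_{t+1}$ to that row. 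Substituting the storage hypothesis $(\mathbf{S}_{t})_{1,:}=\mathbf{v}_{t_1}$ gives the first row as $(\mathbf{S}_{t+1})_{1,:}=(\mathbf{\alpha}_{t+1})_{1}\mathbf{v}_{t_1}+(\mathbf{k}_{t+1})_{1}\mathbf{v}_{t+1}$.

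First I would pin down the target. We want $(\mathbf{\alpha}_{t+1})_{1}\mathbf{v}_{t_1}+\big((\mathbf{k}_{t+1})_{1}-1\big)\mathbf{v}_{t+1}=\mathbf{0}$. Treating $\mathbf{v}_{t_1}$ and $\mathbf{v}_{t+1}$ as generic, i.e.\ linearly independent value vectors — the relevant case, since a genuinely new important token carries information absent from the forgotten one — coefficient matching forces $(\mathbf{\alpha}_{t+1})_{1}=0$ and $(\mathbf{k}_{t+1})_{1}=1$. Thus exact substitution in row $1$ is equivalent to zeroing the first decay coordinate at step $t+1$ while writing the new value into that channel.

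Next I would establish necessity of dynamic decay by a dichotomy. If the decay is fixed, $(\mathbf{\alpha}_{t+1})_{1}$ is a constant independent of $\mathbf{x}_{t+1}$ and of the history. If this constant is nonzero, the residual $(\mathbf{\alpha}_{t+1})_{1}\mathbf{v}_{t_1}$ never cancels, so exact overwrite fails whenever $\mathbf{v}_{t_1}\neq\mathbf{0}$ is not parallel to $\mathbf{v}_{t+1}$; if the constant is zero, channel $1$ is erased at \emph{every} timestep, so it can never retain an important value when retention is required. Either way a fixed decay cannot both forget on demand and preserve on demand, which is exactly the adaptivity C1 asks for. By contrast, with dynamic decay $\mathbf{\alpha}_{t+1}=\operatorname{f}_{\alpha}(\mathbf{x}_{t+1},\mathbf{\theta}_{\alpha})$ one may realize $(\mathbf{\alpha}_{t+1})_{1}=0$ and $(\mathbf{k}_{t+1})_{1}=1$ precisely at this step while setting $(\mathbf{\alpha}_{t+1})_{i}=1$ and $(\mathbf{k}_{t+1})_{i}=0$ for $i>1$, leaving the other stored rows untouched; this yields $(\mathbf{S}_{t+1})_{1,:}=\mathbf{v}_{t+1}$ and $(\mathbf{S}_{t+1})_{i,:}=\mathbf{v}_{t_i}$ for $i>1$, establishing sufficiency. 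In MetaLA the write coefficient comes for free, since $\mathbf{k}_{t+1}=\mathbf{1}-\mathbf{\alpha}_{t+1}$ gives $(\mathbf{k}_{t+1})_{1}=1$ automatically once $(\mathbf{\alpha}_{t+1})_{1}=0$.

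The hard part will be justifying the coefficient-matching step rigorously, i.e.\ handling the genericity of the value vectors: the clean equivalence ``overwrite $\Leftrightarrow(\mathbf{\alpha}_{t+1})_{1}=0$'' holds when $\mathbf{v}_{t_1},\mathbf{v}_{t+1}$ are linearly independent, and I must argue this is the case of interest (a new important token contributes a value not already spanned by the forgotten one) rather than a measure-zero degeneracy in which a nonzero decay could also succeed. A secondary point to state carefully is the meaning of the storage hypothesis $(\mathbf{S}_{t})_{i,:}=\mathbf{v}_{t_i}$: in the raw recurrence each row is a decay-weighted sum over all past values, so this idealized one-value-per-row state is precisely what the model is assumed to have reached, and the argument shows only dynamic decay can maintain such a state under adaptive updates.
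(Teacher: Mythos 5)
Your proof is correct and follows essentially the same route as the paper's: read the recurrence of \cref{eq:general_recurrent} row by row, observe that row $1$ becomes $(\mathbf{\alpha}_{t+1})_{1}\mathbf{v}_{t_1}+(\mathbf{k}_{t+1})_{1}\mathbf{v}_{t+1}$, show that a data-independent decay leaves an uncancellable residual, and show that an input-dependent decay can zero out exactly the first coordinate to overwrite that row. Your version is in fact slightly tighter than the paper's in three respects: you make the implicit genericity assumption (linear independence of $\mathbf{v}_{t_1}$ and $\mathbf{v}_{t+1}$) explicit via coefficient matching, you cover the fixed-zero-decay case that the paper excludes by restricting fixed decay to $(0,1)$, and your sufficiency construction (setting $(\mathbf{k}_{t+1})_{i}=0$, $(\mathbf{\alpha}_{t+1})_{i}=1$ for $i>1$) also preserves the other stored rows, whereas the paper's choice $\mathbf{k}_{t+1}=\mathbf{1}$ only guarantees the conclusion for row $1$.
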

\begin{proof}
Without loss of generality, suppose $t=d_k$ and $t_i=i(i=1,\dots,d_k)$, where $d_k$ is Key dimension. That is, the information about $\mathbf{v}_1,\dots,\mathbf{v}_{d_k}$ is successfully stored, which means the $i$-th row of $\mathbf{S}_{d_k}$, $(\mathbf{S}_{d_k})_{i, :}=\mathbf{v}_i$. Suppose $\mathbf{v}_{1}$ is unimportant information need to be substituted. Now the new and important input $\mathbf{v}_{d_k+1}$ arrives, models with general form~\cref{eq:general_recurrent} update $\mathbf{S}_{d_k}$ as follows (heads are omitted for simplicity):
\begin{align}
    \mathbf{S}_{d_k+1} &= \operatorname{diag}(\mathbf{\alpha}_{d_k+1})\mathbf{S}_{d_k} + (\mathbf{k}_{d_k+1})^{\mathsf{T}}\mathbf{v}_{d_k+1}.
\end{align}
We will check whether the model can obtain $(\mathbf{S}_{d_k+1})_{1, :}=\mathbf{v}_{d_k+1}$, with fixed/dynamic decay or without decay. In the following  discussion we let $\mathbf{k}_{d_k+1}=\mathbf{1}$.

\romannumeral1) \emph{Models with fixed decay}, which means $\mathbf{\alpha}_{d_k+1}=\mathbf{\alpha}\in(0,1)$. The model can update and obtain $(\mathbf{S}_{d_k+1})_{1, :}=\mathbf{\alpha}\mathbf{v}_{1}+\mathbf{v}_{d_k+1}$. This means the model with fixed decay can only store most recent several tokens rather than store most important tokens without information loss, i.e., it has ability to forget but no ability to forget and memorize dynamically.

\romannumeral2) \emph{Models without decay}, which means $\mathbf{\alpha}_{d_k+1}=1$. This case results in $(\mathbf{S}_{d_k+1})_{1, :}=\mathbf{v}_{1}+\mathbf{v}_{d_k+1}$, which means they have no mechanism to erase old information. So such linear models have no ability to forget and memorize dynamically, and what they can only do is information blending. Thus the attention at time $t$ is distracted by all the information before, making it hard to focus on and approximate most important ones. After normalization, the attention distribution will have a relative high entropy along time dimension, which is often referred as attention dilution problem.

\romannumeral3) \emph{Models with dynamic decay}, which means $\mathbf{\alpha}_{d_k+1}\in[0,1]$. The model can easily erase and substitute information. Setting $\alpha_{d_k+1}=[0,1,\cdots,1]$ can obtain $(\mathbf{S}_{d_k+1})_{1, :}=\mathbf{v}_{d_k+1}$. So such linear models have ability to memorize and forget dynamically, making historical unimportant information have few effect on new one.
\end{proof}

\subsection{Analysis of Static Approximation Ability (C2)}\label{appendix:proof_C2}

\cref{eq:att_general} illustrates the LinAttMap only relate to $\mathbf{q}_t=\operatorname{f}_{q}(\mathbf{x}_{t}, \mathbf{\theta}_{q}),\mathbf{k}_t=\operatorname{f}_{k}(\mathbf{x}_{t}, \mathbf{\theta}_{k})$ and $\mathbf{\alpha}_t=\operatorname{f}_{\alpha}(\mathbf{x}_{t}, \mathbf{\theta}_{\alpha})$. Assuming the inputs are good enough and the functions $(\operatorname{f}_{q},\operatorname{f}_{k},\operatorname{f}_{\alpha})$ are expressive enough, we can shift from solving $(\mathbf{\theta}_q,\mathbf{\theta}_k,\mathbf{\theta}_\alpha)$ to solving $(\mathbf{Q},\mathbf{K},\mathbf{\Lambda}_t)$. Based on \cref{def:opt_app}, we focus on approximating the attention scores between stored tokens $\mathbf{x}_{t_{1}}, \dots,\mathbf{x}_{t_{d_k}}$.
% We focus on approximating the attention scores between stored tokens, which are assumed to be $\mathbf{x}_1, \dots,\mathbf{x}_{d_k}$ for convenience.

\begin{proposition}\label{prop:C2}
% \textbf{Only models with parameters $(\mathbf{Q},\mathbf{K},\mathbf{\Lambda}_t)$, $(\mathbf{Q},\mathbf{K})$ or $(\mathbf{Q},\mathbf{\Lambda}_t)$ can satisfy C2 (Static approximation ability).} For an arbitrarily given softmax attention map $\mathbf{P}$ with scores $p_{ts}$, only above models can ensure $f(\mathbf{x}_{t}, \mathbf{x}_{s}|\mathbf{Q},\mathbf{K},\mathbf{\Lambda}_t)=p_{ts}, \forall t,s=1,\cdots,d_k$ hold with bounded parameters $\mathbf{Q},\mathbf{K},\mathbf{\Lambda}_t$ (see \cref{eq:C1_general000} and \cref{eq:C1_bounds000}).
\textbf{Only models with parameters $(\mathbf{Q},\mathbf{K},\mathbf{\Lambda}_t)$, $(\mathbf{Q},\mathbf{K})$ or $(\mathbf{Q},\mathbf{\Lambda}_t)$ can satisfy C2 (Static approximation ability).} For an arbitrarily given softmax attention map $\mathbf{P}$ with scores $p_{ts}$, only above models can ensure \cref{eq:C1_general0} and \cref{eq:C1_bounds0} hold with bounded parameters.
\begin{align}
    f(\mathbf{x}_{t}, \mathbf{x}_{s}|\mathbf{Q},\mathbf{K},\mathbf{\Lambda}_t)&=\mathbf{q}_{t}\cdot\Big(\big(\prod_{j=s+1}^t \mathbf{\alpha}_{j}\big)\odot{\mathbf{k}_{s}}\Big)^{\top}=p_{ts}, \forall s\leq t=t_{1},\cdots,t_{d_k},\label{eq:C1_general0}\\
    \operatorname{s.t.} \quad ||\mathbf{q}_{t}||\leq C_q, &||\mathbf{k}_{t}||\leq C_k,  ||\mathbf{\alpha}_{t}||\in[0,1] , \forall t=t_{1},\cdots,t_{d_k},\label{eq:C1_bounds0}
\end{align}
where $C_q, C_k$ are constant and $||\cdot||$ denotes vector norm. In our general form in \cref{sec:general_form}, $f(\mathbf{x}_{t}, \mathbf{x}_{s}|\mathbf{Q},\mathbf{K},\mathbf{\Lambda}_t)=\operatorname{LinAttMap}\left(\mathbf{Q},\mathbf{K}\right)_{t,s}$ (see \cref{eq:att_general}). For bounded inputs $\mathbf{X}$, bounded parameters $(\mathbf{\theta}_q,\mathbf{\theta}_k,\mathbf{\theta}_\alpha)$ are equivalent to bounded $(\mathbf{Q},\mathbf{K},\mathbf{\Lambda}_t)$.
\end{proposition}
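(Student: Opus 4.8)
The plan is to prove both directions of the ``only'' claim by working, as in the reduction already set up, with the $d_k\times d_k$ lower-triangular submatrix $P=(p_{ts})$ of the target softmax map restricted to the stored tokens $\mathbf{x}_{t_1},\dots,\mathbf{x}_{t_{d_k}}$ (take $t_i=i$ without loss of generality), and treating $\mathbf{q}_t$ as a channel selector acting on the $d_k$ channels. I would split the argument into \emph{sufficiency} (each of the three configurations $(\mathbf{Q},\mathbf{K},\mathbf{\Lambda}_t)$, $(\mathbf{Q},\mathbf{K})$, $(\mathbf{Q},\mathbf{\Lambda}_t)$ admits bounded parameters solving \cref{eq:C1_general0}) and \emph{necessity} (every other parameter grouping in \cref{tab:approximate} fails, i.e.\ forces either a structural mismatch or an unbounded parameter once the stored tokens are placed arbitrarily far apart). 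Throughout, the binding constraint is \emph{boundedness}: the bilinear form is trivially expressive if $\mathbf{q},\mathbf{k}$ are allowed to blow up, so the real content lies in respecting \cref{eq:C1_bounds0}.

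For sufficiency I would dispatch the two ``easy'' cases first. For $(\mathbf{Q},\mathbf{K})$ set $\mathbf{k}_s=\mathbf{e}_s$ (the $s$-th standard basis row) and $\mathbf{q}_t=(p_{t1},\dots,p_{t d_k})$, so that $f_{ts}=\mathbf{q}_t\mathbf{k}_s^{\top}=p_{ts}$ with all entries in $[0,1]$; and $(\mathbf{Q},\mathbf{K},\mathbf{\Lambda}_t)$ reduces to this by taking $\mathbf{\alpha}_j=\mathbf{1}$, which is permitted for dynamic decay. The crux is $(\mathbf{Q},\mathbf{\Lambda}_t)$, where no free Key is available: here I would use the MetaLA coupling $\mathbf{k}_s=\mathbf{1}-\mathbf{\alpha}_s$ together with a binary per-channel decay $\alpha_j^{(c)}=\mathbf{1}[\,j\neq c\,]$. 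Then $k_s^{(c)}=\mathbf{1}[\,s=c\,]$, so channel $c$ receives only token $c$ and never decays afterward, giving $\prod_{j=s+1}^{t}\alpha_j^{(s)}=1$ and hence $f_{ts}=q_t^{(s)}$; choosing $q_t^{(s)}=p_{ts}$ yields $f_{ts}=p_{ts}$ with $\mathbf{\alpha}\in\{0,1\}$ and $\mathbf{q}_t\in[0,1]^{d_k}$, all bounded. Equivalently, this construction makes each channel store exactly one source token ($(\mathbf{S}_t)_{c,:}=\mathbf{v}_c$ for $t\geq c$), after which $\mathbf{q}_t$ simply reads off the desired row of $P$.

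For necessity I would isolate two structural obstructions. First, \emph{absence of $\mathbf{Q}$}: with $\mathbf{q}_t\equiv\mathbf{1}$ and non-negative keys/decays, write $f_{t,s}=\sum_c g_s^{(c)}$ with $g_s^{(c)}=\big(\prod_{j=s+1}^{t}\alpha_j^{(c)}\big)k_s^{(c)}\geq 0$; then $f_{t+1,s}=\sum_c\alpha_{t+1}^{(c)}g_s^{(c)}\leq f_{t,s}$ since each $\alpha_{t+1}^{(c)}\in[0,1]$. Thus the attention paid to any fixed source $s$ is non-increasing in the query index, a rigid monotonicity that a generic softmax map (with $p_{t+1,s}>p_{t,s}$ for some $s$) violates; this kills $(\mathbf{K},\mathbf{\Lambda})$, $(\mathbf{K},\mathbf{\Lambda}_t)$, $\mathbf{\Lambda}$, $\mathbf{\Lambda}_t$ and $\mathbf{K}$ alone, while $\mathbf{Q}$ alone gives $f_{ts}=\sum_c q_t^{(c)}$, constant in $s$. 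Second, \emph{fixed decay with $\mathbf{Q}$}: for $(\mathbf{Q},\mathbf{K},\mathbf{\Lambda})$ and $(\mathbf{Q},\mathbf{\Lambda})$ the decay contributes only a factor $(\alpha^{(c)})^{t-s}$ depending on the gap $t-s$, with each $\alpha^{(c)}<1$ bounded away from $1$ by the fixed-decay convention; since stored tokens may sit arbitrarily far apart, matching a non-vanishing $p_{ts}$ forces $\mathbf{q}_t$ or $\mathbf{k}_s$ to grow like $(\alpha^{(c)})^{-(t-s)}$, so no uniform $C_q,C_k$ exists and \cref{eq:C1_bounds0} fails. Finally I would note that $\mathbf{q}_t$ reweighting channels per row is exactly what defeats both obstructions, explaining why $\mathbf{Q}$ paired with $\mathbf{\Lambda}_t$ (or with $\mathbf{K}$) is what survives.

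The main obstacle I expect is the sufficiency of $(\mathbf{Q},\mathbf{\Lambda}_t)$: reproducing an arbitrary row of $P$ with \emph{no} independent Key requires the decay to simultaneously act as ``write gate'' and ``forget gate'', and one must verify that the $\mathbf{k}=\mathbf{1}-\mathbf{\alpha}$ coupling both realizes the targeted entries and keeps every parameter inside $[0,1]$. A secondary subtlety is making the no-$\mathbf{Q}$ monotonicity argument airtight under the non-negativity assumption on $\mathbf{k}$, and confirming that the fixed-decay failure genuinely needs the ``bounded away from $1$'' convention rather than merely $\alpha\le 1$.
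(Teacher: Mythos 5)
Your proposal is correct, and it reaches the paper's conclusion by a genuinely different route. For sufficiency, the paper first collapses to a scalar problem ($d_k=1$, a single target row $\mathbf{p}_t$) and solves it recursively --- for $(\mathbf{Q},\mathbf{\Lambda}_t)$ via the telescoping solution $\alpha_{t-k}=S_{t-k-1}/S_{t-k}$ with $S_t=\sum_{s\le t}p_{ts}$ and $q_t=1$ --- and then lifts to $d_k>1$ by one-hot ``selector'' queries, one channel per stored row; your constructions instead work directly in dimension $d_k$ (basis keys for $(\mathbf{Q},\mathbf{K})$; write-once channels $\alpha_j^{(c)}=\mathbf{1}[\,j\neq c\,]$ under the $\mathbf{k}_s=\mathbf{1}-\mathbf{\alpha}_s$ coupling for $(\mathbf{Q},\mathbf{\Lambda}_t)$, with the query carrying the scores $p_{ts}$ themselves). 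Both are valid; the paper's scalar recursion has the side benefits of exhibiting the parameter redundancy of $(\mathbf{Q},\mathbf{K},\mathbf{\Lambda}_t)$ (which feeds \cref{prop:C3}) and of motivating MetaLA's $\mathbf{k}_t=\mathbf{1}-\mathbf{\alpha}_t$ design, while yours is shorter and makes the ``one channel = one memory slot'' picture explicit. For necessity, the fixed-decay obstruction is the same in both proofs (a non-vanishing score at gap $t-s$ forces a factor $\alpha^{-(t-s)}$, hence unbounded $\mathbf{q}$ or $\mathbf{k}$), and your worry about needing $\alpha$ ``bounded away from $1$'' is unfounded: fixed decay means a constant $\alpha\in(0,1)$, and any such constant already makes $\alpha^{-(t-s)}$ blow up with the gap. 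For the absence of $\mathbf{Q}$, your monotonicity lemma ($f_{t+1,s}\le f_{t,s}$ when keys and decays are non-negative) is sharper and more rigorous than the paper's informal claim that without selection the model ``can only fit one token's attention distribution.''

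The non-negativity subtlety you flag is real and should be promoted to an explicit hypothesis rather than something to ``verify'': \cref{eq:C1_bounds0} only bounds $\|\mathbf{k}_t\|$, and with signed keys a no-$\mathbf{Q}$ model with dynamic decay can in fact match the restricted map --- take $\alpha_j^{(c)}=\mathbf{1}[\,j\neq c\,]$ and keys given by successive differences $k_s^{(t+1)}=p_{ts}-p_{t+1,s}$, with the leftover mass $p_{t_{d_k},s}$ placed on channels $c\le s$; all of these are bounded. So the monotonicity obstruction, and equally the paper's own necessity claim in \cref{appendix:proof_C2}, genuinely requires $\mathbf{k}\ge 0$. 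That assumption is consistent with the paper's conventions (kernel feature maps and the key parameterizations in \cref{tab:unified} are non-negative), so once it is written into the statement your argument is, if anything, tighter than the paper's; without it, both your proof and the proposition's ``only'' direction fail.
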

\begin{proof}
Without loss of generality, suppose $t=d_k$ and $t_i=i(i=1,\dots,d_k)$, where $d_k$ is Key dimension. That is, we need to prove following equations hold (heads are omitted for simplicity):
\begin{align}
    f(\mathbf{x}_{t}, \mathbf{x}_{s}|\mathbf{Q},\mathbf{K},\mathbf{\Lambda}_t)&=\mathbf{q}_{t}\cdot\Big(\big(\prod_{j=s+1}^t \mathbf{\alpha}_{j}\big)\odot{\mathbf{k}_{s}}\Big)^{\top}=p_{ts}, \forall s\leq t=1,\cdots,d_k,\label{eq:C1_general000}\\
    \operatorname{s.t.} \quad ||\mathbf{q}_{t}||\leq C_q, &||\mathbf{k}_{t}||\leq C_k,  ||\mathbf{\alpha}_{t}||\in[0,1] , \forall t=1,\cdots,d_k,\label{eq:C1_bounds000}
\end{align}

\textbf{Simple case.} We first simplify the problem via \romannumeral1) setting dimension size of $\mathbf{Q}$, i.e. ${d}_{k}=1$ and \romannumeral2) considering only a given time $t$ ($1\leq t\leq d_k$) and its autoregressive attention distribution $\mathbf{p}_t=[p_{ts},s=1,\dots,t]\in\mathcal{R}^{1\times t}$. We need to prove the following equations hold with bounded parameters, as a foundation conclusion:
\begin{align}
    f(\mathbf{x}_{t}, \mathbf{x}_{s}|\mathbf{Q},\mathbf{K},\mathbf{\Lambda}_t)&=q_t \big(\prod_{j=s+1}^{t} \alpha_j\big) k_s=p_{ts}, \forall s=1,\dots,t,\label{eq:C1_general}\\
    \operatorname{s.t.} \quad |q_s|\leq C_q, |k_s|&\leq C_k,  \alpha_s\in[0,1], \forall s=1,\dots,t.\label{eq:C1_bounds}
\end{align}
We discuss models with fixed/dynamic decay or without decay. For cases with dynamic decay, we further consider using dynamic decay to replace key. Query's role is discussed later.

\romannumeral1) \emph{Models with fixed decay $\mathbf{\alpha}$ result in unbounded parameters}. We can solve \cref{eq:C1_general} and derive:
\begin{align}
    |k_s|=\frac{p_{ts}}{{\alpha}^{t-s}|q_t|}&\geq\frac{p_{ts}}{C_q}\cdot \frac{1}{{\alpha}^{t-s}}, \forall s=1,\dots,t. 
\end{align}
Because $\alpha\in(0,1)$, the last term $\frac{1}{{\alpha}^{t-s}}$ is unbounded, which leads to unbounded $k_s$. Hence \cref{eq:C1_bounds} can not be satisfied. The result is intuitive because the fixed exponential decay makes it hard for a token to attend to distant information, leading to an excessively focused attention map rather than an arbitrary attention map. This conclusion about fixed decay is general and is unrelated to the usage of $\mathbf{Q}$ or $\mathbf{K}$.

\romannumeral2) \emph{Models without decay ($\mathbf{\alpha}=1$) can satisfy \cref{eq:C1_general}.} We can solve \cref{eq:C1_general} and derive:
\begin{align}
    k_s&=\frac{p_{ts}}{q_t},\forall s=1,\dots,t.
\end{align}
At any time $s$, fix an appropriate $|q_t|\leq C_q$, and we can obtain bounded solutions for all $k_{s}$.

\romannumeral3) \emph{Models with both dynamic decay $\alpha_t$ and key ${k}_t$ can satisfy \cref{eq:C1_general}, but have parameter-redundancy.} We can solve \cref{eq:C1_general} starting from time $t$ to time $1$:
\begin{align}
    &k_t=\frac{p_{tt}}{q_t},\\
    &k_{t-1}\alpha_t=\frac{p_{t\,t-1}}{q_t},\\
    &\cdots ,\\
    &k_{t-k}\alpha_{t-k+1}\cdot(\alpha_{t-k+2}\cdots\alpha_{t})=\frac{p_{t\,t-k}}{q_t},\\
    &\cdots ,\\
    &k_1\alpha_2\cdot(\alpha_{3},\cdots,\alpha_{t})=\frac{p_{t1}}{q_t}.
\end{align}
At any time $t-k$, parameters $\alpha_{t-k+2},\cdots,\alpha_{t}$ have already been determined. It is equivalent to use two free parameters $k_{t-k},\alpha_{t-k+1}$ to concurrently approximate just one variable $\frac{p_{t\,t-k}}{q_t}$, which is redundant. Dynamic decay and key actually have similar function, i.e., to balance historical memory and new input. Fix a $|q_t|\leq C_q$, we can obtain many possible bounded solutions of all $k_{s}$ and $\alpha_{s}$.

\romannumeral4) \emph{Models with dynamic decay $\alpha_t$ and without key $k_t$ can satisfy \cref{eq:C1_general}.} Inspired by the parameter-redundancy in \romannumeral3), we further consider a linear model with only query and dynamic decay, i.e., using dynamic decay $1-\alpha_t$ to replace ${k}_t$'s role to approximate softmax attention map. We further assume $q_t=1$, which is bounded, and define $S_t = \sum_{s=1}^t p_{ts}$, which satisfies $S_t\in[0, 1]$ and $S_t\geq S_{t-1}$. Solve \cref{eq:C1_general} starting from time $t$ to time $1$:
\begin{align}
    &\alpha_t=1-p_{tt}=S_{t-1},\\
    &\alpha_{t-1}=1-\frac{p_{t\,t-1}}{\alpha_t}=\frac{S_{t-2}}{S_{t-1}},\\
    &\cdots ,\\
    &\alpha_{t-k}=1-\frac{p_{t\,t-k}}{\alpha_{t-k+1}\cdots\alpha_{t}}=\frac{S_{t-k-1}}{S_{t-k}},\\
    &\cdots ,\\
    &\alpha_1=1-\frac{p_{t1}}{\alpha_{2}\cdots\alpha_{t}}=\frac{S_{0}}{S_{1}}.
\end{align}
At any time $t-k$, we can derive a closed-form and bounded solution $\alpha_{t-k}=\frac{S_{t-k-1}}{S_{t-k}}\in[0,1]$.

\textbf{General case.} Aiming to solve \cref{eq:C1_general000} and \cref{eq:C1_bounds000}, we generalize to vector version with ${d}_{k}>1$ and consider distribution of all time $\mathbf{p}_t,t=1,\dots,d_k$. In an extreme case, let one dimension of $\mathbf{q}$ relate to one time, which means $\mathbf{q}_t=[0,\cdots,q_t,\cdots,0]$. Through the analysis above for one dimension (\cref{eq:C1_general} and \cref{eq:C1_bounds}), we can ensure:
\begin{align}
\mathbf{q}_t \cdot\Big(\big(\prod_{j=s+1}^t \mathbf{\alpha}_{j}\big)\odot \mathbf{k}_s\Big)^{\top}=q_t \big(\prod_{j=s+1}^{t} \alpha_{j}\big) k_{s}=p_{ts}, \forall s=1,\dots,t.
\end{align}
Actually, we can further relax the condition and view $\mathbf{q}_t$ as a selector, which selects several channels of Hadamard product of $\mathbf{\alpha}_{j}$ and $\mathbf{k}_s$, and uses these channels to approximate its attention distribution. One-hot assumption of $\mathbf{q}_t$ is an extremely hard selector. This means models without $\mathbf{Q}$ can not approximate softmax attention map, because without selection they can only fit one token's attention distribution theoretically.

In summary, through analysis of simple case \romannumeral1) to \romannumeral4), models with fixed decay $\mathbf{\Lambda}$ cannot ensure bounded parameters. Through analysis of general case, $\mathbf{Q}$ is necessary. So we can conclude that: only models with parameters $(\mathbf{Q},\mathbf{K},\mathbf{\Lambda}_t)$, $(\mathbf{Q},\mathbf{K})$ or $(\mathbf{Q},\mathbf{\Lambda}_t)$ can satisfy C2. (For models with only $\mathbf{Q}$, without decay and $\mathbf{k}_t=\mathbf{1}$ for all $t$, a token's attentions to other tokens are all the same, which means such models can only approximate one special attention map.)

\end{proof}

\subsection{Analysis of Least Parameter Approximation (C3)}\label{appendix:proof_C3}

When model dimension $d,d_k,d_v$ are fixed, parameter counts of $\mathbf{\theta}_q,\mathbf{\theta}_k,\mathbf{\theta}_\alpha$ are fixed too. Thus fewest parameters means fewest parameter groups. Linear attentions with general form can be classified into three types based on the parameter groups: \romannumeral1) using $(\mathbf{Q},\mathbf{K},\mathbf{\Lambda}/\mathbf{\Lambda}_t)$ all together, \romannumeral2) exploiting $\left(\mathbf{Q},\mathbf{K}\right)$, $\left(\mathbf{Q},\mathbf{\Lambda}/\mathbf{\Lambda}_t\right)$ or $\left(\mathbf{K},\mathbf{\Lambda}/\mathbf{\Lambda}_t\right)$, \romannumeral3) employing only one of $\mathbf{Q}$, $\mathbf{K}$, $\mathbf{\Lambda}/\mathbf{\Lambda}_t$.

\begin{proposition}\label{prop:C3}
\textbf{Only models with Query and dynamic decay $(\mathbf{Q},\mathbf{\Lambda}_t)$ can satisfy C3 (Least parameter approximation).} Models with $(\mathbf{Q},\mathbf{\Lambda}_t)$ can meet C0, C1, C2 simultaneously with fewest parameters.
\end{proposition}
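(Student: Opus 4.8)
The plan is to read C3 as a combinatorial minimization over the finite lattice of parameter-group choices, and to prune that lattice using Propositions~\ref{prop:C1} and~\ref{prop:C2}. First I would invoke the reduction stated just before the proposition: with $d,d_k,d_v$ held fixed, the parameter counts of $\mathbf{\theta}_q,\mathbf{\theta}_k,\mathbf{\theta}_\alpha$ are each fixed, so minimizing the total parameter count is equivalent to minimizing the number of \emph{active} groups among $\{\mathbf{Q},\mathbf{K},\mathbf{\Lambda}_t\}$, where ``active'' means genuinely learned rather than fixed to the constant $\mathbf{1}$ (and where a fixed decay $\mathbf{\Lambda}$ is disqualified from the outset). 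This turns the claim into a lower-bound-plus-attainability argument on group count.

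Next I would establish the lower bound by intersecting the two feasibility regions. By Proposition~\ref{prop:C1}, any model meeting C1 must carry a dynamic decay $\mathbf{\Lambda}_t$, so $\mathbf{\Lambda}_t$ is forced active; by Proposition~\ref{prop:C2}, any model meeting C2 must contain $\mathbf{Q}$, so $\mathbf{Q}$ is forced active. Hence every model satisfying C0--C2 has at least the two groups $\{\mathbf{Q},\mathbf{\Lambda}_t\}$ active, giving a floor of two groups. I would then confirm attainability at this floor: the pair $(\mathbf{Q},\mathbf{\Lambda}_t)$, with the role of $\mathbf{k}_t$ realized through $\mathbf{1}-\mathbf{\alpha}_t$, meets all three conditions --- C0 by linear complexity of the general form, C1 by case~\romannumeral3) of Proposition~\ref{prop:C1}, and C2 by case~\romannumeral4) of Proposition~\ref{prop:C2}, where $q_t=1$ yields the closed-form bounded solution $\alpha_{t-k}=S_{t-k-1}/S_{t-k}\in[0,1]$. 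Since the floor of two is both a lower bound and attained by exactly the set $\{\mathbf{Q},\mathbf{\Lambda}_t\}$, and no other two-group combination survives (any combination lacking $\mathbf{\Lambda}_t$ fails C1 and any combination lacking $\mathbf{Q}$ fails C2), $(\mathbf{Q},\mathbf{\Lambda}_t)$ is the unique minimizer, which is precisely the uniqueness asserted by the proposition.

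I expect the delicate point to be the floor/uniqueness step rather than the pruning. One must verify that the tie between parameter count and active-group count (borrowed from the preamble) is applied with $d,d_k,d_v$ genuinely fixed, so that deleting $\boldsymbol{W}_K$ does not silently force a compensating enlargement of $\boldsymbol{W}_\alpha$ or $\boldsymbol{W}_Q$; and one must check that the constructive solution inherited from case~\romannumeral4) satisfies all constraints \emph{jointly} --- in particular that the same $(\mathbf{Q},\mathbf{\Lambda}_t)$ that solves the static equation of C2 also delivers the dynamic-memory behaviour demanded by C1 and stays within C0. Everything else is a finite enumeration over the parameter-group lattice whose feasibility labels are already supplied by Propositions~\ref{prop:C1} and~\ref{prop:C2}, so no new technical machinery beyond those two results should be needed.
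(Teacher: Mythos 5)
Your proposal is correct and follows essentially the same route as the paper: both reduce ``fewest parameters'' to ``fewest active parameter groups'' (using the fixed-dimension preamble), then intersect the feasibility constraints from Proposition~\ref{prop:C1} (which forces $\mathbf{\Lambda}_t$) and Proposition~\ref{prop:C2} (which restricts to sets containing $\mathbf{Q}$), leaving only $(\mathbf{Q},\mathbf{K},\mathbf{\Lambda}_t)$ and $(\mathbf{Q},\mathbf{\Lambda}_t)$, of which the latter has fewer groups. Your recasting of this as a lower-bound-plus-attainability argument, and your explicit caution that removing $\boldsymbol{W}_K$ must not force compensating enlargement of the remaining matrices, are just slightly more careful phrasings of the paper's own intersection argument, not a different method.
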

\begin{proof}
\romannumeral1) C0: all the linear models with general form (\cref{sec:general_form}) own linear complexity and satisfy C0. 

\romannumeral2) C1: according to \cref{prop:C1}, only models with dynamic decay $\mathbf{\Lambda}_t$ have dynamic memory ability.

\romannumeral3) C2: according to \cref{prop:C2}, only models with parameters $(\mathbf{Q},\mathbf{K},\mathbf{\Lambda}_t)$, $(\mathbf{Q},\mathbf{K})$ or $(\mathbf{Q},\mathbf{\Lambda}_t)$ have static approximation ability.

So we claim that only models using $(\mathbf{Q},\mathbf{K},\mathbf{\Lambda}_t)$ or $(\mathbf{Q},\mathbf{\Lambda}_t)$ can meet C0, C1 and C2 simultaneously. And $(\mathbf{Q},\mathbf{\Lambda}_t)$ with two parameter groups is least parameter approximation. That is, dynamic decay can replace Key's function to balance historical information and new input when approximation is performed.
\end{proof}

\subsection{Conclusions of Optimal Linear Approximation Analysis}
\label{appendix:conclusions}
In this section, we summarize our conclusions of theoretical analysis. It is worth noting that our theory studies functions of LinAttMap, but not whether these functions will be successfully learned.

\romannumeral1) \emph{Linear approximation.} The necessary conditions (C1 and C2) for LinAttMap to achieve approximation to SoftAttMap is that its implementation must include at least two parameter groups: Query $\mathbf{Q}$ and dynamic decay $\mathbf{\Lambda}_t$. Both $(\mathbf{Q},\mathbf{K},\mathbf{\Lambda}_t)$ and $(\mathbf{Q},\mathbf{\Lambda}_t)$ can achieve approximation.

\romannumeral2) \emph{Least parameter approximation.} LinAttMap utilizing ($\mathbf{Q},\mathbf{\Lambda}_t$) can achieve linear approximation to SoftAttMap with theoretically fewest parameters.

\romannumeral3) \emph{Function of Query.} $\mathbf{Q}$ can be seen as a channel selector, which selects several channels of Hadamard product of $\mathbf{\alpha}_{t}$ and $\mathbf{k}_t$, and uses these channels to approximate attention map. Thus it is indispensable for approximation and its dimension size (also the model memory size) is the guarantee of approximation ability.

\romannumeral4) \emph{Function of dynamic decay.} Dynamic decay $\mathbf{\Lambda}_t$ is the key to achieve dynamic memory and precise approximation, and can substitute the role of $\mathbf{K}$ when approximation is performed.

\section{MetaLA Architecture}\label{app:arch}
\label{appendix:arch}
We here introduce Meta Linear Attention (MetaLA), a linear approximation with least parameters to softmax attention map. We design three enhancements of MetaLA relative to the general linear attention in \cref{sec:general_form}: \romannumeral1) The Key matrix is not used, which is based on our theoretical analysis. \romannumeral2) Self-augmentation and \romannumeral3) Short convolution are two other optional techniques to further enhance approximation ability of our model. \cref{app:layer} explicates enhancement \romannumeral1) and design of basic MetaLA layer. \cref{app:overall} introduces enhancements \romannumeral2) and \romannumeral3), and design of overall MetaLA architecture.

\subsection{MetaLA Layer} \label{app:layer}

In MetaLA layer, we exploit $1-\mathbf{\alpha}_{t}$ to replace $\mathbf{k}_{t}$ based on theoretical analysis in \cref{appendix:detailed_proof}, i.e., LinAttMap utilizing ($\mathbf{Q},\mathbf{\Lambda}_t$) is the linear approximation with least parameter redundancy to SoftAttMap and $\mathbf{K}$ is redundant.

\textbf{General Recurrent Form.} Using marks in \cref{eq:general_qka}-\cref{eq:general_out}, the improvement lies in \cref{eq:original_update}:
\begin{align}
    % \mathbf{q}_{t} &= \mathbf{x}_{t}\boldsymbol{W}_{Q}, \mathbf{\alpha}_{t} = \exp{(-\exp{(\mathbf{x}_{t}\boldsymbol{W}_{\alpha})})}, \in\mathcal{R}^{1\times {{d}_{k}}},\label{eq:logos_start} \\
    \mathbf{q}_{t} &= \mathbf{x}_{t}\boldsymbol{W}_{Q}, \mathbf{\alpha}_{t} = \sigma(\mathbf{x}_{t}\boldsymbol{W}_{\alpha})\quad \in\mathcal{R}^{1\times {{d}_{k}}},\label{eq:logos_start} \\
    \mathbf{v}_{t} &= \mathbf{x}_{t}\boldsymbol{W}_{V}, \mathbf{g}_{t}=\phi(\mathbf{x}_{t}\boldsymbol{W}_{G}+\boldsymbol{b}_{G})\quad \in\mathcal{R}^{1\times {{d}_{v}}}, \\
    \mathbf{S}_{t}^{h} &= \operatorname{diag}(\mathbf{\alpha}_{t}^{h})\mathbf{S}_{t-1}^{h} + (\mathbf{1} - \mathbf{\alpha}_{t}^{h})^{\mathsf{T}}\mathbf{v}_{t}\quad \in\mathcal{R}^{{{d}_{k}'}\times {{d}_{v}'}}, \label{eq:original_update}\\
    % \mathbf{o}_{t} &= \operatorname{XNorm}(\operatorname{concat}(\mathbf{q}_{t}\mathbf{S}_{t})), \in\mathcal{R}^{1\times {{d}_{v}}}, \\
    \mathbf{o}_{t} &= \operatorname{XNorm}(\operatorname{concat}[\mathbf{q}_{t}^{1}\mathbf{S}_{t}^{1}, \mathbf{q}_{t}^{2}\mathbf{S}_{t}^{2}, \cdots, \mathbf{q}_{t}^{H}\mathbf{S}_{t}^{H}])\quad \in\mathcal{R}^{1\times {{d}_{v}}},\\
    \mathbf{y}_{t} &= (\mathbf{g}_{t}\odot\mathbf{o}_{t})\boldsymbol{W}_{O}\quad \in\mathcal{R}^{1\times {d}}, \label{eq:logos_end}
\end{align}
where $\mathbf{x}_t\in\mathcal{R}^{1\times d}$ denotes the current input. $\boldsymbol{W}_{Q},\boldsymbol{W}_{\alpha}\in\mathcal{R}^{d\times d_k}$, $\boldsymbol{W}_{V},\boldsymbol{W}_{G}\in\mathcal{R}^{d\times d_v}$ and $\boldsymbol{W}_{O}\in \mathcal{R}^{d_v\times d}$ are learnable parameters. ${{d}_{k/v}'} = \frac{d_{k/v}}{H}$ and $h=1,\cdots,H$ is the index of heads. Here $d_k$ represents Query/Decay dimension. $\operatorname{LayerNorm}$ is chosen as the normalization operation. Furthermore, we choose $\sigma=\operatorname{Sigmoid}(\cdot)^{1/\tau}$ following ~\cite{gla}, where $\tau=16$ is used to control the value of dynamic decay and $\phi=\operatorname{SiLU}$.

As for parameter allocation, we simply set $d_v=d$ and $d_k=\frac{d}{2}$, similar to GLA~\cite{gla}. Thus, the whole number of parameters used by a MetaLA layer is $4d^2$, the same as the softmax attention layer and smaller than other popular attention-like subquadratic models such as RetNet~\cite{RetNet} ($8d^2$) and GLA~\cite{gla} ($4d^2+24d$), etc. Without usage of $\mathbf{K}$, we can allocate more parameters and utilize full-rank matrix $\boldsymbol{W}_{\alpha}$ to produce dynamic decay rather than low-rank matrix used by GLA, such that do not sacrifice expression capacity of $\operatorname{f}_{\alpha}$ and approximation performance of MetaLA.

\textbf{General Parallel Form.} The recurrent form of MetaLA can be written in a parallel mode as follows:
\begin{align}
    &\mathbf{O} = \Big(\Big(\big(\mathbf{Q}\odot\mathbf{A}\big)\big(\frac{\mathbf{B}}{\mathbf{A}}\big)^{\top}\Big)\odot\mathbf{M}\Big)\mathbf{V}, \label{eq:logos_paral}\\
    &(\mathbf{Q/K/V})_{t, :} = (\mathbf{q/k/v})_{t}, \quad \mathbf{A}_{t, :} = \prod_{j=1}^{t}\mathbf{\alpha}_{j}, \quad \mathbf{B}_{t, :} = \mathbf{1}-\mathbf{\alpha}_{t},\quad\mathbf{M}_{i,j} = \begin{cases}
    1, i\leq j. \\
    0, i > j.
    \end{cases}\label{eq:paradef22}
\end{align}
$\frac{\mathbf{B}}{\mathbf{A}}$ in \cref{eq:logos_paral} denotes element-wise division and $(\mathbf{Q})_{t, :}$ means the $t$-th row of matrix $\mathbf{Q}$. In practical implementation, we utilize chunkwise form proposed in \cite{gla} for hardware-efficient training.

\textbf{Attention map.} Here the attention map is $\Big(\Big(\big(\mathbf{Q}\odot\mathbf{A}\big)\big(\frac{\mathbf{B}}{\mathbf{A}}\big)^{\top}\Big)\odot\mathbf{M}\Big)$. The element in the attention map matrix is as follows (heads are omitted for simplicity):
\begin{equation}
    % \mathbf{AttMap}_{t,s}=\begin{cases}
    % \mathbf{q}_{t}\cdot\left(\left(\prod_{j=s+1}^t \mathbf{\alpha}_{j}\right)\odot({\mathbf{1}-\mathbf{\alpha}_{t}})^{\top}\right)=\sum_{i=1}^{{d_{k}}}\mathbf{q}_{t, i}\left(\prod_{j=s+1}^{t}\mathbf{\alpha}_{j, i}\right)(1-\mathbf{\alpha}_{s, i}), s \leq t.\\
    % 0, s> t.
    % \end{cases}
    \operatorname{LinAttMap}_{t,s}=\begin{cases}
    \mathbf{q}_{t}\cdot\Big(\big(\prod_{j=s+1}^t \mathbf{\alpha}_{j}\big)\odot({\mathbf{1}-\mathbf{\alpha}_{s}})^{\top}\Big), s \leq t.\\
    0, s> t.
    \end{cases}
\end{equation}

\subsection{MetaLA Transformer}\label{app:overall}
\textbf{Improved MetaLA.} After deriving the basic MetaLA layer shown above, we consider further optimization when designing complete MetaLA block. One observation is that in most cases, setting $\alpha_t=0$ and completely discarding historical information is catastrophic. Actually, in real-world scenario, the learned dynamic decay $\alpha_t$ always closes to $1$ because of many tasks' strong need to capture long-term dependencies. Hence we propose a technique called \textbf{self-augmentation} to enlarge a token's attention to itself while do not disrupt the model's state constitution, in order to better avert attention dilution. Moreover, an additional \textbf{short convolution} can be inserted before entering MetaLA layer to further enhance local interaction, motivated by Mamba~\cite{Mamba} and Griffin~\cite{Griffin}. Combining these two techniques, the improved MetaLA is shown as follows, the main improvements lie in \cref{eq:self-aug_start} and \cref{eq:self-aug123}:
\begin{align}
    \mathbf{x}_{t} &= \operatorname{Conv1d}(\mathbf{x}_{t})\quad \in\mathcal{R}^{1\times d},\label{eq:self-aug_start} \\
    \mathbf{q}_{t} &= \mathbf{x}_{t}\boldsymbol{W}_{Q}, \mathbf{\alpha}_{t} = \sigma(\mathbf{x}_{t}\boldsymbol{W}_{\alpha})\quad \in\mathcal{R}^{1\times {{d}_{k}}}, \\
    \mathbf{v}_{t} &= \mathbf{x}_{t}\boldsymbol{W}_{V}, \mathbf{g}_{t}=\phi(\mathbf{x}_{t}\boldsymbol{W}_{G})\quad \in\mathcal{R}^{1\times {{d}_{v}}}, \\
    \mathbf{S}_{t}^{h} &= \operatorname{diag}(\mathbf{\alpha}_{t}^{h})\mathbf{S}_{t-1}^{h} + (\mathbf{1} - \mathbf{\alpha}_{t}^{h})^{\mathsf{T}}\mathbf{v}_{t}\quad\in\mathcal{R}^{{{d}_{k}'}\times {{d}_{v}'}}, \\
    \mathbf{o}_{t}^h &= \mathbf{q}_{t}^{h}\mathbf{S}_{t}^{h}+\sigma_{\text{aug}}\big(\mathbf{q}_{t}^{h}(\boldsymbol{w}^h_{\text{aug}}\odot(\mathbf{1} - \mathbf{\alpha}_{t}^{h}))^{\mathsf{T}}\mathbf{v}_{t}\big)\quad \in\mathcal{R}^{1\times {{d}_{v}'}},\label{eq:self-aug123}\\
    % \mathbf{o}_{t} &= \operatorname{XNorm}(\operatorname{concat}[\mathbf{q}_{t}^{1}\mathbf{S}_{t}^{1}, \mathbf{q}_{t}^{2}\mathbf{S}_{t}^{2}, \cdots, \mathbf{q}_{t}^{H}\mathbf{S}_{t}^{H}]), \in\mathcal{R}^{1\times {{d}_{v}}},\\
    \mathbf{o}_{t} &= \operatorname{XNorm}(\operatorname{concat}[\mathbf{o}_{t}^{1}, \mathbf{o}_{t}^{2}, \cdots, \mathbf{o}_{t}^{H}])\quad \in\mathcal{R}^{1\times {{d}_{v}}},\\
    \mathbf{y}_{t} &= (\mathbf{g}_{t}\odot\mathbf{o}_{t})\boldsymbol{W}_{O}\quad \in\mathcal{R}^{1\times {d}}. \label{eq:self-aug_end}
\end{align}
As for self-augmentation, without changing the composition of hidden state $\mathbf{S}_t^h$, it is only added on the output process through a learnable parameter $\boldsymbol{w}_{\text{aug}}\in\mathcal{R}^{1\times d_k}$, which is then divided into heads like other parameters do. $\sigma_{\text{aug}}(\cdot)$ is a nonlinearity to control the magnitude of augmentation term and avoid covering with the original attention. We choose $\operatorname{Sigmoid}(\cdot)$ in this paper. 

The proposed design has two advantages: First, it maintains parallel computing as shown in \cref{eq:aug_paral}; Second, it only enhances current token's own attention and does not change the attention of future tokens to the current token ($p_{ts},s<t$), because we only change $\mathbf{o}_{t}^{h}$ while maintaining $\mathbf{S}_t^h$ unchanged like that in original MetaLA layer \cref{eq:original_update}. Thus the separation of output and memory is realized.
\begin{align}
    &\mathbf{O} = \Big(\Big(\big(\mathbf{Q}\odot\mathbf{A}\big)\big(\frac{\mathbf{B}}{\mathbf{A}}\big)^{\top}\Big)\odot\mathbf{M}\Big)\mathbf{V}+\operatorname{diag}\big(\operatorname{sum}\big((\mathbf{Q}\odot\mathbf{B}\odot\mathbf{W}), \operatorname{dim} = 1\big)\big)\mathbf{V}, \label{eq:aug_paral}
\end{align}
where $\mathbf{W}_{t, :} = \boldsymbol{w}_{\text{aug}}$ and $\operatorname{sum}(\mathbf{Q},\operatorname{dim} = 1)$ means calculate sum for each row of matrix $\mathbf{Q}$. Other marks are defined same as \cref{eq:paradef22}.

\textbf{MetaLA block.} Regard each layer as a function mapping the input $\mathbf{X}\in\mathcal{R}^{n\times {d}}$ to the output $\mathbf{Y}\in\mathcal{R}^{n\times {d}}$ where:
\begin{equation}
    \mathbf{X} = \begin{bmatrix}
    \mathbf{x}_{1} \\
    \mathbf{x}_{2} \\
    \vdots \\
    \mathbf{x}_{n} \\
\end{bmatrix}, 
    \mathbf{Y} = \begin{bmatrix}
    \mathbf{y}_{1} \\
    \mathbf{y}_{2} \\
    \vdots \\
    \mathbf{y}_{n} \\
\end{bmatrix},
\end{equation}

\begin{wrapfigure}[20]{r}{0.43\textwidth}
\vspace{-3mm}
\centering
\includegraphics[scale=0.32]{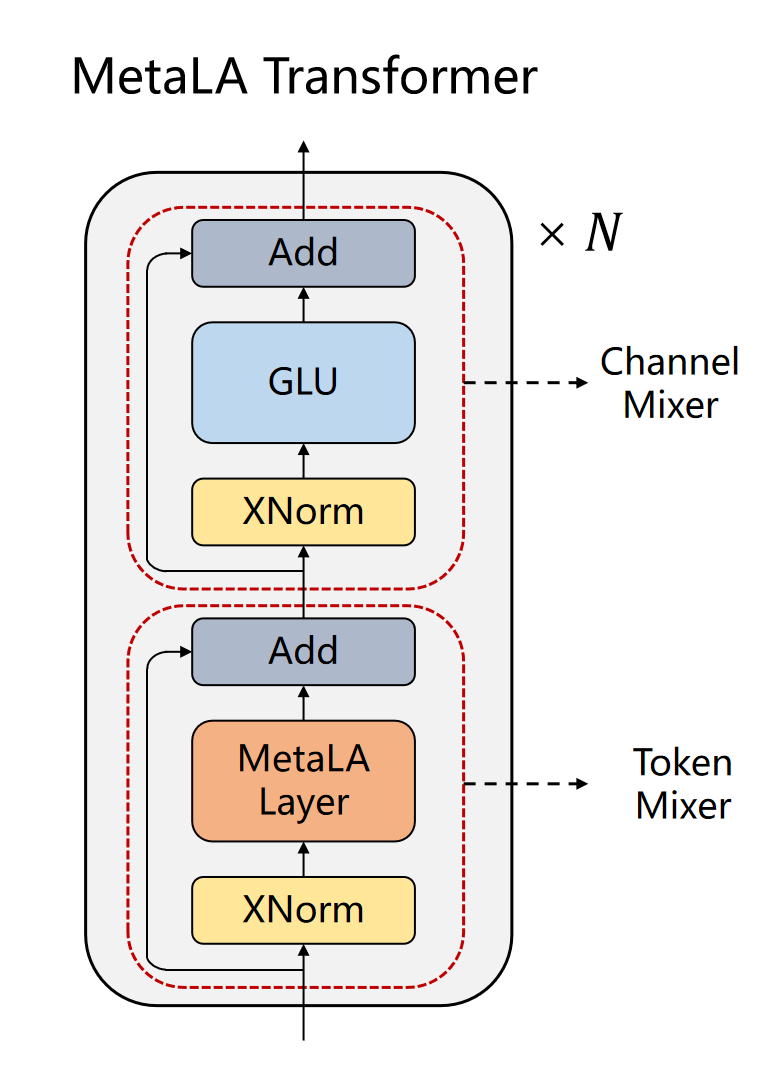}
\vspace{-2mm}
\caption{\textbf{MetaLA Transformer.} Stacking $N$ MetaLA blocks, each block is composed of two modules in sequence: token mixer and channel mixer.}\label{fig:metala_transformer}
\end{wrapfigure}

following Transformer structure, now the Token Mix mechanism \cref{eq:self-aug_start}-\cref{eq:self-aug_end} can be integrated as follows:
\begin{equation}
    \mathbf{Y} = \operatorname{TokenMix}(\mathbf{X}),
\end{equation}
and the Channel Mix part (GLU~\cite{llama}) can be written as follows:
\begin{align}
    \mathbf{Y} = (\operatorname{Swish}(\mathbf{X}\boldsymbol{W}_{1})\odot\mathbf{X}\boldsymbol{W}_{2})\boldsymbol{W}_{3},
\end{align}
which can be integrated as:
\begin{align}
    \mathbf{X}^{l+\frac{1}{2}} = \operatorname{TokenMix}^{l}(\operatorname{XNorm}^{l}(\mathbf{X}^{l}))+\mathbf{X}^{l}, \\
    \mathbf{X}^{l+1} = \operatorname{ChannelMix}^{l}(\operatorname{XNorm}^{l+\frac{1}{2}}(\mathbf{X}^{l+\frac{1}{2}})), 
\end{align}
where $\mathbf{X}^{l}$ and $\mathbf{X}^{l+1}\in\mathcal{R}^{n\times {d}}$ refer to the input and output of block $l$ in MetaLA. We choose $\operatorname{XNorm}=\operatorname{LayerNorm}$. Stacking several MetaLA blocks above, we can derive complete MetaLA Transformer as a Linear Foundation Model, as shown in \cref{fig:metala_transformer}.

\section{Experimental Details}\label{app:exp_detail}

\begin{table}[htbp]
    \setlength{\tabcolsep}{2pt}
    \centering
    \caption{\textbf{Hyper-parameters of MetaLA on LRA.} d is the dimension of model. d1 is the dimension of $d_q$ and $d_k$, d2 is the hidden dimension in GLU. num-warmup and max-step are used for cosine warmup.}
    \begin{tabular}{c|cccccccccc}
    \toprule
         Task &  Depth & d & d1 &d2 & dropout & lr& bs & wd & num-warmup & max-step \\
         \midrule
         Listops & 6 & 32&16 & 64 & 0.0 &
         0.0005 & 128 & 0.01 & 5000 & 50000 \\
         Text & 4 & 128 &64 &256 & 0.1 &
         0.004 & 64 & 0.0 & 10000 & 50000 \\
         Retrieval& 2 & 256&128&512 & 0.1 & 
         0.0008 & 128 & 0.0001 & 312 & 50000\\
         Image& 6 & 512&256 & 512&0.0 &
         0.003 & 128 & 0.0 & 30000 & 50000\\
         Pathfinder& 6 & 128&64&128 & 0.0 &
         0.002 & 256 & 0.0 & 50000 & 500000\\
         Path-X& 6 & 32&16&32 & 0.0 &
         0.00075 & 256 & 0.0 & 15000 & 500000\\
         \bottomrule
    \end{tabular}
    \label{tab:lra_config}
\end{table}

% \vspace{14mm}
\textbf{Associative Recall (AR).} Following \cite{zoology}, we train two layer models with a Transformer backbone that interleaves token mixer and channel mixer. Learning rates are swept by $\operatorname{np.logspace}(-4, -2, 4)$ for sequence length 256 and additional $\operatorname{np.logspace}(-5, -3, 4)$ for length 512, and maximum test accuracy is reported. For GLA~\cite{gla} and MetaLA, we set $H=2$ and $d_k=d_v=d$. The kernel size of short convolution of MetaLA is $2$.
% For GLA~\cite{gla} and MetaLA, we set $H=2$ and $d_k=d_v=d$ (except in the case of seqlen512-dmodel128, where we find $d_k=2d$ is a slightly better choice). We set the kernel size of short convolution of MetaLA be $2$.

\textbf{Language Modeling.} For 360M/1.4B model, we train it from scrach with a total of 15B/300B tokens on 16/32 A100 GPUs at a learning rate of 3e-4/2e-4 with batch size 0.5M/2M. Both models maintain a length of 2048 and are trained using fp16. The training setup of baselines~\cite{gla,Mamba,pythia} of 360M MetaLA are aligned with MetaLA configurations. For the 1.3B MetaLA, we compare it with publicly available models~\cite{RetNet,gla,Mamba,HGRN,pythia}. To maintain a fair comparison between linear models, we trained Mamba from scratch using the same settings with MetaLA on 100B tokens. For GLA and Retnet, we adopted the open-source checkpoints in FLA~\cite{fla}. For HGRN and Pythia, we used the official open-source checkpoints. We implement all the pretrain experiments with GPT-Neox~\cite{gpt-neox-library}. We evaluate our models on SuperGLUE benchmark~\cite{superglue} and Common-Sense Reasoning benchmark including LAMBADA~\cite{lambada}, LogiQA~\cite{logiqa}, Winograd Schema Challenge (WSC273)~\cite{wsc273}, BoolQ~\cite{boolq}, PiQA~\cite{piqa}, HellaSwag~\cite{hellaswag}, WinoGrande~\cite{winogrande}, ARC-easy (ARC-e), ARC-challenge (ARC-c)~\cite{arc}, OpenBookQA~\cite{openbookqa}. We report perplexity~(ppl) and accuracy~(acc) on LAMBADA, accuracy normalized by length on HellaSwag, ARC-challenge and OpenbookQA, and acc on the other subtasks. For SuperGLUE benchmark, we report F1 score on CB, Exact-Match~(EM) score on MultiRC, and accuracy on the other subtasks, following the original work. The LM evaluation harness~\cite{gao2021framework} is used to implement all evaluations. 

\textbf{Long Range Arena.}
We evaluate the long sequence modeling capability of our model on LRA task. We use the adamW optimizer and cosine warmup scheduler. We set the head size be 4 for group normalization. In Retrieval, Image, Pathfinder and Path-X we use a bid-model. The hyperparameters for all tasks can be found in \Cref{tab:lra_config}

\textbf{Image Classification.} We evaluate our models on ImageNet~\cite{deng2009imagenet}. The input size of ImageNet is 224 $\times$ 224. Following Deit~\cite{deit}, the batch size is set to 2048 during 300 training epochs with a cosine-decay learning rate whose peak value is $2.4 \times 10^{-3}$. We choose AdamW $(\beta_1=0.9, \beta_2=0.98)$ with 0.05 weight decay as the optimizer. Note that we do not use cutmix or mixup during the training.

\begin{table}[htbp]
\small
\setlength{\tabcolsep}{6pt}
\centering
\caption{\textbf{Performance Comparison on Additional subtasks for Common-Sense Reasoning.} PS: parameter size (billion). T: tokens (billion). $^{\dag}$ means the results reported by \cite{HGRN}. $^\ddag$ indicates testing using open-source checkpoints. For baselines that need to be compared, if they do not have public checkpoints, we train and test them under identical conditions with MetaLA. LAMB: Lambada. HS: HellaSwag. WG: WinoGrande. OBQA: OpenbookQA. MetaLA$_a$: MetaLA with tied embedding trained using 100B tokens. MetaLA$_b$: MetaLA trained with 300B tokens. "AVG" refers to the average result on subtasks other than LAMBADA.}
\label{app_tab:commensense_reasoning_result_1b}
\begin{tabularx}{0.73\textwidth}{l|cc|cc|cc}
\toprule
Models    & PS & T&LAMB ppl & LAMB acc&LOGIQA&WSC273\\
\midrule
Pythia$^\ddag$ & 1.4 & 300&10.94&49.78 &21.35&72.89\\
% HGRN$^\ddag$ & 1& 100& 26.12&32.45&21.81& 61.17\\
HGRN$^\ddag$ & 1& 100& 21.81&36.39&22.43& 58.97\\
% Mamba & 1.4& 300&7.43 &56.70& 20.12&72.53\\
Mamba & 1.4& 100&14.02 &44.44& 22.73&68.50\\
RetNet$^\ddag$ &1.3&100&22.65&36.79& 22.73& 63.74\\
GLA$^\ddag$ &1.3&100&20.05&39.92& 21.81& 63.00\\
MetaLA$_a$ & 1.3& 100&12.84 & 45.99& 21.81&65.93\\
MetaLA$_b$ & 1.4& 300& 10.06 & 50.42 & 21.35& 73.63\\
\bottomrule
\end{tabularx}
% \vspace{10mm}
\begin{tabularx}{0.96\textwidth}{l|cc|cccccccc}
\toprule
Models    & PS & T& BOOLQ & PIQA & HS & WG & ARC-e&ARC-c & OBQA & AVG\\
\midrule
Pythia$^\ddag$ & 1.4 & 300& 63.12 & 70.89 & 51.98 & 56.99 &60.56&  28.41 & 33.20 & 51.04\\
% HGRN$^\ddag$ & 1& 100& 60.03& 69.37& 48.80& 50.43&52.44& 25.51& 31.60 & 46.80\\
HGRN$^\ddag$ & 1& 100& 58.75& 71.00& 48.05& 51.14&55.51& 28.07& 31.80 & 47.30\\
% Mamba & 1.4& 300&63.67 & 74.05& 59.13 & 61.25 &65.45& 32.85 & 36.40 & 53.94\\
Mamba & 1.4& 100&53.27 & 71.44& 48.63 & 53.59 &58.59& 29.01 & 31.80 &48.62\\
RetNet$^\ddag$ &1.3&100& 60.21& 69.53& 48.39& 53.28& 54.17&26.19& 30.80&47.67 \\
GLA$^\ddag$ &1.3&100& 61.04& 70.08& 48.00& 51.93& 54.88&28.33& 31.40& 47.83\\
MetaLA$_a$ & 1.3& 100&55.50& 70.02&47.32 & 55.01&56.90&27.47& 33.00& 48.11\\
MetaLA$_b$ & 1.4& 300& 56.27& 72.25& 53.58& 58.17&61.49& 30.03& 34.60& 51.26 \\
\bottomrule
\end{tabularx}

\end{table}

\begin{table}[ht]
\small
\setlength{\tabcolsep}{1pt}
\centering
\caption{\textbf{Results on MAD tasks.}  All architectures are tested according to the MAD protocol.}
\begin{tabularx}{\textwidth}{lccccccc}
\toprule
Models& Compression & Fuzzy recall& In-context recall & Memorization & Noisy recall & Selective Copy & Avg \\
\midrule
Multihead Hyena & 47.79 & 18.01 & 97.46 & 89.48 & 98.74 & 90.81 & 73.72 \\
GLA & 37.70 & 12.45 & 91.58 & 57.05 & 92.58 & 88.63 & 63.33 \\
Mamba & 43.95 & 9.60 & 87.92 & 89.45 & 90.91 & 81.79 & 67.27 \\
MetaLA & 45.55 & 15.18 & 99.87 & 85.83 & 99.73 & 97.71 & \textbf{73.98} \\
\bottomrule
\end{tabularx}\label{tab:mad_res}
\end{table}

\section{Additional Experiments}\label{app:addition_exp}
\textbf{Additional subtasks for Common-Sense Reasoning.} We extend more subtasks of Commonsense Reasoning for 1B4 MetaLA. Additional experimental results are shown in \cref{app_tab:commensense_reasoning_result_1b}. 
 
\begin{wraptable}[10]{r}{0.5\textwidth}
\vspace{-7mm}
\centering
\small
\setlength{\tabcolsep}{2pt}
\caption{\textbf{Results on MQAR with sequence length 512 and retrieval key-value pairs 80.}} 
\vspace{1mm}
\begin{tabular}{lcc}
\toprule
Models & Model dimension &Acc \\
\midrule
Transformer  & 64 & >99.0 \\
Mamba  & 64 & 0.0 \\
MetaLA  & 64 & 28.5 \\
\midrule
Transformer  & 128 & >99.0 \\
Mamba  & 128 & 0.0 \\
MetaLA  & 128 & 90.4 \\
\bottomrule
\label{tab:add_ar_res}
\end{tabular}
\end{wraptable}

\textbf{More challenging settings for MQAR.} \pyq{We evaluate some models with sequence length 512 and with more retrieval key-value pairs (80, default is 64). The attention baseline benefits from global modeling capabilities, achieving optimal results. The results in \cref{tab:add_ar_res} show that MetaLA outperforms Mamba (does not converge under the same training conditions), and there is still a significant gap compared to transformers.}

\textbf{Results on the synthetic MAD task.} \pyq{we evaluate several models on MAD \cite{mad}, a collection of six synthetic tasks predictive of scaling laws, including recall, memorization, and compression. As shown in \cref{tab:mad_res}, MetaLA achieves the best results across various linear complexity models.}

\textbf{Results on the Needle in a Haystack (NIAH) task.} \pyq{We also present experimental results on the NIAH task following \cite{shen2024scaling}, which is designed to evaluate the in-context retrieval capabilities of LLMs. Retrieval ability in long texts is a significant challenge for linear models, as all current linear models lack good solutions to this problem. Nonetheless, \cref{tab:niah_res} shows that MetaLA has achieved satisfactory results in comparisons among linear models. Compared to Transformer models, this performance is still insufficient. This is precisely the issue we hope to address next, following the unification of linear model forms.}

\textbf{The scalability with respect to model size and training tokens.} \pyq{For preliminary validation, we further evaluate our model ranging in size from 380M to 3B, trained with up to 300B tokens, on the CommonSense Reasoning benchmark. The strong results in \cref{tab:scaling_3b} demonstrate the potential of our model when scaling up the parameter scale and training dataset.}

\begin{figure}
    \centering
    \includegraphics[width=\textwidth]{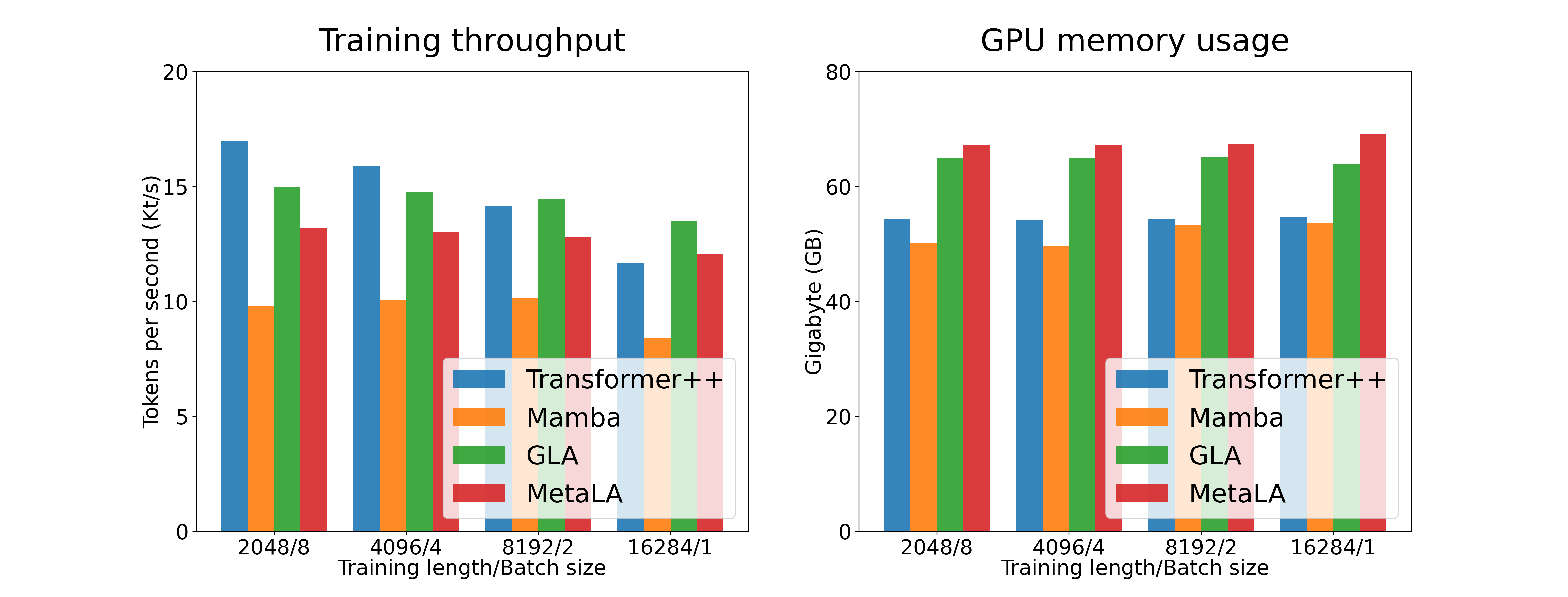}
    \caption{\textbf{Training efficiency evaluations.} The throughput and memory usage on a single A800 GPU of Transformer and various linear models. Transformer++ is implemented using FlashAttention \cite{flashattention} and SwiGLU.}
    \label{fig:training_efficeincy}
\end{figure}

\textbf{Training efficiency evaluations.} \pyq{The comparative results on training throughput and GPU memory usage across various 1.3B-sized models are shown in \cref{fig:training_efficeincy}. The report indicates that: (1) Our model demonstrates good linearity, maintaining processing speed and memory efficiency with increasing sequence length, unlike the Transformer, which experiences a sharp drop in token processing speed as sequence length increases. (2) Our model matches the computational efficiency of linear models like GLA [2] in both latency and memory, and is significantly faster than Mamba [5], which also has linear complexity.}

\begin{table}[ht]
\centering
\small
\setlength{\tabcolsep}{4pt}
\caption{\textbf{Results on the Needle in a Haystack (NIAH) task.} We introduce accuracy metrics across four context scales and three model scales. The middle columns display accuracies below the 4K and 8K thresholds. The rightmost columns detail both the average accuracy and the weighted average accuracy. All models are trained with sequence length 8K.}
\label{tab:niah_res}
\begin{tabularx}{\textwidth}{lccccccccc}
\toprule
Models & PS & Acc@2K & Acc@4K & Acc@8K & Acc@16K & Acc<=4K & Acc<=8K & Avg & Weighted Avg \\
\midrule
LLaMA2 & 0.4 & 100.0 & 97.1 & 97.8 & 0.0 & 99.3 & 99.5 & 56.4 & 52.3 \\
HGRN2 & 0.4 & 8.6 & 6.3 & 1.3 & 0.0 & 17.0 & 9.3 & 4.9 & 4.8 \\
MetaLA & 0.4 & 25.7 & 2.9 & 11.1 & 4.4 & 11.3 & 8.7 & 8.5 & 9.0 \\
\midrule
LLaMA2 & 1.0 & 100.0 & 71.4 & 73.3 & 0.0 & 92.5 & 90.9 & 47.8 & 44.1 \\
HGRN2 & 1.0 & 17.1 & 5.7 & 2.9 & 3.5 & 18.3 & 13.4 & 9.7 & 10.0 \\
MetaLA & 1.0 & 7.9 & 8.3 & 13.7 & 17.8 & 14.3 & 14.9 & 12.0 & 12.6 \\
\midrule
LLaMA2 & 3.0 & 97.1 & 100.0 & 82.9 & 0.6 & 95.4 & 93.9 & 48.8 & 45.1 \\
HGRN2 & 3.0 & 58.4 & 11.4 & 2.9 & 7.3 & 46.4 & 28.9 & 18.0 & 17.9 \\
MetaLA & 3.0 & 48.3 & 7.0 & 4.1 & 18.4 & 34.8 & 22.2 & 19.1 & 19.5 \\
\bottomrule
\end{tabularx}
\end{table}

\begin{table}[ht]
\centering
\small
\caption{\textbf{Scalability tests of MetaLA on the CommonSense Reasoning benchmark.} PS: parameter size (billion). T: tokens (billion). “AVG” refers to the average result of all subtasks.}
\label{tab:scaling_3b}
\begin{tabularx}{\textwidth}{lcccccccccc}
\toprule
Models & PS & T & BOOLQ & PIQA& HS & WG & ARC-E & ARC-C & OBQA & Avg \\
\midrule
LLaMA2 & 0.41 & 300 & 54.04 & 67.19 &38.75& 52.17 & 49.24 & 23.72 & 30.00 & 45.02 \\
Cosformer2 & 0.38 & 300 &57.40&	66.27	&36.65&	50.59&	51.81&	23.72	&29.00&	45.06 \\
MetaLA & 0.38 & 300 & 60.09&	67.79	&38.51&	50.99	&52.19&	25.60	&30.00	&\textbf{46.45} \\
\midrule
LLaMA2 & 1 & 300 & 56.42 & 69.97 & 47.04&	52.72	&57.07&	28.16&	32.60&	49.14 \\
Cosformer2 & 1 & 300 & 44.28 & 70.73 &45.55& 50.51 & 55.22 & 27.30 & 31.00 & 46.37 \\
MetaLA & 1 & 300 & 59.05 & 69.37 & 46.43&54.38 & 57.41 & 26.96 & 33.00 & \textbf{49.52} \\
\midrule
LLaMA2 & 3 & 300 & 61.31 & 73.18 & 57.88&59.59 & 63.93 & 31.40 & 34.00 & 54.47 \\
Cosformer2 & 3 & 300 & 50.92&74.27&57.38&	57.30&	63.22	&31.40	&35.20	&52.81 \\
MetaLA & 3 & 300 & 62.84 & 74.16 &59.25	& 58.80 & 64.52 & 33.28 & 35.80 & \textbf{55.52} \\
\bottomrule
\end{tabularx}
\end{table}

\clearpage

\section*{NeurIPS Paper Checklist}

\begin{enumerate}

\item {\bf Claims}
    \item[] Question: Do the main claims made in the abstract and introduction accurately reflect the paper's contributions and scope?
    \item[] Answer: \answerYes{} % Replace by \answerYes{}, \answerNo{}, or \answerNA{}.
    \item[] Justification: In the abstract, we summarized the theory we proposed, our solution MetaLA, and the results obtained from our experiments.
    \item[] Guidelines:
    \begin{itemize}
        \item The answer NA means that the abstract and introduction do not include the claims made in the paper.
        \item The abstract and/or introduction should clearly state the claims made, including the contributions made in the paper and important assumptions and limitations. A No or NA answer to this question will not be perceived well by the reviewers. 
        \item The claims made should match theoretical and experimental results, and reflect how much the results can be expected to generalize to other settings. 
        \item It is fine to include aspirational goals as motivation as long as it is clear that these goals are not attained by the paper. 
    \end{itemize}

\item {\bf Limitations}
    \item[] Question: Does the paper discuss the limitations of the work performed by the authors?
    \item[] Answer: \answerYes{} % Replace by \answerYes{}, \answerNo{}, or \answerNA{}.
    \item[] Justification: In \cref{sec:conclusion}, we discussed potential concerns about the approximation of softmax attention and the reasons that may cause these concerns.
    \item[] Guidelines:
    \begin{itemize}
        \item The answer NA means that the paper has no limitation while the answer No means that the paper has limitations, but those are not discussed in the paper. 
        \item The authors are encouraged to create a separate "Limitations" section in their paper.
        \item The paper should point out any strong assumptions and how robust the results are to violations of these assumptions (e.g., independence assumptions, noiseless settings, model well-specification, asymptotic approximations only holding locally). The authors should reflect on how these assumptions might be violated in practice and what the implications would be.
        \item The authors should reflect on the scope of the claims made, e.g., if the approach was only tested on a few datasets or with a few runs. In general, empirical results often depend on implicit assumptions, which should be articulated.
        \item The authors should reflect on the factors that influence the performance of the approach. For example, a facial recognition algorithm may perform poorly when image resolution is low or images are taken in low lighting. Or a speech-to-text system might not be used reliably to provide closed captions for online lectures because it fails to handle technical jargon.
        \item The authors should discuss the computational efficiency of the proposed algorithms and how they scale with dataset size.
        \item If applicable, the authors should discuss possible limitations of their approach to address problems of privacy and fairness.
        \item While the authors might fear that complete honesty about limitations might be used by reviewers as grounds for rejection, a worse outcome might be that reviewers discover limitations that aren't acknowledged in the paper. The authors should use their best judgment and recognize that individual actions in favor of transparency play an important role in developing norms that preserve the integrity of the community. Reviewers will be specifically instructed to not penalize honesty concerning limitations.
    \end{itemize}

\item {\bf Theory Assumptions and Proofs}
    \item[] Question: For each theoretical result, does the paper provide the full set of assumptions and a complete (and correct) proof?
    \item[] Answer: \answerYes{} % Replace by \answerYes{}, \answerNo{}, or \answerNA{}.
    \item[] Justification: In \cref{sec:thoery} and \cref{appendix:detailed_proof}, we provided the definitions and detailed proofs of our theory.
    \item[] Guidelines:
    \begin{itemize}
        \item The answer NA means that the paper does not include theoretical results. 
        \item All the theorems, formulas, and proofs in the paper should be numbered and cross-referenced.
        \item All assumptions should be clearly stated or referenced in the statement of any theorems.
        \item The proofs can either appear in the main paper or the supplemental material, but if they appear in the supplemental material, the authors are encouraged to provide a short proof sketch to provide intuition. 
        \item Inversely, any informal proof provided in the core of the paper should be complemented by formal proofs provided in appendix or supplemental material.
        \item Theorems and Lemmas that the proof relies upon should be properly referenced. 
    \end{itemize}

    \item {\bf Experimental Result Reproducibility}
    \item[] Question: Does the paper fully disclose all the information needed to reproduce the main experimental results of the paper to the extent that it affects the main claims and/or conclusions of the paper (regardless of whether the code and data are provided or not)?
    \item[] Answer: \answerYes{} % Replace by \answerYes{}, \answerNo{}, or \answerNA{}.
    \item[] Justification: In \cref{sec:experiments}, we showcased our experimental setup and the results, and in \cref{app:exp_detail}, we further elaborated on the details of all experimental setups.
    \item[] Guidelines:
    \begin{itemize}
        \item The answer NA means that the paper does not include experiments.
        \item If the paper includes experiments, a No answer to this question will not be perceived well by the reviewers: Making the paper reproducible is important, regardless of whether the code and data are provided or not.
        \item If the contribution is a dataset and/or model, the authors should describe the steps taken to make their results reproducible or verifiable. 
        \item Depending on the contribution, reproducibility can be accomplished in various ways. For example, if the contribution is a novel architecture, describing the architecture fully might suffice, or if the contribution is a specific model and empirical evaluation, it may be necessary to either make it possible for others to replicate the model with the same dataset, or provide access to the model. In general. releasing code and data is often one good way to accomplish this, but reproducibility can also be provided via detailed instructions for how to replicate the results, access to a hosted model (e.g., in the case of a large language model), releasing of a model checkpoint, or other means that are appropriate to the research performed.
        \item While NeurIPS does not require releasing code, the conference does require all submissions to provide some reasonable avenue for reproducibility, which may depend on the nature of the contribution. For example
        \begin{enumerate}
            \item If the contribution is primarily a new algorithm, the paper should make it clear how to reproduce that algorithm.
            \item If the contribution is primarily a new model architecture, the paper should describe the architecture clearly and fully.
            \item If the contribution is a new model (e.g., a large language model), then there should either be a way to access this model for reproducing the results or a way to reproduce the model (e.g., with an open-source dataset or instructions for how to construct the dataset).
            \item We recognize that reproducibility may be tricky in some cases, in which case authors are welcome to describe the particular way they provide for reproducibility. In the case of closed-source models, it may be that access to the model is limited in some way (e.g., to registered users), but it should be possible for other researchers to have some path to reproducing or verifying the results.
        \end{enumerate}
    \end{itemize}

\item {\bf Open access to data and code}
    \item[] Question: Does the paper provide open access to the data and code, with sufficient instructions to faithfully reproduce the main experimental results, as described in supplemental material?
    \item[] Answer: \answerNo{} % Replace by \answerYes{}, \answerNo{}, or \answerNA{}.
    \item[] Justification: We have not open-sourced the code, but we have clearly outlined the details of our experiments in \cref{sec:experiments} and \cref{app:exp_detail} to ensure the reproducibility of the results.
    \item[] Guidelines:
    \begin{itemize}
        \item The answer NA means that paper does not include experiments requiring code.
        \item Please see the NeurIPS code and data submission guidelines (\url{https://nips.cc/public/guides/CodeSubmissionPolicy}) for more details.
        \item While we encourage the release of code and data, we understand that this might not be possible, so “No” is an acceptable answer. Papers cannot be rejected simply for not including code, unless this is central to the contribution (e.g., for a new open-source benchmark).
        \item The instructions should contain the exact command and environment needed to run to reproduce the results. See the NeurIPS code and data submission guidelines (\url{https://nips.cc/public/guides/CodeSubmissionPolicy}) for more details.
        \item The authors should provide instructions on data access and preparation, including how to access the raw data, preprocessed data, intermediate data, and generated data, etc.
        \item The authors should provide scripts to reproduce all experimental results for the new proposed method and baselines. If only a subset of experiments are reproducible, they should state which ones are omitted from the script and why.
        \item At submission time, to preserve anonymity, the authors should release anonymized versions (if applicable).
        \item Providing as much information as possible in supplemental material (appended to the paper) is recommended, but including URLs to data and code is permitted.
    \end{itemize}

\item {\bf Experimental Setting/Details}
    \item[] Question: Does the paper specify all the training and test details (e.g., data splits, hyperparameters, how they were chosen, type of optimizer, etc.) necessary to understand the results?
    \item[] Answer: \answerYes{} % Replace by \answerYes{}, \answerNo{}, or \answerNA{}.
    \item[] Justification: We presented the detailed settings for training and test in \cref{sec:experiments} and \cref{app:exp_detail}.
    \item[] Guidelines:
    \begin{itemize}
        \item The answer NA means that the paper does not include experiments.
        \item The experimental setting should be presented in the core of the paper to a level of detail that is necessary to appreciate the results and make sense of them.
        \item The full details can be provided either with the code, in appendix, or as supplemental material.
    \end{itemize}

\item {\bf Experiment Statistical Significance}
    \item[] Question: Does the paper report error bars suitably and correctly defined or other appropriate information about the statistical significance of the experiments?
    \item[] Answer: \answerNo{} % Replace by \answerYes{}, \answerNo{}, or \answerNA{}.
    \item[] Justification: We did not conduct experiments that required reporting significance.
    \item[] Guidelines:
    \begin{itemize}
        \item The answer NA means that the paper does not include experiments.
        \item The authors should answer "Yes" if the results are accompanied by error bars, confidence intervals, or statistical significance tests, at least for the experiments that support the main claims of the paper.
        \item The factors of variability that the error bars are capturing should be clearly stated (for example, train/test split, initialization, random drawing of some parameter, or overall run with given experimental conditions).
        \item The method for calculating the error bars should be explained (closed form formula, call to a library function, bootstrap, etc.)
        \item The assumptions made should be given (e.g., Normally distributed errors).
        \item It should be clear whether the error bar is the standard deviation or the standard error of the mean.
        \item It is OK to report 1-sigma error bars, but one should state it. The authors should preferably report a 2-sigma error bar than state that they have a 96\% CI, if the hypothesis of Normality of errors is not verified.
        \item For asymmetric distributions, the authors should be careful not to show in tables or figures symmetric error bars that would yield results that are out of range (e.g. negative error rates).
        \item If error bars are reported in tables or plots, The authors should explain in the text how they were calculated and reference the corresponding figures or tables in the text.
    \end{itemize}

\item {\bf Experiments Compute Resources}
    \item[] Question: For each experiment, does the paper provide sufficient information on the computer resources (type of compute workers, memory, time of execution) needed to reproduce the experiments?
    \item[] Answer: \answerYes{} % Replace by \answerYes{}, \answerNo{}, or \answerNA{}.
    \item[] Justification: In \cref{sec:experiments} and \cref{app:exp_detail}, we provided details about the computer resources used for the experiments.
    \item[] Guidelines:
    \begin{itemize}
        \item The answer NA means that the paper does not include experiments.
        \item The paper should indicate the type of compute workers CPU or GPU, internal cluster, or cloud provider, including relevant memory and storage.
        \item The paper should provide the amount of compute required for each of the individual experimental runs as well as estimate the total compute. 
        \item The paper should disclose whether the full research project required more compute than the experiments reported in the paper (e.g., preliminary or failed experiments that didn't make it into the paper). 
    \end{itemize}
    
\item {\bf Code Of Ethics}
    \item[] Question: Does the research conducted in the paper conform, in every respect, with the NeurIPS Code of Ethics \url{https://neurips.cc/public/EthicsGuidelines}?
    \item[] Answer: \answerYes{} % Replace by \answerYes{}, \answerNo{}, or \answerNA{}.
    \item[] Justification: The research conducted in the paper complies with the NeurIPS ethical guidelines in all aspects.
    \item[] Guidelines:
    \begin{itemize}
        \item The answer NA means that the authors have not reviewed the NeurIPS Code of Ethics.
        \item If the authors answer No, they should explain the special circumstances that require a deviation from the Code of Ethics.
        \item The authors should make sure to preserve anonymity (e.g., if there is a special consideration due to laws or regulations in their jurisdiction).
    \end{itemize}

\item {\bf Broader Impacts}
    \item[] Question: Does the paper discuss both potential positive societal impacts and negative societal impacts of the work performed?
    \item[] Answer: \answerNA{} % Replace by \answerYes{}, \answerNo{}, or \answerNA{}.
    \item[] Justification: This paper is foundational research and not tied to particular applications.
    \item[] Guidelines:
    \begin{itemize}
        \item The answer NA means that there is no societal impact of the work performed.
        \item If the authors answer NA or No, they should explain why their work has no societal impact or why the paper does not address societal impact.
        \item Examples of negative societal impacts include potential malicious or unintended uses (e.g., disinformation, generating fake profiles, surveillance), fairness considerations (e.g., deployment of technologies that could make decisions that unfairly impact specific groups), privacy considerations, and security considerations.
        \item The conference expects that many papers will be foundational research and not tied to particular applications, let alone deployments. However, if there is a direct path to any negative applications, the authors should point it out. For example, it is legitimate to point out that an improvement in the quality of generative models could be used to generate deepfakes for disinformation. On the other hand, it is not needed to point out that a generic algorithm for optimizing neural networks could enable people to train models that generate Deepfakes faster.
        \item The authors should consider possible harms that could arise when the technology is being used as intended and functioning correctly, harms that could arise when the technology is being used as intended but gives incorrect results, and harms following from (intentional or unintentional) misuse of the technology.
        \item If there are negative societal impacts, the authors could also discuss possible mitigation strategies (e.g., gated release of models, providing defenses in addition to attacks, mechanisms for monitoring misuse, mechanisms to monitor how a system learns from feedback over time, improving the efficiency and accessibility of ML).
    \end{itemize}
    
\item {\bf Safeguards}
    \item[] Question: Does the paper describe safeguards that have been put in place for responsible release of data or models that have a high risk for misuse (e.g., pretrained language models, image generators, or scraped datasets)?
    \item[] Answer: \answerNo{} % Replace by \answerYes{}, \answerNo{}, or \answerNA{}.
    \item[] Justification: \textbf{Impact Statements.} This paper presents work whose goal is to advance the field of Language Modeling. There are many potential societal consequences of our work, none which we feel must be specifically highlighted here.
    \item[] Guidelines:
    \begin{itemize}
        \item The answer NA means that the paper poses no such risks.
        \item Released models that have a high risk for misuse or dual-use should be released with necessary safeguards to allow for controlled use of the model, for example by requiring that users adhere to usage guidelines or restrictions to access the model or implementing safety filters. 
        \item Datasets that have been scraped from the Internet could pose safety risks. The authors should describe how they avoided releasing unsafe images.
        \item We recognize that providing effective safeguards is challenging, and many papers do not require this, but we encourage authors to take this into account and make a best faith effort.
    \end{itemize}

\item {\bf Licenses for existing assets}
    \item[] Question: Are the creators or original owners of assets (e.g., code, data, models), used in the paper, properly credited and are the license and terms of use explicitly mentioned and properly respected?
    \item[] Answer: \answerYes{} % Replace by \answerYes{}, \answerNo{}, or \answerNA{}.
    \item[] Justification: The data, code, and models used in this paper have all been properly cited.
    \item[] Guidelines:
    \begin{itemize}
        \item The answer NA means that the paper does not use existing assets.
        \item The authors should cite the original paper that produced the code package or dataset.
        \item The authors should state which version of the asset is used and, if possible, include a URL.
        \item The name of the license (e.g., CC-BY 4.0) should be included for each asset.
        \item For scraped data from a particular source (e.g., website), the copyright and terms of service of that source should be provided.
        \item If assets are released, the license, copyright information, and terms of use in the package should be provided. For popular datasets, \url{paperswithcode.com/datasets} has curated licenses for some datasets. Their licensing guide can help determine the license of a dataset.
        \item For existing datasets that are re-packaged, both the original license and the license of the derived asset (if it has changed) should be provided.
        \item If this information is not available online, the authors are encouraged to reach out to the asset's creators.
    \end{itemize}

\item {\bf New Assets}
    \item[] Question: Are new assets introduced in the paper well documented and is the documentation provided alongside the assets?
    \item[] Answer: \answerYes{} % Replace by \answerYes{}, \answerNo{}, or \answerNA{}.
    \item[] Justification: In \cref{sec:metala}, we provided details of the model we proposed.
    \item[] Guidelines:
    \begin{itemize}
        \item The answer NA means that the paper does not release new assets.
        \item Researchers should communicate the details of the dataset/code/model as part of their submissions via structured templates. This includes details about training, license, limitations, etc. 
        \item The paper should discuss whether and how consent was obtained from people whose asset is used.
        \item At submission time, remember to anonymize your assets (if applicable). You can either create an anonymized URL or include an anonymized zip file.
    \end{itemize}

\item {\bf Crowdsourcing and Research with Human Subjects}
    \item[] Question: For crowdsourcing experiments and research with human subjects, does the paper include the full text of instructions given to participants and screenshots, if applicable, as well as details about compensation (if any)? 
    \item[] Answer: \answerNA{} % Replace by \answerYes{}, \answerNo{}, or \answerNA{}.
    \item[] Justification: the paper does not involve crowdsourcing nor research with human subjects.
    \item[] Guidelines:
    \begin{itemize}
        \item The answer NA means that the paper does not involve crowdsourcing nor research with human subjects.
        \item Including this information in the supplemental material is fine, but if the main contribution of the paper involves human subjects, then as much detail as possible should be included in the main paper. 
        \item According to the NeurIPS Code of Ethics, workers involved in data collection, curation, or other labor should be paid at least the minimum wage in the country of the data collector. 
    \end{itemize}

\item {\bf Institutional Review Board (IRB) Approvals or Equivalent for Research with Human Subjects}
    \item[] Question: Does the paper describe potential risks incurred by study participants, whether such risks were disclosed to the subjects, and whether Institutional Review Board (IRB) approvals (or an equivalent approval/review based on the requirements of your country or institution) were obtained?
    \item[] Answer: \answerNA{} % Replace by \answerYes{}, \answerNo{}, or \answerNA{}.
    \item[] Justification: the paper does not involve crowdsourcing nor research with human subjects.
    \item[] Guidelines:
    \begin{itemize}
        \item The answer NA means that the paper does not involve crowdsourcing nor research with human subjects.
        \item Depending on the country in which research is conducted, IRB approval (or equivalent) may be required for any human subjects research. If you obtained IRB approval, you should clearly state this in the paper. 
        \item We recognize that the procedures for this may vary significantly between institutions and locations, and we expect authors to adhere to the NeurIPS Code of Ethics and the guidelines for their institution. 
        \item For initial submissions, do not include any information that would break anonymity (if applicable), such as the institution conducting the review.
    \end{itemize}

\end{enumerate}

\end{document}